\theoremstyle{plain}
\newtheorem{theorem}{Theorem}[section]
\newtheorem{lemma}[theorem]{Lemma}
\theoremstyle{definition}
\newtheorem{definition}[theorem]{Definition}
\theoremstyle{remark}
\newtheorem{remark}[theorem]{Remark}
\pgfplotsset{compat=1.18}
\newcommand{\real}{\mathbb{R}}
\newcommand \defn {\mathrel{\triangleq}}
\newcommand \dd {\,\mathrm{d}}
\newcommand \expect {\mathop{\mbox{\ensuremath{\mathbb{E}}}}\nolimits}
\newcommand \Var {\mathop{\mbox{\ensuremath{\mathbb{V}}}}\nolimits}
\newcommand \ind[1] {\mathds{1}\left\{#1\right\}}
\newcommand \TV[2] {\mathrm{TV}\left( #1 ~\middle\|~ #2\right)}
\newtheorem*{rep@theorem}{\rep@title}
\newcommand{\newreptheorem}[2]{%
	\newenvironment{rep#1}[1]{%
		\def\rep@title{\textbf{#2} \ref{##1}}%
		\begin{rep@theorem}}%
		{\end{rep@theorem}}}
\begin{document}

%

%
\runningauthor{Achraf Azize, Debabrota Basu}
\twocolumn[

\runningtitle{Quantifying the Target-dependent Membership Leakage}

\aistatstitle{Some Targets Are Harder to Identify than Others:\\ Quantifying the Target-dependent Membership Leakage}


\aistatsauthor{Achraf Azize\footnotemark \And Debabrota Basu}
\aistatsaddress{FairPlay Joint Team \\ CREST, ENSAE Paris \And Univ. Lille, Inria\\ CNRS, Centrale Lille\\ UMR 9189-CRIStAL}]

\footnotetext{This work is done during PhD at the Inria Centre at University of Lille.}


\begin{abstract}
In a Membership Inference (MI) game, an attacker tries to infer whether a target point was included or not in the input of an algorithm. Existing works show that some target points are easier to identify, while others are harder. This paper explains the target-dependent hardness of membership attacks by studying the powers of the optimal attacks in a \textit{fixed-target} MI game. 
We characterise the optimal advantage and trade-off functions of attacks against the empirical mean in terms of the Mahalanobis distance between the target point and the data-generating distribution. We further derive the impacts of two privacy defences, i.e. adding Gaussian noise and sub-sampling, and that of target misspecification on optimal attacks. As by-products of our novel analysis of the Likelihood Ratio (LR) test, we provide a new covariance attack which generalises and improves the scalar product attack. Also, we propose a new optimal canary-choosing strategy for auditing privacy in the white-box federated learning setting. Our experiments validate that the Mahalanobis score explains the hardness of \textit{fixed-target} MI games.
\end{abstract}

\section{INTRODUCTION}\label{sec:intro}
A growing body of works on privacy attacks has shown that leakage of sensitive information through ML models is a common issue~\citep{shokri2017membership,yeom2018privacy}. 
For example, an attacker can violate the privacy of users involved in an input dataset by inferring whether a particular target data point of a user was used in the input dataset.
These attacks are called Membership Inference (MI) attacks~\citep{homer2008resolving,shokri2017membership}, and are identified as a privacy threat for and confidentiality violation of the users' data by ICO (UK) and NIST (USA)~\citep{murakonda2020ml,privacytesttensorflow}. 

The study of statistical efficiency and design of MI attacks has begun with the summary statistics on genomic data~\citep{homer2008resolving,sankararaman2009genomic,dwork2015robust} under the name of \textit{tracing attacks}~\citep{dwork2017exposed}. 
\citet{homer2008resolving} and \citet{sankararaman2009genomic} studied the first attacks to detect an individual in the exact empirical mean statistic, computed on a dataset generated by Bernoulli distributions.
They both propose and analyse the \textit{Likelihood Ratio (LR) tests} and assume access to the exact statistics and a pool of \textit{reference points}.  \citet{dwork2015robust} assumes access to only a noisy statistic and one reference sample, and develops a \textit{scalar product attack} to understand the correlation of a target point with the marginals of noisy statistics. 
\textit{However, these tracing attacks~\citep{homer2008resolving,sankararaman2009genomic,dwork2015robust} are studied in a threat model where the target point attacked is sampled randomly, either from the input dataset, or from a data generating distribution}. This means that the metrics of the attack under analysis, i.e. the \textit{advantage} of an attacker or \textit{trade-off functions between Type-I/Type-II errors}, are `averaged' over the target point's sampling. This obfuscates the target-dependent hardness of tracing attacks, which is important to understand due to the worst-case nature of privacy~\citep{diffprivorg_blog}. Also, prior works~\citep{carlini2022privacy, ye2022enhanced} empirically demonstrated that some target points could be easier to identify than others.
Hence, we ask the question
\begin{center}
\textit{Why are some points statistically harder to identify than others, and how can we quantify this hardness?}
\end{center}

Concurrent with the study of attacks, researchers have developed defence mechanisms to preserve the privacy of the users contributing to the input of an algorithm, and Differential Privacy (DP) has emerged as the gold standard~\citep{dpbook}.
Though DP promises to bound the worst-case privacy leakage on any attack, it is not always evident how these guarantees bound the accuracy of specific privacy attacks, such as MI attacks~\citep{zhang2020privacy,humphries2023investigating}. A DP algorithm comes with a mathematical proof yielding an upper bound on the privacy parameters that control the maximum leakage. 
On the other hand, a \textit{privacy audit} tries to empirically estimate the privacy parameters, by providing lower bounds on the parameters. Typically, a privacy audit runs an MI attack, and translates the Type-I/Type-II errors of the attack into a lower bound on the privacy budget. These algorithms apply different heuristics to find the most leaking target points (aka \textit{canaries}) that minimise Type-I/Type-II errors to estimate the privacy parameters tightly~\citep{maddock2022canife,nasr2023tight}. Thus, understanding the target-dependent hardness of MI attacks can lead to optimal canary-choosing strategies. We ask:
\begin{center}
	\textit{Can we quantify the target-dependent effect of privacy-preserving mechanisms?}\\ \textit{How to design an optimal canary-choosing strategy?}
\end{center}

\textbf{Our contributions} address these queries.\\
1. \textit{Defining the target-dependent leakage.} To understand the target-dependent hardness of MI games, we instantiate a \textit{fixed-target MI game} (Algorithm~\ref{alg:fix_game},~\cite{ye2022enhanced}). We define the leakage of a target point as
the optimal advantage of its corresponding fixed-target MI game. We characterise the target-dependent leakage using a Total Variation distance (Equation~\eqref{eq:adv_tv}).

2. \textit{Explaining the target-dependent leakage for the mean.} We investigate the fixed-target MI game for the empirical mean. We quantify the exact optimal advantage (Equation~\eqref{eq:leak_emp}) and trade-off function (Equation~\eqref{eq:pow_emp}) of the LR attack in this setting. This shows that the target-dependent hardness of MI games depends on the Mahalanobis distance between the target point $z^\star$ and the true data-generating distribution (Table~\ref{tab:1}). 

3. \textit{Tight quantification of the effects of noise addition, sub-sampling, and misspecified targets on leakage.} We further study the impact of privacy-preserving mechanisms, such as adding Gaussian noise and sub-sampling, on the target-dependent leakage. 
As shown in Table~\ref{tab:1}, both of them reduce the leakage scores, and thus, the powers of the optimal attacks. We also quantify how \textit{target misspecification} affects the leakage.

4. \textit{A new covariance attack and optimal canary-choosing strategy.} We analyse the LR score for the empirical mean asymptotically. Our novel proof technique combines an Edgeworth expansion with a Lindeberg-Feller central limit theorem to show that the \textit{LR score is asymptotically a scalar product attack, corrected by the inverse of the covariance matrix} (Equation~\eqref{eq:asymp_lr_real}). This enables us with a novel attack score that improves the scalar product by correcting it for the geometry of the data. We use this ``covariance score'' to propose a novel white-box attack (Algo.~\ref{alg:cov_att}) that experimentally outperforms the scalar product attack. We also use the insights from the target-dependent leakage to propose a new white-box 
 \textit{optimal} canary-choosing strategy (Algo.~\ref{alg:cana_maha}), based on an estimated Mahalanobis distance.

\begin{table}[t!]
\caption{Target-dependent leakage score}\label{tab:1}
\begin{center}
\resizebox{0.48\textwidth}{!}{
\begin{tabular}{cc}
\toprule
\textbf{SETTING} & \textbf{LEAKAGE SCORE} \\
\midrule
Empirical mean &  $\frac{1}{n} \| z^\star - \mu \|_{C_\sigma^{-1}}^2$ \\
Gaussian Noise $(\gamma > 0)$ &  $\frac{1}{n} \| z^\star - \mu \|_{(C_\sigma + C_\gamma)^{-1}}^2$ \\
Sub-sampling $(\rho < 1)$ &  $\frac{\rho}{n} \| z^\star - \mu \|_{C_\sigma^{-1}}^2$ \\
Similar point &  $\frac{1}{n} \left(z^\star_{\mathrm{targ}} - \mu\right)^T C_\sigma^{-1}\left(z^\star_{\mathrm{true}} - \mu \right)$ \\
\bottomrule
\end{tabular}}
\end{center}
\end{table}









\section{MEMBERSHIP INFERENCE GAMES}\label{sec:mig}
First, we introduce the fixed-target Membership Inference (MI) game. Then, we discuss different performance metrics to assess the power of an adversary. Finally, we characterise the optimal performances of adversaries using the Neyman-Pearson lemma.

\subsection{Fixed-target MI Game}
Let $\mathcal{M}$ be a randomised mechanism that takes as input a dataset $D$ of $n$ points belonging to $\mathcal{Z}$ and outputs $o \in \mathcal{O}$. In a Membership Inference (MI) game, an adversary attempts to infer whether a given target point $z^\star$ was included in the input dataset $D$ of $\mathcal{M}$, given only access to an output $o \sim \mathcal{M}(D)$. A fixed-target MI game is presented in Algo~\ref{alg:fix_game}. It is a game between two entities: the Crafter (Algo~\ref{alg:crafter}) and the adversary $\mathcal{A}_{z^\star}$. The MI game runs in multiple rounds. At each round $t$, the crafter samples a pair $(o_t, b_t)$, where $o_t$ is an output of the mechanism and $b_t$ is the secret binary membership of $z^\star$. The adversary $\mathcal{A}_{z^\star}$ takes as input only $o_t$ and outputs $\hat{b}_t$ trying to reconstruct $b_t$. 

The specificity of the \textit{fixed-target} MI game is that the target $z^\star$ is fixed throughout the game. Thus, the performance metrics of the attacker, i.e. the advantage and trade-off functions, are target-dependent. In contrast, in the MI game originally proposed in Experiment 1 of~\cite{yeom2018privacy}, the target $z^\star$ is sampled randomly at each step of the game, i.e. Step 3 in Experiment 1 of~\cite{yeom2018privacy}. In this case, the performance metrics of the attacker are averaged over the sampling of the target points and, thus, obfuscate the dependence of the leakage on each target point. To study the target-dependent hardness of MI games, we use this fixed-target formulation of Algo~\ref{alg:fix_game}, which has also been proposed in Definition 3.3 of~\cite{ye2022enhanced}.

A fixed-target MI game can also be seen as a hypothesis test. Here, the adversary tries to test the hypothesis ``$H_0$: \textit{The output $o$ observed was generated from a dataset sampled i.i.d. from $\mathcal{D}$}'', i.e. $b =0$, versus ``$H_1$: \textit{The target point $z^\star$ was included in the input dataset producing the output $o$}'', i.e. $b =1$. We denote by $p_\mathrm{out}(o \mid z^\star)$ and $p_\mathrm{in}(o \mid z^\star)$ the distributions of the output $o$ under $H_0$ and $H_1$ respectively.

\begin{algorithm}[t!]
	\caption{The Crafter}\label{alg:crafter}
	\begin{algorithmic}[1]
		\State {\bfseries Input:} Mechanism $\mathcal{M}$, Data distribution $\mathcal{D}$,  $\#$samples $n$, Target $z^\star$
		\State {\bfseries Output:} $(o, b)$, where $o \in \mathcal{O}$ and $b \in \{0,1\}$
		\State Build a dataset $D \sim \bigotimes\nolimits_{i = 1}^n \mathcal{D}$
		\State Sample $b \sim \mathrm{Bernoulli}\left(\frac{1}{2} \right)$
		\If{$b = 1$} 
		\State Sample $j \sim \mathcal{U}[n]$
		\State $D \leftarrow \mathrm{Replace}(D, j, z^\star)$  \Comment{Put $z^\star$ at position $j$}
		\EndIf
		\State Let $o \sim \mathcal{M}(D)$ 
		\State Return $(o, b)$
	\end{algorithmic}
\end{algorithm}

\subsection{Performance Metrics of the Attack}~\vspace{-1.7em}

An adversary $\mathcal{A}_{z^\star}$ is a (possibly randomised) algorithm that takes as input $o$ the output of the mechanism $\mathcal{M}$, and generates a guess $\hat{b} \sim \mathcal{A}_{z^\star}(o)$ trying to infer $b = \ind{z^\star \in D}$. The performance of $\mathcal{A}_{z^\star}$ can be assessed either with aggregated metrics like the advantage, or with test-based metrics like a trade-off function.

The \textit{accuracy} of $\mathcal{A}_{z^\star}$ is defined as $\mathrm{Acc}_n(\mathcal{A}_{z^\star}) \defn \mathrm{Pr} [\mathcal{A}_{z^\star}(o) = b]$,
where the probability is over any randomness in both the crafter and the adversary. The \textit{advantage} of an adversary is the centred accuracy: $\mathrm{Adv}_n(\mathcal{A}_{z^\star}) \defn 2 \mathrm{Acc}_n(\mathcal{A}_{z^\star}) - 1$. We can also define two errors from the hypothesis testing formulation. \textit{The Type-I error}, aka False Positive Rate, is $\alpha_n(\mathcal{A}_{z^\star}) \defn \mathrm{Pr} \left[\mathcal{A}_{z^\star}(o) = 1 \mid b = 0\right]$. \textit{The Type-II error}, aka the False Negative Rate, is $\beta_n(\mathcal{A}_{z^\star}) \defn \mathrm{Pr} \left[\mathcal{A}_{z^\star}(o) = 0 \mid b = 1\right]$. The \textit{power} of the test is $1 - \beta_n(\mathcal{A}_{z^\star})$.
In MI games, an adversary can threshold over a score function $s$ to conduct the MI games, i.e. for $\mathcal{A}_{s, \tau, z^\star}(o) \defn \ind{s(o; z^\star) > \tau}$ where $s$ is a score function and $\tau$ is a threshold. We want to design score functions that maximise the power under a fixed significance level $\alpha$, i.e. $\mathrm{Pow}_n(s, \alpha, z^\star) \defn \max_{\tau \in T_\alpha} 1 - \beta_n(\mathcal{A}_{s, \tau, z^\star})$
where $T_\alpha \defn \{\tau \in \real: \alpha_n(\mathcal{A}_{s, \tau, z^\star}) \leq \alpha\}$. $\mathrm{Pow}_n(s, \alpha, z^\star)$ is also called a \textit{trade-off function}.

\subsection{Optimal Adversaries \& Defining Leakage} 
Given two data generating distributions $p_0$ and $p_1$ under hypotheses $H_0$ and $H_1$ respectively, no test can achieve better power than the Likelihood Ratio (LR) test~\citep{neyman1933ix}. By considering the hypothesis testing formulation of the fixed-target MI game, the \textit{log-Likelihood Ratio} (LR) score is 
\begin{equation}
    \ell_n(o; z^\star) \triangleq \log \left ( \frac{p^\mathrm{in}_n(o \mid z^\star)}{p^\mathrm{out}_n(o \mid z^\star)} \right).
\end{equation}
The LR-based adversary uses a threshold $\tau$ on the LR score, i.e. $\mathcal{A}_{\ell, z^\star, \tau}(o) \defn \ind{\ell_n(o; z^\star) > \tau}$. We denote by $\mathcal{A}_{\mathrm{Bayes}, z^\star} \defn \mathcal{A}_{\ell, z^\star, 0}$ the LR attacker with threshold $\tau = 0$. We provide Theorem~\ref{thm:opt_adv} to characterise optimal adversaries under aggregated and test-based metrics.

\begin{algorithm}[t!]
   \caption{Fixed-target MI Game}\label{alg:fix_game}
\begin{algorithmic}[1]
   \State {\bfseries Input:} Mechanism $\mathcal{M}$, Data distribution $\mathcal{D}$,  $\#$samples $n$, Target $z^\star$, Adversary $\mathcal{A}_{z^\star}$, Rounds~$T$
   \State {\bfseries Output:} A list $L \in \{0, 1\}^T$, where $L_t = 1$ if the adversary succeeds at step $t$.
   \State Initialise a empty list $L$ of length $T$
   \For{t = 1, \dots, T}
   \State Sample $(o_t, b_t) \sim$ Crafter($\mathcal{M}$, $\mathcal{D}$, n, $z^\star$)
   \State Sample $\hat{b}_t \sim \mathcal{A}_{z^\star}(o_t)$
   \State Set $L_t \leftarrow \ind{b_t = \hat{b}_t}$
   \EndFor
   \State Return $L$
\end{algorithmic}
\end{algorithm}

\begin{theorem}[Characterising Optimal Adversaries]\label{thm:opt_adv}{\ }
    \begin{itemize}
    \item[(a)] \textit{Optimal power:} $\forall \alpha \in [0,1], \forall$ score $s$, $\forall$ target $z^\star$, $\mathrm{Pow}_n(\ell_n, \alpha, z^\star)  \geq \mathrm{Pow}_n(s, \alpha, z^\star)$.

    \item[(b)] \textit{Largest advantage:} $\forall$ target $z^\star$, $\forall$ adversary $\mathcal{A}_{z^\star}$, $\mathrm{Adv}_n(\mathcal{A}_{\mathrm{Bayes}, z^\star}) \geq \mathrm{Adv}_n(\mathcal{A}_{z^\star})$.

    \item [(c)] \textit{Optimal advantage as TV distance:}
\end{itemize}~\vspace{-2.5em}
    \begin{equation}\label{eq:adv_tv}
        \mathrm{Adv}_n(\mathcal{A}_{\mathrm{Bayes}, z^\star}) = \TV{p^\mathrm{out}_n(. \mid z^\star)}{p^\mathrm{in}_n(. \mid z^\star)}.
    \end{equation}
\end{theorem}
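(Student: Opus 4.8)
The plan is to reduce all three claims to classical decision theory for the pair of densities $p_0 \defn p^\mathrm{out}_n(\cdot \mid z^\star)$ and $p_1 \defn p^\mathrm{in}_n(\cdot \mid z^\star)$, the only game-specific ingredient being the uniform prior $b \sim \mathrm{Bernoulli}(1/2)$. First I would record the identity that, for every adversary $\mathcal{A}_{z^\star}$, conditioning on $b$ and using $\prob[b=0]=\prob[b=1]=\tfrac12$ gives
\[
\mathrm{Acc}_n(\mathcal{A}_{z^\star}) = \tfrac12\big(1 - \alpha_n(\mathcal{A}_{z^\star})\big) + \tfrac12\big(1 - \beta_n(\mathcal{A}_{z^\star})\big), \qquad\text{hence}\qquad \mathrm{Adv}_n(\mathcal{A}_{z^\star}) = 1 - \alpha_n(\mathcal{A}_{z^\star}) - \beta_n(\mathcal{A}_{z^\star}).
\]
This turns ``maximising the advantage'' into ``minimising $\alpha_n+\beta_n$'', i.e.\ minimising the Bayes risk under the uniform prior, and it is also the bridge to part (c).

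For part (a), note that $\mathcal{A}_{\ell_n,\tau,z^\star}(o) = \ind{\ell_n(o;z^\star) > \tau} = \ind{p_1(o)/p_0(o) > e^\tau}$, so thresholding the LR score is exactly thresholding the likelihood ratio. Fix a score $s$, a level $\alpha$, and any $\tau' \in T_\alpha$ for $s$; the test $\ind{s(o) > \tau'}$ has size $\alpha' \defn \alpha_n(\mathcal{A}_{s,\tau',z^\star}) \le \alpha$ and power $1-\beta_n(\mathcal{A}_{s,\tau',z^\star})$. By the Neyman--Pearson lemma, no test of size $\le \alpha$ can have power exceeding that of the level-$\alpha$ likelihood-ratio test. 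Since $\tau \mapsto \prob_0[\ell_n(o;z^\star) > \tau]$ and $\tau \mapsto \prob_1[\ell_n(o;z^\star) > \tau]$ are both non-increasing, $T_\alpha$ for $\ell_n$ is an interval $[\tau^\star,\infty)$ and $\mathrm{Pow}_n(\ell_n,\alpha,z^\star)$ is attained at $\tau^\star$; this value equals the Neyman--Pearson bound, hence is $\ge 1-\beta_n(\mathcal{A}_{s,\tau',z^\star})$, and taking the supremum over admissible $\tau'$ yields the claim. The one delicate point, and the main obstacle, is that the deterministic threshold family on $\ell_n$ realises the Neyman--Pearson bound only when $p_1/p_0$ is atomless — otherwise the optimal level-$\alpha$ test may need to randomise on an atom; I would dispatch this by observing that in every instantiation in the paper (empirical mean of continuous data, additive Gaussian noise, Poisson sub-sampling) $p_1/p_0$ admits a density, so no randomisation is needed.

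For part (b), by the identity above, maximising $\mathrm{Adv}_n(\mathcal{A}_{z^\star})$ over all (possibly randomised) adversaries equals minimising $\prob[\hat b \ne b]$. Conditioning on the observation $o$,
\[
\prob[\hat b \ne b \mid o] = \prob[b=1\mid o]\,\prob[\hat b = 0\mid o] + \prob[b=0\mid o]\,\prob[\hat b = 1\mid o],
\]
which is minimised pointwise by placing all guess mass on the maximiser over $c\in\{0,1\}$ of $\prob[b=c\mid o]$; under the uniform prior this is $\ind{p_1(o) > p_0(o)} = \ind{\ell_n(o;z^\star) > 0} = \mathcal{A}_{\mathrm{Bayes},z^\star}(o)$. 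Hence $\mathcal{A}_{\mathrm{Bayes},z^\star}$ has the largest accuracy, and therefore the largest advantage, among all adversaries.

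For part (c), it only remains to evaluate the errors of $\mathcal{A}_{\mathrm{Bayes},z^\star}$, namely $\alpha_n = \prob_0[p_1 > p_0]$ and $\beta_n = \prob_1[p_1 \le p_0]$. Substituting into the identity from the first paragraph,
\[
\mathrm{Adv}_n(\mathcal{A}_{\mathrm{Bayes},z^\star}) = 1 - \prob_0[p_1 > p_0] - \prob_1[p_1 \le p_0] = \prob_0[p_1 \le p_0] - \prob_1[p_1 \le p_0] = \int_{\{p_0 \ge p_1\}}(p_0 - p_1),
\]
which equals $\tfrac12\int|p_0-p_1| = \TV{p_0}{p_1}$; the tie set $\{p_0 = p_1\}$ contributes zero, so the threshold convention in $\mathcal{A}_{\mathrm{Bayes},z^\star}$ is immaterial. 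This is exactly \eqref{eq:adv_tv}, completing the plan. Beyond the atom subtlety in part (a), everything here is routine computation.
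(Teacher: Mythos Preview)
Your proposal is correct and follows essentially the same approach as the paper: Neyman--Pearson for (a), Bayes optimality of the threshold-zero LR rule for (b), and identification of the resulting advantage with $\int_{\{p_0\ge p_1\}}(p_0-p_1)=\TV{p_0}{p_1}$ for (c). Your treatment is in fact more careful than the paper's --- you make the $\mathrm{Adv}=1-\alpha-\beta$ identity explicit and you flag the atom issue in (a), whereas the paper simply cites Neyman--Pearson without comment --- but the underlying ideas are the same.
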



The detailed proof is in Appendix~\ref{app:proof_opt_adv}. 
As a consequence of Theorem~\ref{thm:opt_adv}, we \textbf{define} the \textit{target-dependent leakage} of $z^\star$, for mechanism $\mathcal{M}$ and data-generating distribution $\mathcal{D}$, as the advantage of the optimal Bayes attacker on $z^\star$, i.e. $\xi_n(z^\star, \mathcal{M}, \mathcal{D}) \defn  \mathrm{Adv}_n(\mathcal{A}_{\mathrm{Bayes}, z^\star})$. 


\textbf{Goal:} Our main goal is to \textit{quantify the target-dependent leakage $\xi_n(z^\star, \mathcal{M}, \mathcal{D})$ and trade-off functions for different mechanisms}, namely the empirical mean and its variations. These two quantities may be intractable to characterise for any generic data-generating distribution. To get over this limitation, we use the asymptotic properties of the empirical mean as the main tool. 
\section{TARGET-DEPENDENT LEAKAGE OF EMPRICAL MEAN}\label{sec:mia_mean}
We instantiate the fixed-target MI game with the empirical mean. We quantify the target-dependent leakage of a target $z^\star$ and characterise its dependence on the Mahalanobis distance between $z^\star$ and the data-generating distribution. Finally, we connect our results to the tracing literature~\citep{sankararaman2009genomic,dwork2015robust} and explain the privacy onion phenomenon.

\textbf{Notations and the asymptotic regime.} We denote by $\mathcal{M}^\mathrm{emp}_n$ the empirical mean mechanism. $\mathcal{M}^\mathrm{emp}_n$ takes as input a dataset of size $n$ of $d$-dimensional points, i.e. $D = \{Z_1, \dots, Z_n \} \in (\real^d)^n $, and outputs the exact empirical mean $\hat{\mu}_n \defn \frac{1}{n} \sum_{i=1}^n Z_i \in \real^d $. Let $\Phi$ represent the Cumulative Distribution Function (CDF) of the standard normal distribution, i.e. $\Phi(\alpha) \defn \frac{1}{\sqrt{2 \pi}} \int_{- \infty}^\alpha e^{- t^2/2} \dd t$ for $\alpha \in \real$. For a matrix $M$ and a vector $x$, we write $\|x\|^2_{M} \defn x^T M x$.
Since the LR test can be non-tractable in general cases, we study the \textit{asymptotic behaviour of the LR test}, when both the sample size $n$ and the dimension $d$ tend to infinity such that $d/n = \tau > 0$.  

\textbf{Assumptions on the data generating distribution $\mathcal{D}$.} We suppose that the data-generating distribution is column-wise independent, i.e. $\mathcal{D} \defn \bigotimes_{j = 1}^{d} \mathcal{D}_j$ and has a finite $(4 + \delta)$-th moment for some small $\delta$, i.e. there exists $\delta > 0$, such that $\expect[Z^{4 + \delta}] < \infty$. We denote by $\mu \defn (\mu_1, \dots, \mu_d) \in \real^d$ the mean of $\mathcal{D}$, and by $C_\sigma \defn \mathrm{diag}(\sigma_1^2, \dots, \sigma_d^2) \in \real^{d \times d}$ the covariance matrix. We recall that the Mahalanobis distance~\citep{mahalanobisdistance} of $z^\star$ with respect to $\mathcal{D}$ is $\| z^\star - \mu \|_{C_\sigma^{-1}}$.

\textbf{Main result.} Let us denote by $\ell_n(\hat{\mu}_n; z^\star, \mu, C_\sigma)$ the LR score for the empirical mean target-dependent MI game. By studying the asymptotic distribution of the LR score $\ell_n$ under $H_0$ and $H_1$, we characterise the exact asymptotic leakage and optimal trade-off functions.

\begin{theorem}[Target-dependent leakage of the empirical mean]\label{crl:per_leak}
The asymptotic target-dependent leakage of $z^\star$ in the empirical mean is\vspace*{-.5em}
\begin{equation}\label{eq:leak_emp}
    \hspace*{-0.5em}\lim_{n,d} \xi_n(z^\star, \mathcal{M}^\mathrm{emp}_n, \mathcal{D}) = \Phi\left (\frac{\sqrt{\textcolor{blue}{m^\star}}}{2}\right) - \Phi\left(-\frac{\sqrt{\textcolor{blue}{m^\star}}}{2}\right).\vspace*{-.5em}
\end{equation}
The asymptotic trade-off function, achievable with threshold $\tau_\alpha = -\frac{\textcolor{blue}{m^\star}}{2} + \sqrt{\textcolor{blue}{m^\star}} \Phi^{-1}(1 - \alpha)$, is\vspace*{-0.8em}
\begin{equation}\label{eq:pow_emp}
    \lim_{n,d} \mathrm{Pow}_n(\ell_n, \alpha, z^\star) = \Phi\left( \Phi^{-1}(\alpha) + \sqrt{\textcolor{blue}{m^\star}}\right).\vspace*{-.8em}
\end{equation}
Here, $\textcolor{blue}{m^\star} \defn \lim_{n,d} \frac{1}{n} \| z^\star - \mu \|^2_{C_\sigma^{-1}}$, which we call the \emph{leakage score} of target $z^\star$.
\end{theorem}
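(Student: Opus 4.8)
The plan is to prove that, in the joint limit $n,d\to\infty$ with $d/n\to\tau$, the log-LR statistic $\ell_n(\hat\mu_n;z^\star,\mu,C_\sigma)$ converges in distribution to $\mathcal N(-m^\star/2,\,m^\star)$ under $H_0$ and to $\mathcal N(m^\star/2,\,m^\star)$ under $H_1$; everything else is then a short computation. Indeed, $\xi_n(z^\star,\mathcal M^{\mathrm{emp}}_n,\mathcal D)=\mathrm{Adv}_n(\mathcal A_{\mathrm{Bayes},z^\star})=\Pr[\ell_n>0\mid H_1]-\Pr[\ell_n>0\mid H_0]$ since $\mathcal A_{\mathrm{Bayes},z^\star}$ thresholds $\ell_n$ at $0$, and this tends to $\Phi(\sqrt{m^\star}/2)-\big(1-\Phi(\sqrt{m^\star}/2)\big)=\Phi(\sqrt{m^\star}/2)-\Phi(-\sqrt{m^\star}/2)$ because $0$ is a continuity point of both limits (and, equivalently, this is the total-variation distance between the two limiting Gaussians appearing in Theorem~\ref{thm:opt_adv}(c)). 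For the trade-off function, Theorem~\ref{thm:opt_adv}(a) says the optimum is obtained by thresholding $\ell_n$, so $\mathrm{Pow}_n(\ell_n,\alpha,z^\star)=\Pr[\ell_n>\tau_{n,\alpha}\mid H_1]$ with $\tau_{n,\alpha}$ the smallest level such that $\Pr[\ell_n>\tau_{n,\alpha}\mid H_0]\le\alpha$; since the limiting $H_0$-law has a continuous, strictly increasing CDF, $\tau_{n,\alpha}\to\tau_\alpha=-m^\star/2+\sqrt{m^\star}\,\Phi^{-1}(1-\alpha)$ and the power converges to $1-\Phi\big((\tau_\alpha-m^\star/2)/\sqrt{m^\star}\big)=\Phi\big(\Phi^{-1}(\alpha)+\sqrt{m^\star}\big)$ (Pólya's theorem makes this joint passage to the limit clean).

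So the whole content is the asymptotic normality of $\ell_n$. First I would exploit column-wise independence: since $p_n^{\mathrm{in}}$ and $p_n^{\mathrm{out}}$ factorise over the $d$ coordinates, $\ell_n(\hat\mu_n)=\sum_{j=1}^d\ell_n^{(j)}(\hat\mu_{n,j})$ is a sum of $d$ independent, non-identically distributed, $n$-dependent terms. For a fixed coordinate $j$, $\hat\mu_{n,j}$ is an average of $n$ i.i.d.\ draws from $\mathcal D_j$ under $H_0$ and has the law of $\tfrac{z^\star_j}{n}+\tfrac1n\sum_{i=1}^{n-1}Z_{i,j}$ under $H_1$. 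Using a two-term Edgeworth expansion of the density of a sample mean — legitimate under the $(4+\delta)$-moment assumption together with a smoothness/non-lattice hypothesis (or the lattice local limit theorem in the Bernoulli-type case of \citet{sankararaman2009genomic}) — I would write each density as a Gaussian density with variance $\sigma_j^2/n$ times $1+O(n^{-1/2})$ Hermite corrections, then subtract logarithms. The Gaussian parts produce the affine term $\frac{z^\star_j-\mu_j}{\sigma_j^2}(\hat\mu_{n,j}-\mu_j)-\frac{(z^\star_j-\mu_j)^2}{2n\sigma_j^2}$; the Edgeworth corrections and the $O(1/n)$ mismatch between ``$n$ points'' and ``$n-1$ points plus $z^\star_j/n$'' contribute a remainder $R_n^{(j)}$.

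Summing over $j$, the affine parts give $(z^\star-\mu)^\top C_\sigma^{-1}(\hat\mu_n-\mu)-\tfrac1{2n}\|z^\star-\mu\|_{C_\sigma^{-1}}^2$ — the covariance-corrected scalar product of Equation~\eqref{eq:asymp_lr_real} — whose constant term converges to $-m^\star/2$. The random linear part $(z^\star-\mu)^\top C_\sigma^{-1}(\hat\mu_n-\mu)=\sum_j\frac{z^\star_j-\mu_j}{\sigma_j^2}(\hat\mu_{n,j}-\mu_j)$ is again a sum of independent coordinate contributions; under $H_0$ it has mean $0$ and variance $\tfrac1n\|z^\star-\mu\|_{C_\sigma^{-1}}^2\to m^\star$, and under $H_1$ mean $\approx\tfrac1n\|z^\star-\mu\|_{C_\sigma^{-1}}^2\to m^\star$ (from the $\tfrac{z^\star_j-\mu_j}{n}$ shift per coordinate) with the same limiting variance. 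Hence the Lindeberg--Feller CLT — whose Lindeberg condition is exactly that no single coordinate's weight $\tfrac{(z^\star_j-\mu_j)^2}{n\sigma_j^2}$ dominates, a mild regularity condition I would add or extract from the existence of $m^\star$ plus uniform moment control — yields $\mathcal N(0,m^\star)$ under $H_0$ and $\mathcal N(m^\star,m^\star)$ under $H_1$. Combined with Slutsky (to discard $\sum_jR_n^{(j)}\xrightarrow{P}0$), this gives the two Gaussian limits for $\ell_n$ and closes the argument.

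The main obstacle is controlling the accumulated remainder $\sum_{j=1}^dR_n^{(j)}$: each $R_n^{(j)}$ is individually of order $o(1/n)$, but there are $d\asymp n$ of them and they are evaluated at the \emph{random} points $\hat\mu_{n,j}$, whose fluctuations are of order $n^{-1/2}$ with Gaussian-like tails, so the Taylor control of the logarithm around the Gaussian density must hold uniformly over a window that grows with $n$; this calls for quantitative Edgeworth bounds with explicit dependence on $|\hat\mu_{n,j}-\mu_j|$ together with a truncation/large-deviations argument ruling out the rare coordinates where $\hat\mu_{n,j}$ strays far. A secondary subtlety is the validity of the Edgeworth expansion itself (Cramér's condition in the non-lattice case versus the lattice local CLT), which I would either assume away by a smoothness hypothesis or handle by carrying both cases through the same bookkeeping.
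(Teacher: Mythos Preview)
Your proposal is correct and follows essentially the same route as the paper: factorise the LR score over coordinates, apply a per-coordinate Edgeworth expansion to isolate the dominant term $(z^\star-\mu)^\top C_\sigma^{-1}(\hat\mu_n-\mu)-\tfrac{1}{2n}\|z^\star-\mu\|_{C_\sigma^{-1}}^2$, and conclude with Lindeberg--Feller plus the short Gaussian computation for the advantage and power. The only cosmetic difference is that the paper bundles the third-cumulant Edgeworth correction into the triangular array $Y_{n,j}$ before applying the CLT (and then shows its contribution to mean and variance is $o(1/n)$), whereas you isolate the linear part first and relegate everything else to a Slutsky remainder---a shortcut the paper itself points out in a remark.
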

\textit{Proof Sketch.} Theorem~\ref{crl:per_leak} is a consequence of Lemma~\ref{thm:asymp_lr} presented in Appendix~\ref{sec:prf_thm_mean}. Lemma~\ref{thm:asymp_lr} characterises the asymptotic distributions of the LR score as Gaussians under $H_0$ and $H_1$. We retrieve the asymptotic leakage and trade-off functions by using testing results between Gaussians. To prove Lemma~\ref{thm:asymp_lr}, (a) we rewrite the LR score with respect to the density of a centred normalised mean. Then, we use the Edgeworth asymptotic expansion (Theorem~\ref{thm:edgeworth}) to get an expansion of the LR score. Finally, we conclude the asymptotic distribution of the LR test using the Lindeberg-Feller theorem (Theorem~\ref{thm:lind_feller}). The detailed proofs are presented in Appendix~\ref{sec:prf_thm_mean}.

\textit{Target-dependent hardness.} Theorem~\ref{crl:per_leak} shows that the optimal advantage and trade-off functions are increasing in the leakage score $\textcolor{blue}{m^\star}$. As $z^\star$ has a higher Mahalanobis distance with respect to the data-generating distribution, the leakage score increases, making it easier to identify the target $z^\star$ in the MI game.


\textbf{Empirical LR attack.} Following the proof of Theorem~\ref{crl:per_leak}, we show in Remark~\ref{rmk:asymp} that the LR score
\begin{align}\label{eq:asymp_lr_real}
    \ell_n \sim (z^\star - \mu)^T C_\sigma^{-1}(\hat{\mu}_n - \mu) 
    - \frac{1}{2n} \|z^\star - \mu\|^2_{C_\sigma^{-1}}
\end{align}
asymptotically in $n$ and $d$. Equation~\eqref{eq:asymp_lr_real} shows that the LR score is a scalar product between $z^\star - \mu $ and $\hat{\mu}_n - \mu$, corrected by the precision matrix $C_\sigma^{-1}$. The optimal LR score uses the true mean $\mu$ and covariance matrix $C_\sigma$. A realistic attack should replace the true mean $\mu$ and co-variance $C_\sigma$ in Equation~\eqref{eq:asymp_lr_real} with empirical estimates. This leads to a new covariance score $\ell_n^{\mathrm{cov}}(\hat{\mu}_n; z^\star) = (z^\star - \hat{\mu}_0)^T \hat{C}_0^{-1}(\hat{\mu}_n - \hat{\mu}_0) - \frac{1}{2n} \|z^\star - \hat{\mu}_0\|^2_{\hat{C}_0^{-1}},$
where $\hat{\mu}_0 = \frac{1}{n_0} \sum_{i =1}^{n_0} Z^{\mathrm{ref}}_i $ and $\hat{C}_0 = \frac{1}{n_0} \sum_{i =1}^{n_0} Z^{\mathrm{ref}}_i (Z^{\mathrm{ref}}_i)^T$ are estimated using reference points sampled independently from $\mathcal{D}$. The decrease in the power of the covariance attack compared to the LR attack depends on the accuracy of the estimators $\hat{\mu}_0$ and $\hat{C}_0$.

\textbf{Connection to~\cite{sankararaman2009genomic}.} For Bernoulli distributions, \citet{sankararaman2009genomic} shows that the hardness of ``average-target'' MI game depend on the ratio $\tau \defn d/n$~\citep[Section T2.1]{sankararaman2009genomic}. Since $\expect_{z^\star \sim \mathcal{D}}\left [ \textcolor{blue}{m^\star} \right] = \lim_{n,d} \frac{d}{n} =\tau$, our results retrieve the ``averaged'' results of~\citep{sankararaman2009genomic}. In addition, \citet{sankararaman2009genomic} uses an analysis tailored only for Bernoulli distributions. 
Our analysis generalises their results to the target-dependent setting and to any data-generating distribution with a finite fourth moment.

\textbf{Connection to the scalar product attack.} \citet{dwork2015robust} proposes a scalar product attack for tracing the empirical mean that thresholds over the score $s^\mathrm{scal}(\hat{\mu}_n; z^\star, z^\mathrm{ref})  \defn (z^\star - z^\mathrm{ref})^T \hat{\mu}_n$. The intuition behind this attack is to compare the target-output correlation $(z^\star)^T \hat\mu_n$ with a reference-output correlation $(z^\mathrm{ref}) ^T \hat\mu_n$. The analysis of~\citep{dwork2015robust} shows that with only one reference point $z^\mathrm{ref} \sim \mathcal{D}$, and even for noisy estimates of the mean, the attack is able to trace the data of some individuals in the regime $d \sim n^2$. \textit{Our asymptotic analysis shows that the LR attack, i.e. the optimal attack, is also a scalar-product attack} (Equation~\eqref{eq:asymp_lr_real}), but \textit{corrected for the geometry of the data using the inverse covariance matrix}. 


\textbf{Explaining the privacy onion effect.} Removing a layer of outliers is equivalent to sampling from a new data-generating distribution, with a smaller variance. In this new data-generating distribution with smaller variance, the points which are not removed will naturally have an increased Mahalanobis distance. Thus, removing a layer of outlier points yields a layer of newly exposed target points. Hence, the Mahalanobis score explains the privacy onion effect~\citep{carlini2022privacy}.

\textbf{Inherent privacy of the empirical mean.} Under our specific threat model of MI games, the empirical mean already imposes a trade-off between the Type-I and Type-II errors for \textit{any adversary}. This means that, if an auditor uses the fixed-target MI game with some target $z^\star$ to audit the privacy of the empirical mean, the auditor would conclude that the empirical mean is $\sqrt{m^\star}$-Gaussian DP~\citep{dong2019gaussian}, or equivalently $(\epsilon, \delta)$-DP where for all $\epsilon \geq 0$, $\delta(\epsilon) = \Phi\left( - \frac{\epsilon}{\sqrt{m^\star}} + \frac{\sqrt{m^\star}}{2} \right) - e^\epsilon \Phi\left( - \frac{\epsilon}{\sqrt{m^\star}} - \frac{\sqrt{m^\star}}{2} \right).$ The result is a direct consequence of Equation~\eqref{eq:pow_emp} and \citep[Corollary 2.13]{dong2019gaussian}.


\noindent \textbf{MI games with Z-estimators and relation to influence functions.} The main technical tool used to provide an asymptotic expansion of the LR score is the ``asymptotic normality'' of the empirical mean, i.e. the Edgeworth expansion in Theorem~\ref{thm:edgeworth}. The empirical mean estimator is only an instance of a more general class of estimators enjoying the asymptotic normality property, called $Z$-estimators~\citep{Vaart_1998}. Now, suppose we are interested in estimating a parameter $\theta$ that is a functional of the distribution of observations $X_1, \dots, X_n$. A popular method to construct an estimator $\hat \theta_n = \hat \theta_n( X_1, \dots, X_n)$ is to satisfy~\vspace*{-1em} 
\begin{equation}\label{eq:Z-est}
    \Psi_n(\theta) \defn \frac{1}{n} \sum_{i= 1}^n \psi_\theta(X_i) = 0.~\vspace*{-1em} 
\end{equation}
Here, $\psi_\theta$ are known functions. The class of $Z$-estimators retrieves the empirical mean with $\psi_\theta(X_i) \defn X_i - \theta$ and the median with $\psi_\theta(X_i) \defn \mathrm{sign}(X_i - \theta)$. The class of $Z$-estimators also recovers many other estimators, such as maximum likelihood estimators, least square estimators, and empirical risk minimisers.

Under technical conditions on the data-generating distribution and the ``regularity" of the function $\psi_\theta$, it is possible to show that $\theta_n$ converges in probability to a parameter $\theta_0$ , i.e. a zero of the function $\Psi(\theta) \defn \mathbb{E}_X(\psi_ \theta(X))$. Also, for any $Z$-estimator, Theorem 5.21 in~\citep{van2000asymptotic} shows that
\begin{equation}\label{eq:asymp_normality}
 \hat \theta_n - \theta_0 = - V_{\theta_0}^{-1} \frac{1}{n} \sum_{i = 1}^n \psi_{\theta_0}(X_i) + o_p\left( \frac{1}{\sqrt{n}} \right),
\end{equation}
where $V_\theta$ is a non-singular derivative matrix of the map $ \theta \rightarrow \Psi(\theta)$ at $\theta_0$. Generally, $I_{\theta_0}(X_i) \defn V_{\theta_0}^{-1} \psi_{\theta_0}(X_i) $ is called the influence function. Thus, Equation~\eqref{eq:asymp_normality} shows that, asymptotically, any $Z$-estimator can be thought of \textit{as the empirical mean of its influence functions}. Using our target-dependent analysis of empirical mean MI games, it is direct to provide a new covariance score and a new canary selection strategy for all $Z$ estimators. For the score, the covariance attack becomes 
\begin{equation*}
 \left(I_{\theta_0}(X^\star)  - \theta_0\right)^T V_{\theta_0}^{-1}(\hat{\theta}_n - \theta_0) - \frac{1}{2n} \|I_{\theta_0}(X^\star) - \theta_0 \|^2_{V_{\theta_0}^{-1}}.
\end{equation*}
Similarly, to chose canaries, i.e. targets that are easy to identify, find points for which the estimated Mahalanobis distance of the influence functions at $X^\star$ is high, i.e. $\|I_{\theta_0}(X^\star) - \theta_0 \|_{V_{\theta_0}^{-1}}$. We leave it for future work to provide a rigorous statement of when these statements are correct, i.e. rigorous conditions on the data-generating distribution and regularity of $\psi_\theta$.
\section{IMPACT OF PRIVACY DEFENCES \& MISSPECIFICATION}\label{sec:defences}
We quantify the effect of adding noise and sub-sampling on the leakage of the empirical mean. Both defences act like contractions of the leakage score.
We also study the effect of target misspecification. The detailed proofs for this section are presented in Appendix~\ref{app:proof}.

\textbf{I. Adding Gaussian noise.} We denote by $\mathcal{M}^{\gamma}_n$ the mechanism releasing the noisy empirical mean of a dataset using the Gaussian mechanism~\citep{dpbook}. $\mathcal{M}^\gamma_n$ takes as input a dataset of size $n$ of $d$-dimensional points, i.e. $D = \{Z_1, \dots, Z_n \} \in (\real^d)^n $, and outputs the noisy mean $\tilde{\mu}_n \triangleq \frac{1}{n} \sum_{i=1}^n Z_i + \frac{1}{\sqrt{n}} N_d \in \real^d$, where $N_d \sim \mathcal{N}\left(0,C_{\gamma}\right)$ such that $\gamma = \left( \gamma_1, \dots, \gamma_d \right) \in \real^d $ and  $C_{\gamma} = \mathrm{diag}\left( \gamma_1^2, \dots, \gamma_d^2 \right) \in \real^{d \times d}$. Similar to Section~\ref{sec:mia_mean}, we assume that the data-generating  distribution $\mathcal{D}$ is colon-wise independent, has a mean $\mu \defn (\mu_1, \dots, \mu_d) \in \real^d$, a covariance matrix $C_\sigma \defn \mathrm{diag}(\sigma_1^2, \dots , \sigma_d^2) \in \real^{d \times d}$, and a finite $(4 + \delta)$-th moment for $\delta>0$. 

The output of $\mathcal{M}^\gamma_n$ could be re-written as $\tilde{\mu}_n = \frac{1}{n} \sum_{i=1}^n (Z_i + N_i) = \frac{1}{n} \sum_{i=1}^n \tilde{Z}_i$, where $\tilde{Z}_i \sim \tilde{\mathcal{D}}$ s.t. $\tilde{\mathcal{D}} \defn \mathcal{D} \bigotimes \mathcal{N}\left(0,C_{\gamma}\right)$. This means that $\mathcal{M}^\gamma_n$ could be seen as \textit{the exact empirical mean of $n$ i.i.d samples} from a new data-generating distribution $\tilde{\mathcal{D}}$. The results of Section~\ref{sec:mia_mean} directly apply to $\mathcal{M}^\gamma_n$, by replacing $\mathcal{D}$ by $\tilde{\mathcal{D}}$. The noisy leakage score $\textcolor{blue}{\tilde{m}^\star_\gamma}$ is now defined as $\textcolor{blue}{\tilde{m}^\star_\gamma} \defn \lim_{n,d} \frac{1}{n} \| z^\star - \mu \|^2_{\left(C_{\sigma} + C_{\gamma}\right)^{-1}}$.
Directly using the results of Section~\ref{sec:mia_mean} for $\tilde{\mathcal{D}}$ yields the following theorem.

\begin{theorem}[Target-dependent leakage of the noisy empirical mean]\label{thm:noisy_mean}
The asymptotic target-dependent leakage of $z^\star$ in the noisy empirical mean is\vspace*{-.5em}
\begin{equation}\label{eq:leak_nois}
    \hspace*{-0.5em}\lim_{n,d} \xi_n(z^\star, \mathcal{M}^\gamma_n, \mathcal{D}) = \Phi\left (\frac{\sqrt{\textcolor{blue}{\tilde{m}^\star_\gamma}}}{2}\right) - \Phi\left(-\frac{\sqrt{\textcolor{blue}{\tilde{m}^\star_\gamma}}}{2}\right).\vspace*{-.5em}
\end{equation}
The asymptotic trade-off function, achievable with threshold $\tau_\alpha = -\frac{\textcolor{blue}{\tilde{m}^\star_\gamma}}{2} + \sqrt{\textcolor{blue}{\tilde{m}^\star_\gamma}} \Phi^{-1}(1 - \alpha)$, is\vspace*{-0.8em}
\begin{equation}\label{eq:pow_nois}
    \lim_{n,d} \mathrm{Pow}_n(\tilde{\ell}_n, \alpha, z^\star) = \Phi\left( \Phi^{-1}(\alpha) + \sqrt{\textcolor{blue}{\tilde{m}^\star_\gamma}}\right).\vspace*{-.8em}
\end{equation}
\end{theorem}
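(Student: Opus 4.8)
The plan is to obtain Theorem~\ref{thm:noisy_mean} by reducing it to Theorem~\ref{crl:per_leak}, the key observation being that the noisy empirical mean is itself an \emph{exact} empirical mean for a perturbed data-generating law. Write $\frac{1}{\sqrt n}N_d \stackrel{d}{=} \frac1n\sum_{i=1}^n N_i$ with $N_1,\dots,N_n$ i.i.d.\ $\mathcal{N}(0,C_\gamma)$, so that $\tilde\mu_n \stackrel{d}{=} \frac1n\sum_{i=1}^n(Z_i+N_i) = \mathcal{M}^{\mathrm{emp}}_n(\tilde Z_1,\dots,\tilde Z_n)$ with $\tilde Z_i = Z_i + N_i$ i.i.d.\ from $\tilde{\mathcal{D}} \defn \mathcal{D}\bigotimes\mathcal{N}(0,C_\gamma)$. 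The first step is to check that $\tilde{\mathcal{D}}$ satisfies the hypotheses under which Theorem~\ref{crl:per_leak} was proved: it is column-wise independent (since $\mathcal{D}$ is and the added noise has diagonal covariance, independent of the data), it has mean $\mu$ (the noise is centred) and diagonal covariance $C_\sigma+C_\gamma$ (sum of independent covariances), and it has a finite $(4+\delta)$-th moment because $\|\tilde Z\|_{L^{4+\delta}}\le \|Z\|_{L^{4+\delta}}+\|N\|_{L^{4+\delta}}<\infty$ by Minkowski, a Gaussian having all moments.

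Granting this, Theorem~\ref{crl:per_leak} --- equivalently Lemma~\ref{thm:asymp_lr} together with the asymptotic form~\eqref{eq:asymp_lr_real} of the log-LR score --- applies with $\mathcal{D}$ replaced by $\tilde{\mathcal{D}}$, the mean left as $\mu$, and $C_\sigma$ replaced by $C_\sigma+C_\gamma$. The leakage score then becomes $\tilde{m}^\star_\gamma = \lim_{n,d}\frac1n\|z^\star-\mu\|^2_{(C_\sigma+C_\gamma)^{-1}}$, the log-LR score is asymptotically $\mathcal{N}(-\tilde{m}^\star_\gamma/2,\tilde{m}^\star_\gamma)$ under $H_0$ and $\mathcal{N}(\tilde{m}^\star_\gamma/2,\tilde{m}^\star_\gamma)$ under $H_1$, and substituting $m^\star\mapsto\tilde{m}^\star_\gamma$ into Equations~\eqref{eq:leak_emp},~\eqref{eq:pow_emp} and the threshold gives exactly Equations~\eqref{eq:leak_nois},~\eqref{eq:pow_nois} and $\tau_\alpha$; the remaining computations (Neyman--Pearson optimality, and the total-variation and ROC formulas for two equal-variance Gaussians) follow exactly as in the proof of Theorem~\ref{crl:per_leak}.

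The only point that needs care --- and the main obstacle --- is matching the two games at the level where $z^\star$ is inserted: the Crafter for $\mathcal{M}^\gamma_n$ inserts $z^\star$ into the $\mathcal{D}$-dataset and then noises, whereas the reduced game would insert $z^\star$ directly into the $\tilde{\mathcal{D}}$-dataset. These two output laws differ only because one of the $n$ independent noise vectors is coupled to the $z^\star$-slot in the former but not the latter, i.e.\ by an $O(1/n^2)$ additive perturbation of the $\Theta(1/n)$-scale output covariance; the KL divergence between them is $O(d/n^2)=O(\tau/n)\to 0$, so (by the triangle inequality for total variation) it affects neither the asymptotic leakage nor the trade-off function, and shifts the optimal threshold only by a vanishing amount. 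Equivalently, and perhaps more cleanly, one can avoid inserting $z^\star$ into a noisy slot altogether by arguing at the level of the output densities: $p^{\mathrm{in}}_n(\cdot\mid z^\star)$ and $p^{\mathrm{out}}_n(\cdot\mid z^\star)$ for $\mathcal{M}^\gamma_n$ are the convolutions of the corresponding exact-mean densities with the fixed density of $\mathcal{N}(0,C_\gamma/n)$, and pushing the Edgeworth-based Gaussian approximations of Lemma~\ref{thm:asymp_lr} through this convolution simply adds $C_\gamma$ to the leading-order covariance --- which is precisely the $(C_\sigma+C_\gamma)^{-1}$ appearing in $\tilde{m}^\star_\gamma$.
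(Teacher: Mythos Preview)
Your proposal is correct and follows essentially the same reduction as the paper: rewrite $\tilde\mu_n$ as an exact empirical mean over $\tilde{\mathcal{D}}=\mathcal{D}\bigotimes\mathcal{N}(0,C_\gamma)$, verify the moment and independence hypotheses, and apply Theorem~\ref{crl:per_leak} with $C_\sigma\mapsto C_\sigma+C_\gamma$. You are in fact more careful than the paper, which simply asserts that ``the results of Section~\ref{sec:mia_mean} directly apply'' without addressing the insertion-slot subtlety you raise; your $O(d/n^2)$ KL argument (or the convolution reformulation) correctly disposes of it.
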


Theorem~\ref{thm:noisy_mean} shows that the Gaussian Mechanism acts by increasing the variance of the data-generating distribution, thus decreasing the Mahalanobis distance of target points and their leakage score.

\textbf{II. Effect of sub-sampling.}
We consider the \textit{empirical mean with sub-sampling} mechanism~\citep{balle2018privacy} $\mathcal{M}^{\mathrm{sub}, \rho}_n$ that uniformly sub-samples $k_n$ rows without replacement from the original dataset, and then computes the exact empirical mean of the sub-sampled rows. $\mathcal{M}^{\mathrm{sub}, \rho}_n$ takes as input a dataset $D =\{ Z_1, \dots, Z_n \} \in (\real^d)^n$ and outputs $\hat{\mu}^\mathrm{sub}_{k_n} \triangleq \frac{1}{k_n} \sum_{i=1}^n Z_i \ind{\varsigma(i) \leq k_n}.$
Here, $k_n \defn \rho n$, $0 < \rho < 1$ and $\varsigma \sim^{\mathrm{unif}} S_n$ is a permutation sampled uniformly from $S_n$ the set of permutations of $\{1 \dots, n\}$, and independently from $(Z_1, \dots, Z_n)$. We get the following result by adapting the proofs in Sec.~\ref{sec:mia_mean}.


\begin{theorem}[Leakage for sub-sampling]\label{thm:lr_sub}
The asymptotic target-dependent leakage of $z^\star$ in $\mathcal{M}^{\mathrm{sub}}_{n, \rho}$ is\vspace*{-.5em}
\begin{equation*}\label{eq:leak_sub}
    \hspace*{-0.5em}\lim_{n,d} \xi_n(z^\star, \mathcal{M}^{\mathrm{sub}}_{n, \rho}, \mathcal{D}) = \Phi\left (\frac{\sqrt{\textcolor{blue}{\rho  m^\star}}}{2}\right) - \Phi\left(-\frac{\sqrt{\textcolor{blue}{\rho  m^\star}}}{2}\right).\vspace*{-.5em}
\end{equation*}
The asymptotic trade-off function, achievable with threshold $\tau_\alpha = -\frac{\textcolor{blue}{\rho  m^\star}}{2} + \sqrt{\textcolor{blue}{\rho  m^\star}} \Phi^{-1}(1 - \alpha)$, is\vspace*{-0.8em}
\begin{equation}\label{eq:pow_sub}
    \lim_{n,d} \mathrm{Pow}_n(\ell^\mathrm{sub}_{n, \rho}, \alpha, z^\star) = \Phi\left( z_\alpha + \sqrt{\textcolor{blue}{\rho  m^\star}}\right).\vspace*{-.8em}
\end{equation}
\end{theorem}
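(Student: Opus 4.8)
\textit{Proof Sketch.} The plan is to mirror the derivation of Theorem~\ref{crl:per_leak}: identify $p^{\mathrm{out}}_n$ and $p^{\mathrm{in}}_n$ for $\mathcal{M}^{\mathrm{sub}}_{n,\rho}$, establish an asymptotic Gaussian law for the associated LR score $\ell^{\mathrm{sub}}_{n,\rho}$, and then read off the advantage and trade-off function from the one-dimensional Gaussian-testing identities already used to pass from Lemma~\ref{thm:asymp_lr} to Theorem~\ref{crl:per_leak}. Under $b=0$ the output $\hat\mu^{\mathrm{sub}}_{k_n}$ is simply the exact empirical mean of $k_n=\rho n$ i.i.d.\ draws from $\mathcal{D}$, so $p^{\mathrm{out}}_n$ is the ``out'' law of $\mathcal{M}^{\mathrm{emp}}_{k_n}$; in particular $\hat\mu^{\mathrm{sub}}_{k_n}$ is asymptotically normal with mean $\mu$ and covariance $C_\sigma/k_n=C_\sigma/(\rho n)$. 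Under $b=1$ the target occupies a uniform position $j\in[n]$, and since the sub-sampling permutation $\varsigma$ is independent of the data, $z^\star$ is retained with probability $k_n/n=\rho$. Writing $B\defn\ind{\varsigma(j)\le k_n}\sim\mathrm{Bernoulli}(\rho)$ and using $\sum_{i=1}^n\ind{\varsigma(i)\le k_n}=k_n$, one gets the decomposition
\[
\hat\mu^{\mathrm{sub}}_{k_n}=\mu+\tfrac{1}{k_n}(z^\star-\mu)\,B+\tfrac{1}{k_n}\sum_{i\ne j}(Z_i-\mu)\,\ind{\varsigma(i)\le k_n}.
\]

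The key quantitative point is that the \emph{expected} contribution of the target to the output is $\tfrac{\rho}{k_n}(z^\star-\mu)=\tfrac1n(z^\star-\mu)$ --- a factor $1/n$, not $1/k_n$, because the target reaches the output only when sub-sampled --- while the covariance of $\hat\mu^{\mathrm{sub}}_{k_n}$ stays $C_\sigma/(\rho n)$ to leading order under both hypotheses. Hence, measured against this covariance, the null and alternative means are separated by a squared Mahalanobis distance
\[
m\;\defn\;\Bigl\|\tfrac1n(z^\star-\mu)\Bigr\|^2_{(C_\sigma/(\rho n))^{-1}}\;=\;\tfrac{\rho}{n}\,\|z^\star-\mu\|^2_{C_\sigma^{-1}}\;\xrightarrow[n,d]{}\;\rho\,m^\star .
\]

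I would then transcribe the proof of Lemma~\ref{thm:asymp_lr} with $n\mapsto k_n$ and the target signal rescaled by $\rho$: rewrite $\ell^{\mathrm{sub}}_{n,\rho}$ in terms of the density of the centred, normalised sub-sampled mean, apply the Edgeworth expansion (Theorem~\ref{thm:edgeworth}) to that density, and conclude with the Lindeberg--Feller theorem (Theorem~\ref{thm:lind_feller}), relying on the column-wise independence of $\mathcal{D}$ and the finite $(4+\delta)$-th moment. The conclusion is that $\ell^{\mathrm{sub}}_{n,\rho}$ is asymptotically $\mathcal{N}(-m/2,m)$ under $H_0$ and $\mathcal{N}(m/2,m)$ under $H_1$ with $m\to\rho m^\star$ --- the Gaussian pair of Lemma~\ref{thm:asymp_lr} with $m^\star$ replaced by $\rho m^\star$. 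Given this, the stated formulas are the same Gaussian computations as in Theorem~\ref{crl:per_leak}: the optimal advantage equals the total-variation distance between $\mathcal{N}(\pm\tfrac{\rho m^\star}{2},\rho m^\star)$, namely $\Phi(\tfrac12\sqrt{\rho m^\star})-\Phi(-\tfrac12\sqrt{\rho m^\star})$, and thresholding this Gaussian likelihood ratio at $\tau_\alpha=-\tfrac{\rho m^\star}{2}+\sqrt{\rho m^\star}\,\Phi^{-1}(1-\alpha)$ gives power $\Phi\bigl(\Phi^{-1}(\alpha)+\sqrt{\rho m^\star}\bigr)$, which by Theorem~\ref{thm:opt_adv}(a) is also the optimal trade-off function.

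The delicate step is the Edgeworth/Lindeberg--Feller argument under $H_1$. At finite $n$ the output is there a genuine two-component mixture indexed by the retention variable $B$: conditionally on $B=1$ it is an ``in'' empirical mean of $k_n$ points (Mahalanobis-separated from the null by $\sqrt{m^\star/\rho}$, which need not be small), and conditionally on $B=0$ it is an ``out'' mean. One must show that, after centring and normalising by $C_\sigma/(\rho n)$, this mixture behaves --- for the purposes of the limiting LR test --- like the single Gaussian with the moments above; concretely, that the rank-one, direction-$(z^\star-\mu)$ correction to the second moment coming from $\mathrm{Var}(B)$, together with the remaining Edgeworth terms and the finite-population (without-replacement) corrections induced by $\varsigma$, are asymptotically negligible after that normalisation. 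This is the analogue of bounding the Edgeworth remainder in Section~\ref{sec:mia_mean}, now complicated by sampling without replacement and by the random inclusion of the target, and it is precisely where the moment and independence assumptions on $\mathcal{D}$ are used; the rest is a line-by-line translation of the empirical-mean argument.
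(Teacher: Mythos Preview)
Your high-level plan and moment heuristic are correct, and you rightly identify the two-component mixture under $H_1$ as the delicate point. But the concrete execution you propose --- ``transcribe the proof of Lemma~\ref{thm:asymp_lr} with $n\mapsto k_n$ and the target signal rescaled by $\rho$'' --- would not give the right constants if carried out literally, and the paper's proof differs from yours precisely at this step.

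The issue is that the LR score for sub-sampling is not $\rho$ times the LR score of $\mathcal{M}^{\mathrm{emp}}_{k_n}$. Because $p^{\mathrm{in,sub}}_{n,j}=(1-\rho)\,p^{\mathrm{out}}_{k_n,j}+\rho\,p^{\mathrm{in}}_{k_n,j}$, the per-coordinate LR is
\[
\log\!\Bigl((1-\rho)+\rho\sqrt{\tfrac{k_n}{k_n-1}}\,\tfrac{d_{k_n-1,j}(\delta^{\mathrm{in}}_{k_n,j})}{d_{k_n,j}(\delta^{\mathrm{out}}_{k_n,j})}\Bigr),
\]
a log of a convex combination, not a sum of logs. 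The paper expands the inner density ratio via Edgeworth (as in Lemma~\ref{thm:asymp_lr}) \emph{and then} Taylor-expands the outer $\log\bigl((1-\rho)+\rho\,r\bigr)$ to second order in $(r-1)$. This second-order step produces an additional term
\[
\frac{\rho(1-\rho)}{8}\Bigl((\delta^{\mathrm{out}}_{k_n,j})^2-(\delta^{\mathrm{in}}_{k_n,j})^2\Bigr)^2
\]
in each summand $W_{n,j}$. That term has expectation $\tfrac{(1-\rho)}{2n}\tfrac{(\mu_j-z^\star_j)^2}{\sigma_j^2}+o(1/n)$ under $H_0$, and it is exactly what converts the naive $-\tfrac{1}{2n}\|z^\star-\mu\|^2_{C_\sigma^{-1}}$ (which your ``$n\mapsto k_n$ and multiply by $\rho$'' recipe yields) into the correct $-\tfrac{\rho}{2n}\|z^\star-\mu\|^2_{C_\sigma^{-1}}\to-\tfrac{\rho m^\star}{2}$. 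The analogous correction fixes the mean under $H_1$. So the mixture does not ``behave like a single Gaussian'' at the level of the LR expansion; rather, the log-mixture expansion to second order is what makes the bookkeeping close. Your final paragraph gestures at this, but the specific mechanism --- the $\rho(1-\rho)$ quadratic correction from expanding $\log((1-\rho)+\rho r)$ --- is the missing ingredient.
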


Theorem~\ref{thm:lr_sub} shows that the sub-sampling mechanism acts by increasing the number of ``effective samples'' from $n$ to $n/\rho$, thus decreasing the leakage score.

\textbf{III. Misspecifying the target.} Suppose that the adversary has a misspecified target $z^{\mathrm{targ}}$, different from the real $z^\star$ used in the fixed-target MI game (Algorithm~\ref{alg:fix_game}). The adversary $\mathcal{A}_\mathrm{miss}$ then builds the ``optimal" LR test \textit{tailored} for $z^{\mathrm{targ}}$, i.e. $\ell_n(\hat{\mu}_n; z^{\mathrm{targ}}, \mu, C_\sigma)$. The misspecified adversary is sub-optimal but can still leak enough information depending on the amount of misspecification. Now, we quantify the sub-optimality of the misspecified adversary, which we define as a measure of leakage similarity between $z^{\mathrm{targ}}$ and $z^\star$.


\begin{theorem}[Leakage of a misspecified adversary]\label{thm:misspec}
The advantage of the misspecified adversary is $$\lim_{n,d} \mathrm{Adv}_n(\mathcal{A}_\mathrm{miss}) = \Phi\left(\frac{|\textcolor{violet}{\textcolor{violet}{m^\mathrm{scal}}} |}{2 \sqrt{\textcolor{teal}{m^\mathrm{targ}}}}\right) - \Phi\left(-\frac{|\textcolor{violet}{m^\mathrm{scal}} |}{2 \sqrt{\textcolor{teal}{m^\mathrm{targ}}}}\right).$$
Here, $\textcolor{violet}{m^\mathrm{scal}} \defn \lim_{n, d} \frac{1}{n}  \left(z^{\mathrm{targ}} - \mu\right)^T C_\sigma^{-1}\left(z^\star - \mu \right)$ and $\textcolor{teal}{m^\mathrm{targ}} \defn \lim_{n, d} \frac{1}{n} \| z^{\mathrm{targ}} - \mu \|^2_{C_\sigma^{-1}}$.\vspace*{-.5em}
\end{theorem}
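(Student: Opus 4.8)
\textit{Proof sketch.} The plan is to reduce the misspecified test to a one–dimensional Gaussian testing problem, mirroring the route used for Theorem~\ref{crl:per_leak} via Lemma~\ref{thm:asymp_lr}, and then to read off the advantage as a total variation distance between equal–variance Gaussians. The score actually used by $\mathcal{A}_\mathrm{miss}$ is the exact log–likelihood ratio of the (hypothetical) game with target $z^{\mathrm{targ}}$, so by Equation~\eqref{eq:asymp_lr_real} with $z^{\mathrm{targ}}$ in place of $z^\star$ — equivalently, by re–running the Edgeworth expansion of Theorem~\ref{thm:edgeworth} — it equals, asymptotically in $n$ and $d$, $\ell_n(\hat\mu_n;z^{\mathrm{targ}},\mu,C_\sigma) = Y_n - \tfrac{1}{2n}\|z^{\mathrm{targ}}-\mu\|^2_{C_\sigma^{-1}} + o_p(1)$, where $Y_n \defn (z^{\mathrm{targ}}-\mu)^T C_\sigma^{-1}(\hat\mu_n-\mu)$. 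The subtracted term is deterministic and identical under $H_0$ and $H_1$, so thresholding this score is asymptotically the same as thresholding $Y_n$; here $\mathrm{Adv}_n(\mathcal{A}_\mathrm{miss})$ is the advantage of the best threshold rule on this score — the analogue, for the misspecified score, of the optimal rule of Theorem~\ref{thm:opt_adv}. It therefore suffices to find the limiting law of $Y_n$ under the true $H_0$ and the true $H_1$ (recall that $H_1$ inserts the real target $z^\star$, not $z^{\mathrm{targ}}$).

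Under $H_0$ the output $\hat\mu_n$ is exactly the i.i.d. empirical mean, i.e. the ``out'' law of the $z^{\mathrm{targ}}$–game, so Lemma~\ref{thm:asymp_lr} applies verbatim and gives $Y_n \Rightarrow \mathcal{N}(0, m^{\mathrm{targ}})$: its variance is $\tfrac1n\|z^{\mathrm{targ}}-\mu\|^2_{C_\sigma^{-1}} \to m^{\mathrm{targ}}$, and asymptotic normality follows from the Lindeberg–Feller theorem (Theorem~\ref{thm:lind_feller}) applied to the $nd$ independent coordinate contributions $\tfrac{z^{\mathrm{targ}}_j-\mu_j}{n\sigma_j^2}(Z_{ij}-\mu_j)$, the Lindeberg condition being verified from the $(4+\delta)$–moment assumption exactly as in the proof of Lemma~\ref{thm:asymp_lr}. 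Under $H_1$ one sample is replaced by $z^\star$, so $\hat\mu_n - \mu = \tfrac1n(z^\star-\mu) + \tfrac1n\sum_{i\neq j}(Z_i-\mu)$ and hence $Y_n = \tfrac1n(z^{\mathrm{targ}}-\mu)^T C_\sigma^{-1}(z^\star-\mu) + \tfrac1n\sum_{i\neq j}(z^{\mathrm{targ}}-\mu)^T C_\sigma^{-1}(Z_i-\mu)$. The first, deterministic, term converges to $m^{\mathrm{scal}}$ by definition; the second has mean $0$, variance $\tfrac{n-1}{n^2}\|z^{\mathrm{targ}}-\mu\|^2_{C_\sigma^{-1}} \to m^{\mathrm{targ}}$, and is asymptotically Gaussian by the same Lindeberg–Feller argument, so $Y_n \Rightarrow \mathcal{N}(m^{\mathrm{scal}}, m^{\mathrm{targ}})$ under $H_1$. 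The only structural difference from Lemma~\ref{thm:asymp_lr} is that the drift becomes the cross term $m^{\mathrm{scal}}$ rather than $m^\star$, and the $o_p(1)$ remainder stays negligible under $H_1$ because that law perturbs an i.i.d. mean only by an $O(1/n)$ shift and a single coordinate.

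Finally, the two limiting laws are Gaussians with common variance $m^{\mathrm{targ}}$ and means $0$ and $m^{\mathrm{scal}}$. Since the CDFs of $Y_n$ converge uniformly (Pólya's theorem) and, for equal–variance Gaussians, the Neyman–Pearson optimal test is itself a threshold on the value, the limiting advantage of $\mathcal{A}_\mathrm{miss}$ equals $\sup_\tau \bigl| \Phi(\tfrac{\tau}{\sqrt{m^{\mathrm{targ}}}}) - \Phi(\tfrac{\tau - m^{\mathrm{scal}}}{\sqrt{m^{\mathrm{targ}}}}) \bigr| = \TV{\mathcal{N}(0,m^{\mathrm{targ}})}{\mathcal{N}(m^{\mathrm{scal}},m^{\mathrm{targ}})} = 2\Phi\!\bigl(\tfrac{|m^{\mathrm{scal}}|}{2\sqrt{m^{\mathrm{targ}}}}\bigr) - 1 = \Phi\!\bigl(\tfrac{|m^{\mathrm{scal}}|}{2\sqrt{m^{\mathrm{targ}}}}\bigr) - \Phi\!\bigl(-\tfrac{|m^{\mathrm{scal}}|}{2\sqrt{m^{\mathrm{targ}}}}\bigr)$, the absolute value recording that the adversary flips the threshold direction when $m^{\mathrm{scal}} < 0$. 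As sanity checks, $m^{\mathrm{scal}} = 0$ (``orthogonal'' misspecification) gives advantage $0$, while $z^{\mathrm{targ}} = z^\star$ gives $m^{\mathrm{scal}} = m^{\mathrm{targ}} = m^\star$ and recovers the leakage of Theorem~\ref{crl:per_leak}.

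The main obstacle is not any single calculation but transferring the asymptotic machinery of Lemma~\ref{thm:asymp_lr} to the mismatched setting: one must check that the Edgeworth expansion and the Lindeberg condition still hold when $\hat\mu_n$ is drawn from the true $H_1$ law (target $z^\star$) while the score is constructed from $z^{\mathrm{targ}}$, and that replacing a single row by $z^\star$ disturbs neither the Gaussian fluctuation of $Y_n$ nor the $o_p(1)$ remainder. Because $z^{\mathrm{targ}}$ is itself a valid target point and the $H_1$ law perturbs an i.i.d. mean only at one coordinate by an $O(1/n)$ shift, this is essentially the bookkeeping already carried out (up to relabelling) in the proof of Lemma~\ref{thm:asymp_lr}; the genuinely new content is identifying the $H_1$–drift of $Y_n$ as the cross term $m^{\mathrm{scal}}$ and performing the equal–variance Gaussian total variation computation.
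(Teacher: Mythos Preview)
Your proposal is correct and follows essentially the same route as the paper: derive the asymptotic law of the misspecified LR score under the true $H_0$ and $H_1$ via the Edgeworth/Lindeberg--Feller machinery of Lemma~\ref{thm:asymp_lr}, then read off the advantage as the total variation between two equal-variance Gaussians whose mean separation is $m^{\mathrm{scal}}$. Your version is in fact the streamlined variant the paper itself records in the remark following its proof: rather than tracking the full per-coordinate terms $Y^{\mathrm{targ}}_{n,j}$ and recomputing $\expect_h[(\delta^{\mathrm{in}})^2]$, $\expect_h[(\delta^{\mathrm{out}})^2]$, etc., you work directly with the scalar-product representation $Y_n=(z^{\mathrm{targ}}-\mu)^T C_\sigma^{-1}(\hat\mu_n-\mu)$ from Equation~\eqref{eq:asymp_lr_real}, which makes the $H_1$ drift $m^{\mathrm{scal}}$ and the common variance $m^{\mathrm{targ}}$ immediate. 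The paper's Gaussians $\mathcal{N}(-m^{\mathrm{targ}}/2,\,m^{\mathrm{targ}})$ and $\mathcal{N}((m^\star-m^{\mathrm{diff}})/2,\,m^{\mathrm{targ}})$ differ from yours only by the common deterministic shift you discarded, and since $m^\star+m^{\mathrm{targ}}-m^{\mathrm{diff}}=2m^{\mathrm{scal}}$ the mean gap and hence the final TV value coincide.
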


If the adversary specified well the target by using $z^\star$ rather than $z^\mathrm{targ}$, then they achieve the optimal asymptotic advantage $\lim_{n,d} \xi_n(z^\star, \mathcal{M}^\mathrm{emp}_n, \mathcal{D})$ of Equation~\ref{eq:leak_emp}.
Theorem~\ref{thm:misspec} \textit{quantifies the sub-optimality of the misspecified adversary}, which is $ \Delta(z^\mathrm{targ}, z^\star) = \lim_{n,d} \xi_n(z^\star, \mathcal{M}^\mathrm{emp}_n, \mathcal{D}) - \mathrm{Adv}_n(\mathcal{A}_\mathrm{miss})$. This quantity depends on the comparison between $\textcolor{blue}{m^\star}$ and $|\textcolor{violet}{m^\mathrm{scal}} | / \sqrt{\textcolor{teal}{m^\mathrm{targ}}}$.
By the Cauchy Schwartz inequality, $|\textcolor{violet}{m^\mathrm{scal}} | \leq \sqrt{\textcolor{teal}{m^\mathrm{targ}} \textcolor{blue}{m^\star}}$, which means that $\Delta(z^\mathrm{targ}, z^\star) \geq 0$. The misspecified attack is still strong as long as $ \sqrt{m^{\mathrm{targ}} \textcolor{blue}{m^\star}} - |\textcolor{violet}{m^\mathrm{scal}} | = \sqrt{\textcolor{teal}{m^\mathrm{targ}} \textcolor{blue}{m^\star}} (1 - |\mathrm{cos}(\theta)|) $ stays small. We geometrically illustrate $\theta$ in Figure~\ref{fig:lr_misss}.

\begin{figure}[t!]
    \centering
    \includegraphics[width= 0.16\textwidth]{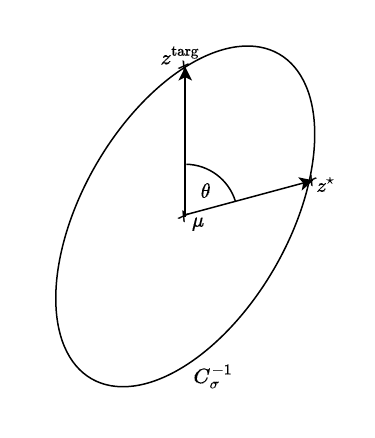}
    \caption{Effect of misspecification depends on the angle $\theta$ between $z^\star - \mu$ and $z^\mathrm{targ} - \mu$, corrected by $C_\sigma^{-1}$}\label{fig:lr_misss}
\end{figure}

\begin{figure*}[t!]
\centering\vspace*{-2em}
\begin{tabular}{ccc}
\subfloat[Effect of $m^\star$]{\includegraphics[width=0.3\textwidth]{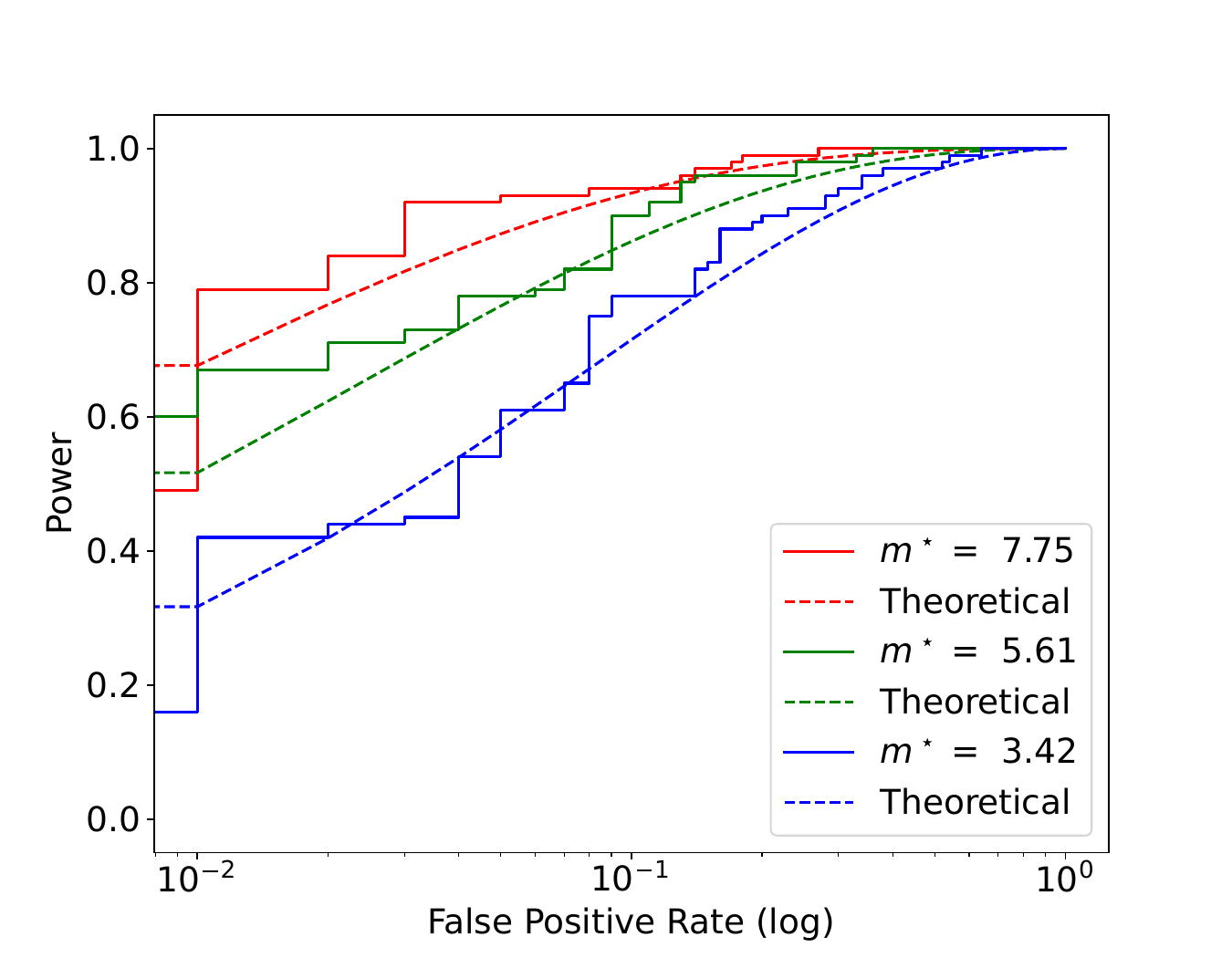}} &
\subfloat[Effect of $\gamma$]{\includegraphics[width=0.3\textwidth]{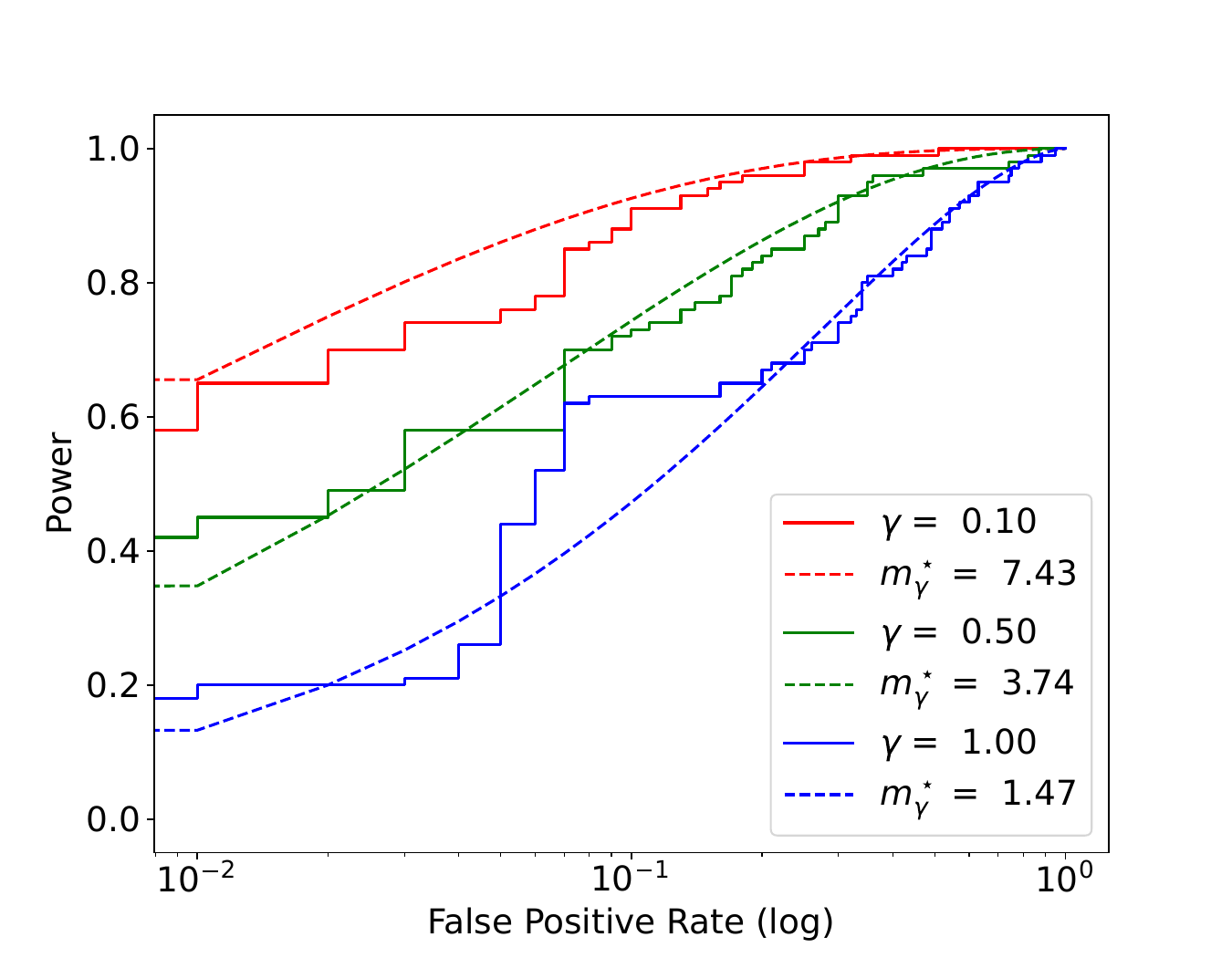}}&
\subfloat[Effect of $\rho$]{\includegraphics[width=0.3\textwidth]{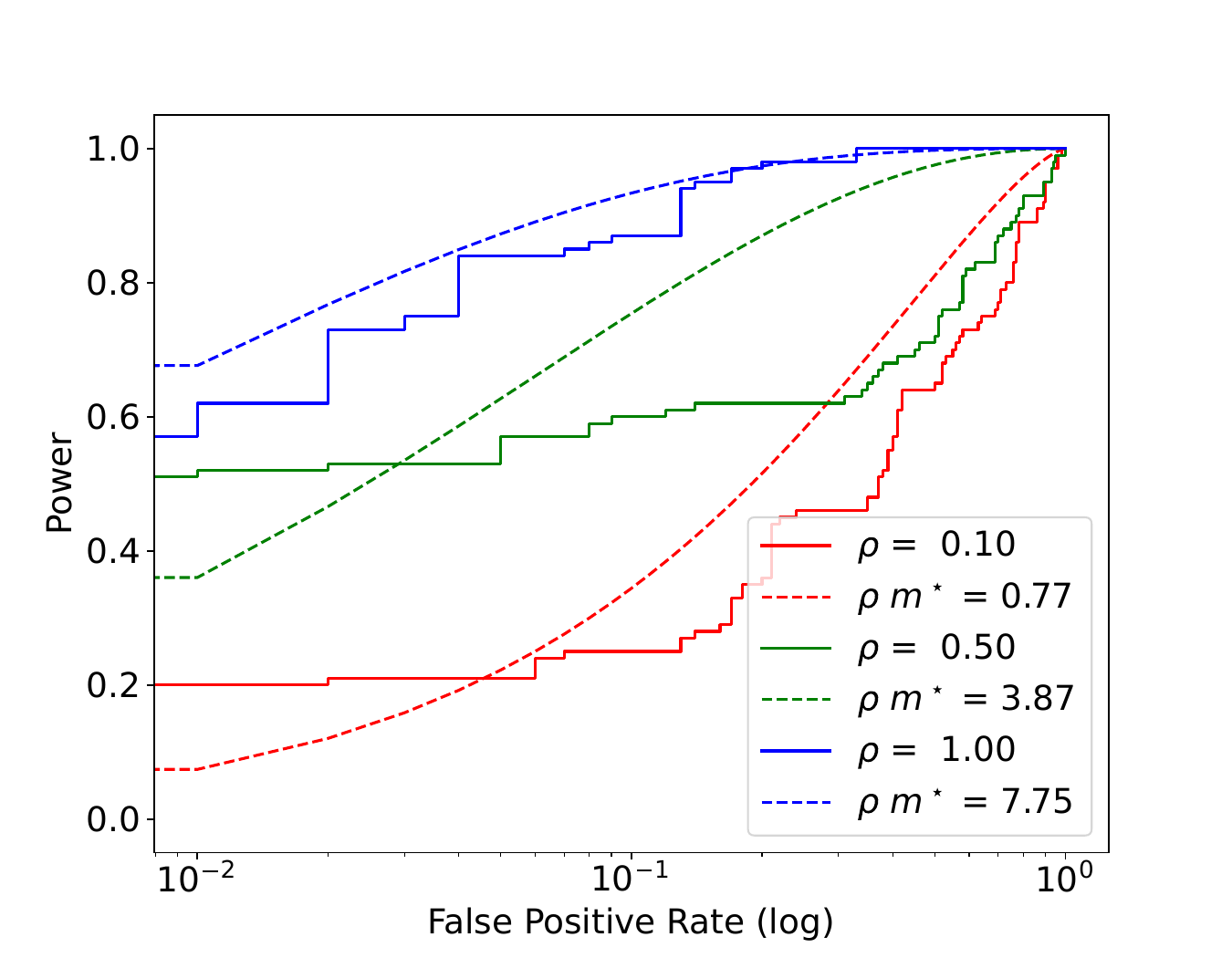}} \\
\end{tabular}\vspace*{-.5em}
\caption{Experimental validation of the theoretical results and impacts of $m^\star$, noise, and sub-sampling ratio on leakage. Dotted lines are for theoretical bounds and solid lines for empirical ones.}\label{fig:experiments}\vspace*{-1em}
\end{figure*}

\section{BEYOND EMPIRICAL MEAN: WHITE-BOX ATTACK \& CANARY SELECTION ON GRADIENT DESCENTS}\label{sec:byd_mea}

We attack supervised learning gradient descent algorithms. The main observation is that gradient descent algorithms operate by sequentially updating a parameter estimate $\theta_t$ \textit{in the direction of the empirical mean of gradients}. If an adversary has access to all the intermediates parameters $\{\theta_t\}_t$, i.e. the white-box federated learning setting, attacking gradient descent algorithms reduces to attacking the empirical mean mechanism. 

\noindent \textbf{The threat model.} We suppose that a input dataset contains $n$ examples of features and label pairs, i.e. $D \defn \{(x_i, y_i)\}_{i = 1}^n$. The goal is to find $\theta^\star \defn \arg\min_{\theta \in \real^d} \frac{1}{n} \sum_{i = 1}^{n} \ell(f_\theta(x_i), y_i)$ the best parameter which explains the dataset $D$ with respect to a loss function $\ell$. Here, we focus on gradient descent algorithms. Gradient Descent algorithms start with an initial parameter $\theta_0 \in \real^d$, and then update sequentially the parameter at each step $t$ by $\theta_t \defn \theta_{t - 1} - \eta \nabla_{\theta_{t - 1}} Q(\theta_{t- 1})$, where $\eta$ is the learning rate, and $Q(\theta_{t})$ is a quantity that aggregates the gradients on some input samples. For example, in batch gradient descent $\nabla_{\theta_{t}} Q(\theta_{t}) \defn \frac{1}{n} \sum_{i = 1}^n  \nabla_{\theta_{t}} \ell(f_{\theta_{t}}(x_i), y_i)$ is the mean of gradients on to the whole dataset. For DP-SGD~\citep{dpsgd}, $\nabla_{\theta_{t}} Q(\theta_{t}) \defn \left(\frac{1}{|B|}  \sum_{i \in B} \operatorname{Clip}_{C} \left[ \nabla_{\theta_{t}} \ell(f_{\theta_{t}}(x_i), y_i) \right] \right)   + \mathcal{N}\left( 0, \gamma^2 C^2 I_d \right),$ where $B$ is a batch uniformly sampled, $ \operatorname{Clip}_{C}(x) \defn \min\{1, C/ \|x\| \}  x$ is the clipping function, $C > 0$ is a clipping bound and $\gamma>0$ is the noise magnitude.
\setlength{\textfloatsep}{6pt}
\begin{algorithm}[t!]
   \caption{Mahalanobis canary-choosing strategy}
   \label{alg:cana_maha}
\begin{algorithmic}[1]
   \State \textbf{Input:} $\{(x_1, y_1), \dots, (x_{n_\text{r}}, y_{n_\text{r}})\}$ reference points, $\{(x_1, y_1), \dots, (x_{n_\text{c}}, y_{n_\text{c}})\}$ candidate canaries
   \State \textbf{Step1:} Estimate $\hat{\mu}_0$ and $\hat{C}_0$
   \State Initialise weights and biases of the model, i.e. $\theta^0$.
   \For{$i=1, \cdots, n_\text{r}$}
        \State Compute $g_i =\nabla_{\theta_{0}} \ell(f_{\theta_0}(x_i), y_i )$
   \EndFor
   \State Compute $\hat{\mu}_0 = \frac{1}{n} \sum_i^n g_i$ and $\hat{C}_0= \frac{1}{n} \sum_i^n g_i g_i^T$
   \State Return $(\hat{\mu}_0, \hat{C}_0)$
   \State \textbf{Step2:} Estimate the Mahalanobis score
   \For{$k=1, \cdots, n_\text{c}$}
        \State Compute $g_k =\nabla_{\theta_{0}} \ell(f_{\theta_0}(x_k), y_k )$
        \State {Estimate $m^\star_k = \| g_k - \hat{\mu}_0 \|^2_{\hat{C}_0^{-1}}$}
   \EndFor
   \State Return $(x^\star, y^\star) = \arg\max_{k =1}^{n_c} m_k^\star$
\end{algorithmic}
\end{algorithm}

We instantiate the fixed-target MI game with the gradient descent training algorithm, which takes as input the private dataset $D$ and produces the sequence $\{\theta_t \}_{t = 1}^T$. This is called the white-box federated learning setting~\citep{nasr2023tight}. We provide more details on the definition of this setting, its importance, and its relation to federated learning in Appendix~\ref{app:audt}.

\textbf{The covariance attack.} We present our covariance attack in Algorithm~\ref{alg:cov_att}. 
Given the target's gradient $g^\star_t$ and the batch gradient $g^t_\mathrm{batch} \defn \frac{\theta_{t+1} - \theta_t}{\eta}$, the covariance attack at step $t$ uses the empirical LR score of Section~\ref{sec:mia_mean} to compute the covariance score $s_t$. The attack uses the estimated empirical mean $\hat \mu_0$ and covariance $C_0$ of gradients over some reference points. To avoid the computational burden, the same estimates $(\hat \mu_0, C_0)$ are used for the attack over one epoch, i.e. one pass over the dataset. At the end of the epoch, the final score is the sum of step-wise scores $s_t$. When the inverse of the covariance matrix is well estimated, the covariance attack improves on the scalar product score, which is the state-of-the-art score in white box attacks~\citep{maddock2022canife, nasr2023tight, steinke2023privacy, andrew2023one}.

\textbf{The Mahalanobis canaries.} We present our canary selection strategy in Algorithm~\ref{alg:cana_maha}. Algorithm~\ref{alg:cana_maha} takes as input candidate ``canaries", and outputs the ``easiest" point to attack between the proposed candidates. To do so, Algorithm~\ref{alg:cana_maha} estimates the Mahalanobis score of gradients using reference samples. In addition to being an optimal strategy, Algorithm~\ref{alg:cana_maha} is the first strategy generating "in-distribution" canaries that do not hurt the accuracy of a model trained. In appendix~\ref{app:audt}, we compare our Mahalanobis canaries to the different heuristics used in the white-box literature~\citep{jagielski2020auditing,  maddock2022canife, nasr2023tight}.

\setlength{\textfloatsep}{6pt}
\begin{algorithm}[t!]
   \caption{The covariance attack}
   \label{alg:cov_att}
\begin{algorithmic}[1]
   \State \textbf{Input:} Estimated $(\hat{\mu}_0, \hat{C}_0)$ from Algorithm~\ref{alg:cana_maha}, target $z^\star$, learning rate $\eta$, batch size $b$.
   \For{$t=1, \cdots, L$}
        \State Set $g_t^\star = \nabla_{\theta_{t}} \ell(f_{\theta_t}(x^\star), y^\star)$
        \State Set $g_\text{batch}^t = (\theta_{t+1} - \theta_t)/\eta$
        \State {Compute the covariance score $s_t =$}\par  {$ (g^\star_t - \hat{\mu}_0)^T\hat{C}_0^{-1}\left (  g_\text{batch}^t  - \hat{\mu}_0 \right)$}   {$ - \frac{1}{2 b} \|g^\star_t - \hat{\mu}_0\|^2_{\hat{C}_0^{-1}} $}
   \EndFor
   \State Return $\sum_{t = 1}^L s_t$
\end{algorithmic}
\end{algorithm}

\vspace{-1em}\section{EXPERIMENTAL ANALYSIS}\label{sec:exp}
First, we empirically validate the theoretical analysis on synthetic data. Then, we test our covariance attack and canary selection strategy in the white-box federated learning setting on gradient descent on real datasets.   

\vspace*{-.5em}\subsection{Experiments on synthetic data}
We test: \textit{How tight is the leakage analysis in Theorem~\ref{crl:per_leak},~\ref{thm:noisy_mean}, and~\ref{thm:lr_sub}?}

\textbf{Experimental setup.} We take $n = 1000$, $\tau =5$, and thus, $d = 5000$. The data-generating distribution $\mathcal{D}$ is a $d$ dimensional Bernoulli, with parameter $p \in [0,1]^d$. The three mechanisms considered are $\mathcal{M}^\mathrm{emp}_n$, $\mathcal{M}^\gamma_n$ and $\mathcal{M}^\mathrm{sub}_{n,d}$. The adversaries chosen for each mechanism are the thresholding adversaries based on the asymptotic approximations of the LR tests. Finally, we choose three target data points in $\{0, 1\}^d$. 
(a) The \textit{easiest point to attack} $z^\star_{\mathrm{easy}} \defn \left(\ind{p_i \leq 1/2} \right)_{i = 1}^d$ is the point with the highest Mahalanobis distance with respect to $p$. 
(b) The \textit{hardest point to attack} is $z^\star_{\mathrm{hard}} \defn \left(\ind{p_i > 1/2} \right)_{i = 1}^d$, that has the binary coordinates closest to $p$. 
(c) A \textit{medium point to attack} $z^\star_{\mathrm{med}}$ is randomly sampled from the data-generating distribution $\mathrm{Bern}(p)$, for which the Mahalanobis distance and the leakage score are of orders $d$ and $\tau = d/n$. We simulate a fixed-target MI game $1000$ times for each mechanism and fixed target point. We plot the empirical ROC curve in solid lines in Figure~\ref{fig:experiments}. The theoretical trade-off functions of Equations~\eqref{eq:pow_emp},~\eqref{eq:pow_nois} and~\eqref{eq:pow_sub} are in dotted lines. Further details are in Appendix~\ref{app:exp_details}.

\textbf{Results and discussions.} Figure~\ref{fig:experiments} shows that
(a) the power of the LR test uniformly increases with an increase in  $m^\star$ and the sub-sampling ratio $\rho$, and uniformly decreases with an increase in the noise variance $\gamma^2$. (b) Figure~\ref{fig:experiments} validates that our theoretical analysis tightly captures the target-dependent hardness of MI games and the effect of privacy-preserving mechanisms on the experimental ROC curves.

\vspace*{-.5em}\subsection{Attacking in the white-box federated learning setting}\vspace*{-.5em}
We test: \textit{Does the covariance attack improve over the scalar product attack? Does the Mahalanobis leakage score explain the target-dependent hardness of MI games on real datasets? }

\textbf{Experimental setup.} We attack two models. We train a single linear layer (Linear($28 \times 28, 10$)) on Fashion MNIST (FMNIST)~\citep{fmnist}. Thus, the number of weights and biases in the model is $d_F = 7850$. We train a Convolutional Neural Net (CNN)~\citep{cnn} with three convolutional layers and a final linear layer on CIFAR10~\citep{cifar10}. 
\begin{figure}[t!]
\centering\vspace*{-1.5em}
\begin{tabular}{cc}
\hspace*{-.5em}\subfloat[Logistic reg. on FMNIST\vspace*{-1em}]{\includegraphics[width=0.48\columnwidth,trim={0.9cm 0.9cm 1.9cm 0.9cm},clip]{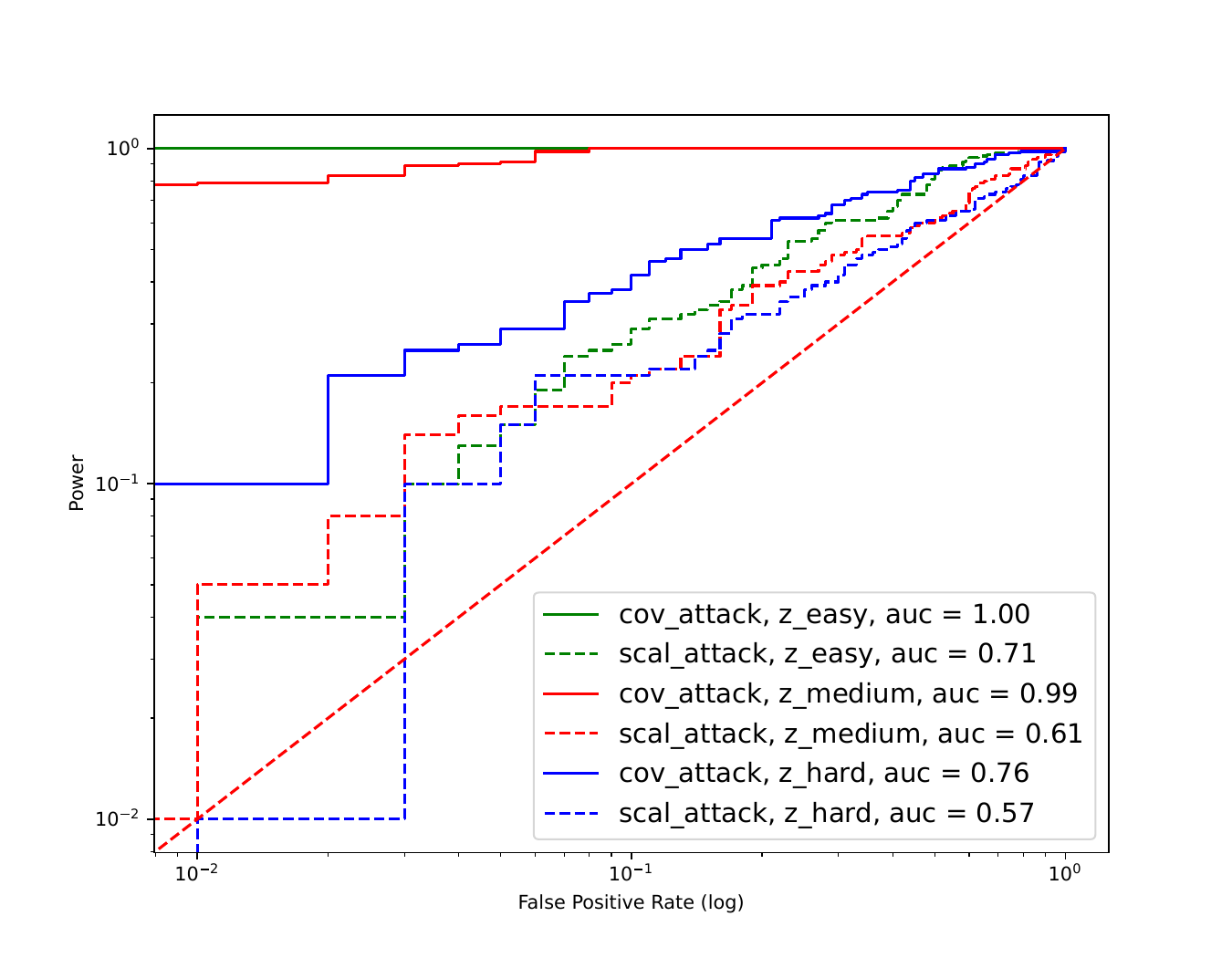}} 
&\hspace*{-.5em}\subfloat[CNN on CIFAR10\vspace*{-.5em}]{\includegraphics[width=0.48\columnwidth,trim={0.9cm 0.9cm 1.9cm 0.9cm},clip]{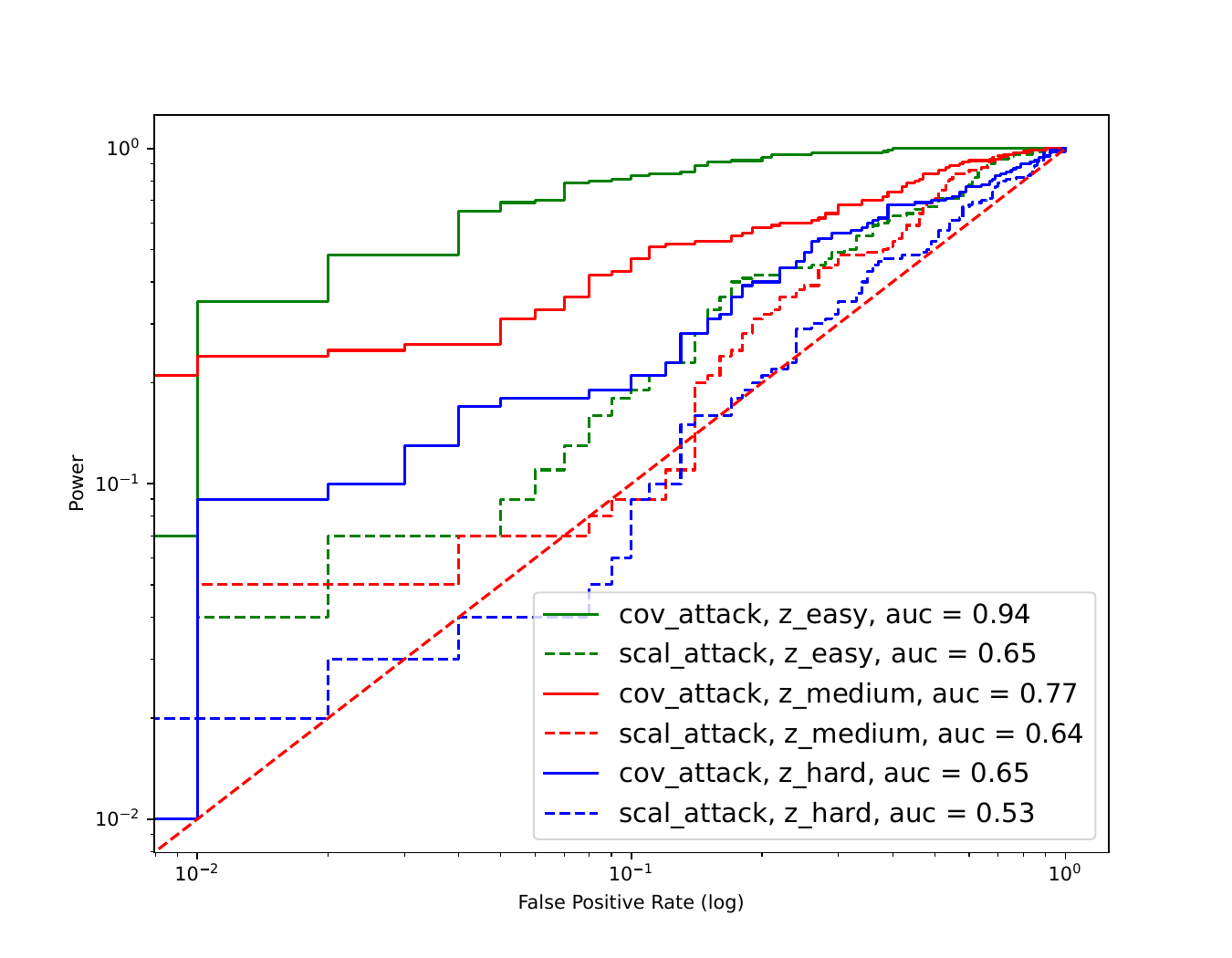}} 
\end{tabular}
\caption{Covariance and scalar product attacks.}\label{fig:experiments_white_box}
\end{figure}
The number of weights and biases in CNN is $d_C = 18786$, while the last linear layer has $d_L = 4080$ parameters. We use batched SGD with a batch size $64$, learning rate $10^{-3}$, and cross-entropy loss. We attack the models in a white-box FL setting. First, we run Algorithm~\ref{alg:cana_maha} to estimate the empirical mean and covariance, using $n_r = 1000$ reference points from training data. Then, we estimate the Mahalanobis score for every point in the training data. Finally, we chose as targets the points in the training data with the highest, medium and lowest Mahalanobis scores, i.e. $z_\text{easy}$, $z_\text{med}$ $z_\text{hard}$, respectively. We run both our covariance attack and the scalar product attack on these three points. The scalar product attack replaces the score $s_t$ in Line 5 of Algo.~\ref{alg:cov_att} with $s^\text{scal}_t = (g^\star_t)^T g^t_\text{batch}$. Both attacks are run \textit{only} on one epoch of SGD, i.e. one loop over the training data. For FMNIST, the attack is implemented with the \textit{full} gradient of the loss. For CIFAR10, we only attack the last linear layer of the model, leading to $d_L \times d_L$ covariance matrix rather than $d_C \times d_C$. This improves our attack's time and space complexity by storing and inverting a smaller matrix. It still maintains the strength of the tracing attack since $d_L$ is significantly larger than the batch size.
We show the ROC curves of the two attacks against easy, medium and hard targets of FMNIST and CIFAR10 in Fig.~\ref{fig:experiments_white_box}.

\textbf{Results and discussion.} Figure~\ref{fig:experiments_white_box} shows that (a) The covariance attack improves on the scalar product attack. (b) Points with high Mahalanobis scores are easier to attack than points with low Mahalanobis scores for both the datasets and models. (c) It is enough to run the covariance attack over one epoch and the last layer of a model. Also, the in-distribution Mahalanobis canaries, i.e. $z_\text{easy}$, leak enough to be easily identified.



\vspace*{-1em}\section{CONCLUSION \& FUTURE WORK}\label{sec:conc}\vspace*{-.5em}
We study fixed-target MI games and characterise the target-dependent hardness of MI games for the empirical mean and its variants. We show that the hardness is captured exactly by the Mahalanobis distance of the target to the data-generating distribution (Table~\ref{tab:1}). Our generic analysis captures the impact of different variants, like adding Gaussian noise, sub-sampling and misspecification. Our analysis generalises different results from tracing literature and explains novel phenomena observed experimentally in MI attacks. We also provide a novel covariance attack and canary strategy.
As a perspective, it would be natural to generalise our analysis to Z-estimators and use the insights to design new black-box attacks using influence functions.



\section*{Acknowledgements}
This work has been partially supported by the THIA ANR program “AI PhD@Lille”. D.
Basu acknowledges the Inria-Kyoto University Associate Team “RELIANT” for supporting
the project, the ANR JCJC for the REPUBLIC project (ANR-22-CE23-0003-01), and the
PEPR project FOUNDRY (ANR23-PEIA-0003). We thank Timothée Mathieu for the interesting conversations. We also thank Philippe Preux and Vianney Perchet for their supports.

\bibliography{references}
\bibliographystyle{apalike}

\section*{Checklist}

 \begin{enumerate}

 \item For all models and algorithms presented, check if you include:
 \begin{enumerate}
   \item A clear description of the mathematical setting, assumptions, algorithm, and/or model. Yes
   \item An analysis of the properties and complexity (time, space, sample size) of any algorithm. Yes
   \item (Optional) Anonymized source code, with specification of all dependencies, including external libraries. Yes
 \end{enumerate}

 \item For any theoretical claim, check if you include:
 \begin{enumerate}
   \item Statements of the full set of assumptions of all theoretical results. Yes
   \item Complete proofs of all theoretical results. Yes
   \item Clear explanations of any assumptions. Yes
 \end{enumerate}

 \item For all figures and tables that present empirical results, check if you include:
 \begin{enumerate}
   \item The code, data, and instructions needed to reproduce the main experimental results (either in the supplemental material or as a URL). Yes
   \item All the training details (e.g., data splits, hyperparameters, how they were chosen). Yes
     \item A clear definition of the specific measure or statistics and error bars (e.g., with respect to the random seed after running experiments multiple times). Yes
     \item A description of the computing infrastructure used. (e.g., type of GPUs, internal cluster, or cloud provider). Yes
 \end{enumerate}

 \item If you are using existing assets (e.g., code, data, models) or curating/releasing new assets, check if you include:
 \begin{enumerate}
   \item Citations of the creator If your work uses existing assets. Yes
   \item The license information of the assets, if applicable. Yes
   \item New assets either in the supplemental material or as a URL, if applicable. Not Applicable
   \item Information about consent from data providers/curators. Not Applicable
   \item Discussion of sensible content if applicable, e.g., personally identifiable information or offensive content. Not Applicable
 \end{enumerate}

 \item If you used crowdsourcing or conducted research with human subjects, check if you include:
 \begin{enumerate}
   \item The full text of instructions given to participants and screenshots. Not Applicable
   \item Descriptions of potential participant risks, with links to Institutional Review Board (IRB) approvals if applicable. Not Applicable
   \item The estimated hourly wage paid to participants and the total amount spent on participant compensation. Not Applicable
 \end{enumerate}

 \end{enumerate}


\newpage
\appendix
\onecolumn

\clearpage
\section{OUTLINE OF THE APPENDIX} \label{app:outline}

The appendices are organised as follows:
\begin{itemize}
    \item The connection between fixed-target and ``average-target'' games is discussed in Appendix~\ref{app:games}. The LR test of~\citep{sankararaman2009genomic} is revisited in Section~\ref{ssec:2:sankaraman} and the scalar product attack of~\citep{dwork2015robust} in Section~\ref{sec:dwork}. Also, a proof of Theorem~\ref{thm:opt_adv} is given in Section~\ref{app:proof_opt_adv}.
    \item All the missing proofs of Lemma~\ref{thm:asymp_lr}, Theorem~\ref{crl:per_leak}, Theorem~\ref{thm:lr_sub} and Theorem~\ref{thm:misspec} are given in Appendix~\ref{app:proof}. There, we introduce our main technical tools from asymptotic statistics. 
    \item Additional details on the white-box federated learning setting and discussions on related works are presented in Appendix~\ref{app:audt}.
    \item Extended experiments are presented in Appendix~\ref{app:exp_details}.
\end{itemize}

\section{TARGET-DEPENDENT V.S AVERAGE-TARGET MI GAMES}\label{app:games}
In this section, we present the average-target MI game of~\citep{yeom2018privacy} (Algorithm~\ref{alg:avg_game}), and connect it to the fixed-target MI game (Algorithm~\ref{alg:fix_game}). We also instantiate the average-target MI game with the empirical mean mechanism and present the two main attacks from this literature: the likelihood ratio (LR) test of~\citep{sankararaman2009genomic} and the scalar product attack of~\citep{dwork2015robust}. Finally, we provide a proof of Theorem~\ref{thm:opt_adv}.

\subsection{The Average-target MI Game}

An average-target MI game (Algorithm~\ref{alg:avg_game}, Experiment 1 of~\citep{yeom2018privacy}) is a game between two entities: the Target-crafter (Algorithm~\ref{alg:targ_crafter}) and the adversary $\mathcal{A}$. The MI game runs in multiple rounds. At each round $t$, the target crafter samples a tuple $(z^\star_t, o_t, b_t)$, where $z^\star_t$ is a target point, $o_t$ is an output of the mechanism and $b_t$ is the secret binary membership of $z^\star$. The adversary $\mathcal{A}$ takes as input $(z^\star_t, o_t)$ and outputs $\hat{b}_t$ trying to reconstruct $b_t$. In an average-target MI game, the adversary attacks a different target point $z^\star_t$ at each round $t$ of the game.

\subsection{Performance Metrics in the Average-target MI Game}

An adversary $\mathcal{A}$ is a (possibly randomised) function that takes as input the pair $(z^\star, o)$ generated by the Target crafter (Algorithm~\ref{alg:targ_crafter}) and outputs a guess $\hat{b} \sim \mathcal{A}(z^\star, o)$ trying to infer $b$. The performance of $\mathcal{A}$ can be assessed either with aggregated metrics like the accuracy and the advantage, or with test-based metrics like Type I error, Type II error, and trade-off functions.

\textit{The accuracy of $\mathcal{A}$} is defined as
$\mathrm{Acc}_n(\mathcal{A}) \defn \mathrm{Pr} [\mathcal{A}(z^\star, o) = b]$, where the probability is over the generation of $(z^\star, o, b)$ using Algorithm~\ref{alg:targ_crafter} with input ($\mathcal{M}, \mathcal{D}, n$). \textit{The advantage of an adversary} is the re-centred accuracy $ \mathrm{Adv}_n(\mathcal{A}) \defn 2 \mathrm{Acc}_n(\mathcal{A}) - 1$. We can also define two errors from the hypothesis testing formulation. \textit{The Type I error}, also called the False Positive Rate, is $\alpha_n(\mathcal{A}) \defn \mathrm{Pr} \left[\mathcal{A}( z^\star, o) = 1 \mid b = 0\right]$. \textit{The Type II error}, also called the False Negative Rate, is $\beta_n(\mathcal{A}) \defn \mathrm{Pr} \left[\mathcal{A}( z^\star o) = 0 \mid b = 1\right]$. \textit{The power of the test} is $1 - \beta_n(\mathcal{A})$. An adversary can use a threshold over a score function to conduct the MI games, i.e. for $\mathcal{A}_{s, \tau}(z^\star, o) \defn \ind{s(o; z^\star) > \tau}$ where $s$ is a scoring function and $\tau$ is a threshold. We want to design scores that maximise the power under a fixed significance level $\alpha$, i.e. $\mathrm{Pow}_n(s, \alpha) \defn \max_{\tau \in T_\alpha} 1 - \beta_n(\mathcal{A}_{s, \tau})$,
where $T_\alpha \defn \{\tau \in \real: \alpha_n(\mathcal{A}_{s, \tau}) \leq \alpha\}$. $\mathrm{Pow}_n(s, \alpha)$ is also called a trade-off function.

\subsection{Connection Between the Fixed-target and the Average-target MI Games}
An adversary $\mathcal{A}$ can be regarded as an infinite collection of target-dependent adversaries $(\mathcal{A}_{z^\star})_{z^\star \in \mathcal{Z}}$, where $\mathcal{A} (z^\star, o) = \mathcal{A}_{z^\star}(o)$.

The advantage (and accuracy) of $\mathcal{A}$ is the expected advantage (and accuracy) of $\mathcal{A}_{z^\star}$, when $z^\star \sim \mathcal{D}$, i.e. $\mathrm{Adv}_n(\mathcal{A}) = \expect_{z^\star \sim \mathcal{D}} \mathrm{Adv}_n(\mathcal{A}_{z^\star})$. Thus, studying the performance metrics of an adversary under average-target MI game hides the dependence on the target, by averaging out the performance on different target points. As a consequence, using the average-target MI games for auditing privacy can hurt the performance. A gain could directly be observed by running the same attack on an ``easy to attack'' fixed-target MI game.

The optimal attack for the average-target MI game is the same optimal LR attack for the target-dependent MI game:
\begin{align*}
    \ell_n(o; z^\star) &\triangleq \log \left ( \frac{p^\mathrm{in}_n(z^\star, o)}{p^\mathrm{out}_n( z^\star, o)} \right) = \log \left ( \frac{p^\mathrm{in}_n(o \mid z^\star) p^\mathrm{in}_n(z^\star)}{p^\mathrm{out}_n(o \mid z^\star) p^\mathrm{out}_n(z^\star)}  \right) = \log \left ( \frac{p^\mathrm{in}_n(o \mid z^\star)}{p^\mathrm{out}_n(o \mid z^\star)} \right)
\end{align*}
since $p^\mathrm{in}_n(z^\star) = p^\mathrm{out}_n(z^\star) = \mathcal{D}(z^\star)$.

Thus, the same LR attack optimally solves both fixed-target and average-target MI games. The only difference is in analysing the performance of the attack, i.e. whether we average out the effect of $z^\star \sim \mathcal{D}$ in average-target MI games or we keep the dependence on the target $z^\star$, by fixing $z^\star$ in the fixed-target MI games.\vspace{-.5em}

\begin{center}
\begin{minipage}{\textwidth}
\begin{minipage}{0.45\textwidth}
\setlength{\textfloatsep}{4pt}
\begin{algorithm}[H]
	\caption{The Target-crafter}\label{alg:targ_crafter}
	\begin{algorithmic}
		\State {\bfseries Input:} Mechanism $\mathcal{M}$, Data distribution $\mathcal{D}$,  $\#$samples $n$
		\State {\bfseries Output:} $(z^\star, o, b)$, where $z^\star \in \mathcal{Z}$, $o \in \mathcal{O}$ and $b \in \{0,1\}$
		\State Build a dataset $D \sim \bigotimes\nolimits_{i = 1}^n \mathcal{D}$
		\State Sample $b \sim \mathrm{Bernoulli}\left(\frac{1}{2} \right)$
		\If{$b = 0$} 
           \State Sample $z^\star \sim \mathcal{D}$ ind. of $D$
           \Else
           \State Sample $j \sim \mathcal{U}[n]$
           \State Assign $z^\star$ to be the $i$-th element of $D$
           \EndIf
           \State Sample $o \sim \mathcal{M}(D)$ 
           \State Return $(z^\star, o, b)$
	\end{algorithmic}
\end{algorithm}
\end{minipage}\hfill
\begin{minipage}{0.5\textwidth}
\setlength{\textfloatsep}{4pt}
\begin{algorithm}[H]
   \caption{Average-target MI Game}\label{alg:avg_game}
\begin{algorithmic}
   \State {\bfseries Input:} Mechanism $\mathcal{M}$, Data distribution $\mathcal{D}$,  $\#$samples $n$, Adversary $\mathcal{A}$, Rounds~$T$
   \State {\bfseries Output:} A list $L \in \{0, 1\}^T$, where $L_t = 1$ if the adversary succeeds at step $t$.
   \State Initialise a empty list $L$ of length $T$
   \For{t = 1, \dots, T}
   \State Sample $(z^\star_t, o_t, b_t) \sim$ Target-crafter($\mathcal{M}$, $\mathcal{D}$, n)
   \State Sample $\hat{b}_t \sim \mathcal{A}(z^\star_t, o_t)$
   \State Set $L_t \leftarrow \ind{b_t = \hat{b}_t}$
   \EndFor
   \State Return $L$\medskip
\end{algorithmic}
\end{algorithm}
\end{minipage}\vspace*{-.4em}
\end{minipage}
\end{center}

\subsection{The Likelihood Ratio Test for Bernoulli Empirical Mean Average-target MI Game}\label{ssec:2:sankaraman}
We revisit results from~\citep{sankararaman2009genomic}. In \citep{sankararaman2009genomic}, the MI game is instantiated with the empirical mean mechanism denoted by  $\mathcal{M}^\mathrm{emp}_n$. The mechanism $\mathcal{M}^\mathrm{emp}_n$ takes as input a dataset of size $n$ of $d$-dimensional
points, i.e. $D = \{Z_1, \dots, Z_n \} \in (\real^d)^n $, and outputs the exact empirical mean $\hat{\mu}_n \defn \frac{1}{n} \sum_{i=1}^n Z_i \in \real^d $.

\noindent \textbf{Assumptions on the data generating distribution and asymptotic regime.} \cite{sankararaman2009genomic} supposes that the data-generating distribution $\mathcal{D}$ is colon-wise independent Bernoulli distributions, i.e. $\mathcal{D} \defn \bigotimes_{j = 1}^{d} \mathrm{Bernoulli}(\mu_j)$, with $\mu_j \in [a, 1 - a]$ for some $a \in (0,1/2)$. We denote by $\rightsquigarrow$ convergence in distribution, i.e. A sequence of random variables $X_n \rightsquigarrow X$ if and only if $\mathrm{Pr}(X_n \leq x) \rightarrow \mathrm{Pr}(X \leq x)$ for all $x$. Let $\Phi$ represent the Cumulative Distribution Function (CDF) of the standard normal distribution, i.e. $\Phi(\alpha) \defn \frac{1}{\sqrt{2 \pi}} \int_{- \infty}^\alpha e^{- t^2/2} \dd t$ for $\alpha \in \real$. 
\cite{sankararaman2009genomic} studies the \textit{asymptotic behaviour of the LR test}, when both the sample size $n$ and the dimension $d$ tend to infinity such that $d/n = \tau > 0$. 

The analysis of \citep{sankararaman2009genomic} starts by showing that the exact formula of the LR score, at output $o = \hat{\mu}_n$ and target $z^\star$ is 
\begin{equation}\label{eq:lr_bern}
    \ell_n(\hat{\mu}_n, z^\star) =  \sum_{j = 1}^{d} z^\star_j \log\left ( \frac{\hat{\mu}_{n, j}}{\mu_j } \right ) + (1 - z^\star_j ) \log\left ( \frac{1 - \hat{\mu}_{n, j} }{1 - \mu_j} \right ).
\end{equation}

As $d$ and $n$ tend to infinity such that $d/n = \tau$, \cite{sankararaman2009genomic} shows that the LR score converges in distribution to 
\begin{equation}\label{eq:avg_h0}
    \ell_n(\hat{\mu}_n, z^\star) \rightsquigarrow^{H_0} \mathcal{N}\left(-\frac{1}{2}  \tau, \tau \right)
\end{equation}
under $H_0$
and converges to 
\begin{equation}\label{eq:avg_h1}
    \ell_n(\hat{\mu}_n, z^\star) \rightsquigarrow^{H_1} \mathcal{N}\left(\frac{1}{2}  \tau, \tau \right)
\end{equation}
under $H_1$.

The asymptotic distribution of the LR score helps to provide the asymptotic trade-off of the optimal LR attacker. Specifically, the main result (Section T2.1 in~\citep{sankararaman2009genomic}) is that
\begin{equation}\label{eq:avg_pow}
\Phi^{-1}(1 - \alpha) + \Phi^{-1}(1 - \beta) \approx \sqrt{d/n}
\end{equation}

where $\alpha$ is the optimal Type I error, $\beta$ is the optimal Type II error and $\Phi$ represents the Cumulative Distribution Function (CDF) of the standard normal distribution, i.e. $\Phi(\alpha) \defn \frac{1}{\sqrt{2 \pi}} \int_{- \infty}^\alpha e^{- t^2/2} \dd t$ for $\alpha \in \real$. This trade-off between $\alpha$ and $\beta$ shows that the MI game gets easier, as $d/n$ gets bigger.

\noindent \textbf{Connection to our results.} Our results in Section~\ref{sec:mia_mean} are a target-dependent version of the results of~\citep{sankararaman2009genomic}. Also, our analysis generalises the analysis of~\citep{sankararaman2009genomic} beyond Bernoulli distributions to any distribution with finite 4-th moment. Specifically, Lemma~\ref{thm:asymp_lr} shows that, in the target-dependent MI game, 

- Under $H_0$: $$\ell_n(\hat{\mu}_n; z^\star, \mu, C_\sigma )  \rightsquigarrow \mathcal{N}\left(-\frac{1}{2}  m^\star, m^\star \right)$$

- Under $H_1$: $$\ell_n(\hat{\mu}_n; z^\star, \mu, C_\sigma )  \rightsquigarrow \mathcal{N}\left(\frac{1}{2}  m^\star, m^\star \right)$$
where the convergence is a convergence in distribution, such that $d,n \rightarrow \infty$, while $d/n = \tau$. 

This is a target-dependent version of Equations~\eqref{eq:avg_h0} and~\eqref{eq:avg_h1}. Also, Equation~\eqref{eq:pow_emp} of Theorem~\ref{crl:per_leak} is a target-dependent version of Equation~\eqref{eq:avg_pow}. Thus, our results retrieve the average-case results by observing that, since $\expect_{z^\star \sim \mathcal{D}}\left[ \| z^\star - \mu \|^2_{C_\sigma^{-1}} \right] = d$, we have $\expect_{z^\star \sim \mathcal{D}}\left[ m^\star\right] = \lim_{n,d} \frac{d}{n} =\tau$.

\subsection{The Scalar Product Attack for Average-target MI game}\label{sec:dwork}

\cite{dwork2015robust} proposes a scalar product attack for tracing the empirical mean that thresholds over the score $$s^\mathrm{scal}(\hat{\mu}_n, z^\star; z^\mathrm{ref})  \defn (z^\star - z^\mathrm{ref})^T \hat{\mu}_n,$$ where $z^\mathrm{ref}$ is one reference point. The intuition behind this attack is to compare the target-output correlation $(z^\star)^T \hat\mu_n$ with a reference-output correlation $(z^\mathrm{ref}) ^T \hat\mu_n$. The analysis of~\citep{dwork2015robust} shows that with only one reference point $z^\mathrm{ref} \sim \mathcal{D}$, and even for noisy estimates of the mean, the attack is able to trace the data of some individuals in the regime $d \sim n^2$. 

Informally, the analysis of \citep{dwork2015robust} considers a scalar product attack, taking as input any \textit{$1/2$-accurate estimate} $\hat{\mu}$ of a dataset $D \in (\{-1, 1\}^d)^n $ of dimension $d = O\left(n^2 \log(1/\delta) \right)$, a target point $z^\star \in \{-1, 1\}^d$ and \textit{only a single reference sample} $z^\mathrm{ref} \in \{-1, 1\}^d$. The data-generating distribution is assumed to be chosen from a strong class of distributions $\mathcal{P}$. Then,

\begin{itemize}
    \item If $z^\star$ is IN the dataset $D$, then $$\mathrm{Pr} \left\{s^\mathrm{scal}(\hat{\mu}_n, z^\star; z^\mathrm{ref}) > \tau\right\} \geq \Omega(1/n). $$
    \item If $z^\star$ is Out of the dataset $D$, then $$\mathrm{Pr} \left\{s^\mathrm{scal}(\hat{\mu}_n, z^\star; z^\mathrm{ref}) < \tau\right\} \geq 1 - \delta. $$
\end{itemize}
for a carefully chosen threshold $\tau = O(\sqrt{d \log(1/\delta)})$.

The condition of $1/2$-accuracy is a weak condition, compared to the exact empirical mean attack \citep{sankararaman2009genomic}. The price of the weak notion of accuracy is that the attack is only guaranteed for $d \gtrapprox n^2$, whereas the exact attack of \citep{sankararaman2009genomic} is able to trace for $d \approx n$.

\subsection{Proof of Theorem~\ref{thm:opt_adv}}\label{app:proof_opt_adv}

\begin{reptheorem}{thm:opt_adv}[Characterising Optimal Adversaries]

(a) $\mathcal{A}_{\mathrm{Bayes}, z^\star}$ is the adversary that maximises the advantage (and accuracy), i.e. for any adversary $\mathcal{A}_{z^\star}$, we have that $\mathrm{Adv}_n(\mathcal{A}_{\mathrm{Bayes}, z^\star}) - \mathrm{Adv}_n(\mathcal{A}_{z^\star})  \geq 0$.

(b)  The advantage of the optimal Bayes adversary is  
    \begin{equation*}
        \mathrm{Adv}_n(\mathcal{A}_{\mathrm{Bayes}, z^\star}) = \TV{p^\mathrm{out}_n(. \mid z^\star)}{p^\mathrm{in}_n(. \mid z^\star)},
    \end{equation*}
    where $\mathrm{TV}$ is the total variation distance.
    
(c) For every $\alpha \in [0,1]$, the log-likelihood score is the score that maximises the power under significance $\alpha$, i.e. for any $\alpha$ and any score function $s$, $\mathrm{Pow}_n(\ell, \alpha, z^\star)  \geq \mathrm{Pow}_n(s, \alpha, z^\star)$.
\end{reptheorem}

\begin{proof}
To prove (a), we observe that the log-likelihood adversary with threshold $\tau = 0$ is exactly the Bayes optimal classifier. Specifically, since $Pr(b= 0) = Pr(b = 1)$, we can rewrite the log-likelihood as the $\ell_n(o; z^\star) = \log \left ( \frac{\mathrm{Pr}(b = 1 \mid o, z^\star)}{\mathrm{Pr}(b = 0 \mid o, z^\star)} \right)$. Thus thresholding with $0$ gives exactly the Bayes optimal classifier, which has the highest accuracy among all classifiers.

For (b), we observe that
\begin{align*}
    \mathrm{Adv}_n(\mathcal{A}_{\mathrm{Bayes}, z^\star}) &=  \mathrm{Pr}(\ell_n(o; z^\star) \leq 0 \mid b = 0) - \mathrm{Pr}(\ell_n(o; z^\star) \leq 0 \mid b = 1)\\
    &= p^\mathrm{out}_n(O \mid z^\star) - p^\mathrm{in}_n(O \mid z^\star)
\end{align*}
where $O \defn \{o \in \mathcal{O}: p^\mathrm{out}_n(o \mid z^\star) \geq p^\mathrm{in}_n(o \mid z^\star) \}$.

The last equation is exactly the definition of the $\TV{p^\mathrm{out}_n(. \mid z^\star)}{p^\mathrm{in}_n(. \mid z^\star)}$.

Finally, (c) is a direct consequence of the Neyman-Pearson lemma.
\end{proof}
\clearpage
\section{MISSING PROOFS}\label{app:proof}
First, we present some classic background results from asymptotic statics, the Edgeworth asymptotic expansion (Theorem~\ref{thm:edgeworth} and Linderberg-Feller theorem~\ref{thm:lind_feller}. Then, we provide the missing proofs of Theorem~\ref{thm:asymp_lr}, Corollary~\ref{crl:per_leak}, Theorem~\ref{thm:lr_sub} and Theorem~\ref{thm:misspec}.

\subsection{Background on Asymptotic Statics}
A sequence of random variables $X_n$ is said to converge in distribution to a random variable $X$, i.e. $X_n \rightsquigarrow_n X$ if $\mathrm{Pr}(X_n \leq x) \rightarrow \mathrm{Pr}(X \leq x)$, for every $x$ at which the limit distribution $x \rightarrow \mathrm{Pr}(X \leq x)$ is continuous.

A sequence of random variables $X_n$ is said to converge in probability to $X$ if for every  $\epsilon > 0$, $\mathrm{Pr}(\| X_n - X \| > \epsilon) \rightarrow 0$, denoted by $X_n \rightarrow^\mathrm{P} X$.

We recall that continuous mappings preserve both convergences. 

A sequence of random variable $(X_n)$ is called uniformly tight if: for every $\epsilon$, $\exists M>0$, such that $\sup_{n} \mathrm{Pr}(\| X_n\| > M) < \epsilon$.

\begin{theorem}[Prohorov's theorem]\label{thm:proh} Let $X_n$ be a random vector in $\real^d$.

1. If $X_n \rightsquigarrow X$, for some $X$, then the sequence $(X_n)$ is uniformly tight;

2. If $(X_n)$ is uniformly tight, then there exits a subsequence with $X_{n_j} \rightsquigarrow X$ as $j \rightarrow \infty$ for some $X$.
\end{theorem}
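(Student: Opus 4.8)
The two parts are essentially independent, so I would handle them separately; only the second is real work.

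\emph{Part 1 (convergence in distribution $\Rightarrow$ uniform tightness).} The plan is to exploit that the limit law of $X$ is a single Borel probability measure on $\real^d$, hence already tight: given $\epsilon > 0$, pick $M_0 > 0$ so that the open ball $U \defn \{x : \|x\| < M_0\}$ has $\prob(X \in U) > 1 - \epsilon$. Since $U$ is open, the portmanteau characterisation of $X_n \rightsquigarrow X$ gives $\liminf_n \prob(X_n \in U) \geq \prob(X \in U) > 1 - \epsilon$, so there is $N$ with $\prob(\|X_n\| \geq M_0) < \epsilon$ for all $n \geq N$. Each of the finitely many laws of $X_1, \dots, X_{N-1}$ is individually tight, so there exist $M_1, \dots, M_{N-1}$ with $\prob(\|X_i\| > M_i) < \epsilon$; taking $M \defn \max\{M_0, M_1, \dots, M_{N-1}\}$ yields $\sup_n \prob(\|X_n\| > M) < \epsilon$. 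This direction is routine and does not use anything beyond the definitions.

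\emph{Part 2 (uniform tightness $\Rightarrow$ a weakly convergent subsequence).} This amounts to sequential compactness of the space of probability laws on $\real^d$, i.e. a multivariate Helly selection argument followed by a ``no escape of mass'' step. First I pass to the (componentwise) cumulative distribution functions $F_n(x) \defn \prob(X_n \leq x)$. Fix a countable dense set $Q \subseteq \real^d$ (say $Q \defn \Q^d$). For each $q \in Q$ the numbers $F_n(q)$ lie in $[0,1]$, so a Cantor diagonal argument extracts one subsequence $(n_j)$ along which $F_{n_j}(q) \to G(q)$ for every $q \in Q$ simultaneously. Define a right-continuous regularisation $F(x) \defn \inf\{G(q) : q \in Q,\ q > x\}$ and check that $F$ is nondecreasing, right-continuous, and satisfies the rectangle-nonnegativity condition, so that it is the distribution function of some sub-probability measure on $\real^d$. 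A standard squeezing argument, sandwiching $F_{n_j}(x)$ between $G(q')$ and $G(q'')$ for rationals $q' < x < q''$ taken close to $x$, then gives $F_{n_j}(x) \to F(x)$ at every continuity point $x$ of $F$.

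The crucial remaining point is that $F$ is an honest probability distribution function, i.e. that mass does not leak to infinity along $(n_j)$, and this is exactly where uniform tightness enters: given $\epsilon > 0$ choose $M$ with $\sup_n \prob(\|X_n\| > M) < \epsilon$, pick continuity points of $F$ in each coordinate outside $[-M, M]$, and pass to the limit in the inclusion--exclusion formula for the mass $F$ assigns to that box to conclude that $F$ has total mass $\geq 1 - \epsilon$; letting $\epsilon \downarrow 0$ shows $F$ is a genuine CDF, defining a random vector $X$ with $X_{n_j} \rightsquigarrow X$. Without tightness this last step genuinely fails, which is the whole content of the theorem. \emph{Main obstacle:} Part 2 -- specifically the diagonal extraction over a dense set, constructing $F$ and verifying it is a bona fide \emph{multivariate} distribution function (the rectangle bookkeeping is a bit more delicate than in dimension one), and then using tightness to exclude escaping mass; Part 1 and the identification of the limit are bookkeeping.
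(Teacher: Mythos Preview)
Your proposal is a correct outline of the standard proof of Prohorov's theorem. However, the paper does not actually prove this statement: it is listed in the appendix under ``Background on Asymptotic Statistics'' as a classical result quoted without proof, to be used as a tool elsewhere (specifically, to justify that $e^{(\delta^{\mathrm{in}}_{n,j})^2/2} = O_p(1)$ and similar bounds in the proof of Lemma~\ref{lem:asymp_exp}). So there is no ``paper's own proof'' to compare against; your argument is the textbook one (portmanteau for Part~1, Helly selection plus a tightness/no-escape-of-mass step for Part~2) and would be an appropriate reference proof if one were needed.
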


We also recall the stochastic $o_p$ and $O_p$ notation for random variables.

\begin{definition}[Stochastic $o_p$ and $O_p$]
We say that $X_n = o_p(R_n)$ if $X_n = Y_n R_n$ and $Y_n \rightarrow^\mathrm{P} 0$

We say that $X_n = O_p(R_n)$ if $X_n = Y_n R_n$ and $Y_n = O_p(1)$ where $O_p(1)$ denotes a sequence that is uniformly tight (also called bounded in probability).
\end{definition}

The following lemma is used to get Taylor expansions of random variables.

\begin{lemma}[Lemma 2.12 in~\cite{van2000asymptotic}]\label{lem:tayl_exp}

Let $R$ be a function on $\real^k$, such that $R(0) = 0$. Let $X_n = o_p(1)$.

Then, for every $p > 0$,

(a) if $R(h) = o(\|h\|^p)$ as $h \rightarrow 0$, then $R(X_n) = o_p(\|X_n\|^p)$;

(b) if $R(h) = O(\| h \|^p)$ as $h \rightarrow 0$, then $R(X_n) = O_p(\|X_n\|^p)$.
\end{lemma}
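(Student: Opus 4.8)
The plan is to reduce both claims to the behaviour of the single scalar function $g(h) \defn R(h)/\|h\|^p$ for $h \neq 0$, with the convention $g(0) \defn 0$. This function records exactly the local behaviour of $R$ near the origin that the two hypotheses control, and it satisfies the identity $R(x) = g(x)\,\|x\|^p$ for every $x$ — including $x = 0$, where both sides vanish because $R(0) = 0$. Measurability of $g(X_n)$ is immediate once $R$ is Borel (as it is for a Taylor remainder), so $g(X_n)$ is a genuine random variable.

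For part (a), the hypothesis $R(h) = o(\|h\|^p)$ as $h \to 0$ is precisely the statement that $g$ is continuous at $0$ with $g(0) = 0$. Since $X_n \rightarrow^\mathrm{P} 0$ and continuous mappings preserve convergence in probability (as recalled above; only continuity at the limit point $0$ is needed), we get $g(X_n) \rightarrow^\mathrm{P} 0$. Writing $R(X_n) = g(X_n)\,\|X_n\|^p$ and setting $Y_n \defn g(X_n)$, $R_n \defn \|X_n\|^p$, this is exactly $R(X_n) = Y_n R_n$ with $Y_n \rightarrow^\mathrm{P} 0$, i.e. $R(X_n) = o_p(\|X_n\|^p)$.

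For part (b), the hypothesis $R(h) = O(\|h\|^p)$ yields $\delta > 0$ and $M < \infty$ with $|R(h)| \le M\|h\|^p$ for all $\|h\| \le \delta$, equivalently $|g(h)| \le M$ for $0 < \|h\| \le \delta$ (and $g(0) = 0 \le M$). I would then show $(g(X_n))_n$ is uniformly tight. Fix $\epsilon > 0$. Since $X_n \rightarrow^\mathrm{P} 0$, there is $N$ with $\mathrm{Pr}(\|X_n\| > \delta) \le \epsilon$ for all $n \ge N$; on the complementary event $|g(X_n)| \le M$, so $\mathrm{Pr}(|g(X_n)| > M) \le \epsilon$ for $n \ge N$. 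For each of the finitely many indices $n < N$, $g(X_n)$ is an almost surely finite random variable, hence there is $M_n < \infty$ with $\mathrm{Pr}(|g(X_n)| > M_n) \le \epsilon$. Taking $M_\epsilon \defn \max\{M, M_1, \dots, M_{N-1}\}$ gives $\sup_n \mathrm{Pr}(|g(X_n)| > M_\epsilon) \le \epsilon$, so $g(X_n) = O_p(1)$, and therefore $R(X_n) = g(X_n)\,\|X_n\|^p = O_p(\|X_n\|^p)$.

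There is no real obstacle here: the argument is essentially the continuous mapping theorem together with a split of the probability space according to whether $\|X_n\| \le \delta$. The only points needing a little care are (i) that $o(\cdot)$ and $O(\cdot)$ are purely local hypotheses at the origin, so the finitely many early terms of $(X_n)$ must be handled separately, absorbed via the fact that any real-valued random variable is tight; and (ii) the degenerate event $\{X_n = 0\}$, accommodated by the conventions $g(0) = 0$ and $R(0) = 0$. A vector-valued $R$ changes nothing: replace $|g(\cdot)|$ by $\|g(\cdot)\|$ throughout.
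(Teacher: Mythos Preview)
Your proof is correct. The paper does not supply its own proof of this lemma; it is stated as background and attributed to van der Vaart (Lemma~2.12 in \cite{van2000asymptotic}). Your argument --- defining $g(h) = R(h)/\|h\|^p$ with $g(0)=0$, invoking the continuous mapping theorem for part~(a), and a neighbourhood-splitting tightness argument for part~(b) --- is essentially the standard proof given in that reference.
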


The Lindeberg-Feller theorem is the simplest extension of the classical central
limit theorem (CLT) and is applicable to independent (and not necessarily identically distributed) random variables with finite variances.

\begin{theorem}[Lindeberg-Feller CLT]\label{thm:lind_feller}
Let $Y_{n,1}, \dots Y_{n, d_n}$ be independent random vectors with finite variances such that 

1. for every $\epsilon >0$, $\sum_{j = 1}^{d_n} \expect \left[ \| Y_{n,i} \|^2  \ind{\| Y_{n,i} \| > \epsilon} \right] \rightarrow 0$, 

2. $\sum_{j = 1}^{d_n} \expect \left[ Y_{n,i}  \right] \rightarrow \mu$,

3. $\sum_{j = 1}^{d_n} \mathrm{Cov} \left[ Y_{n,i}  \right] \rightarrow \Sigma$.

Then $\sum_{j= 1}^{d_n} Y_{n,j} \rightsquigarrow \mathcal{N} \left ( \mu, \Sigma \right)$
\end{theorem}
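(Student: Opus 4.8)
\textit{Proof proposal.} This is the classical Lindeberg--Feller central limit theorem, so the plan is to reduce the multivariate statement to a one-dimensional one via the Cram\'er--Wold device and then run the standard characteristic-function argument. Fix $t\in\real^d$ (the case $t=0$ being trivial) and set $X_{n,j}\defn\langle t,Y_{n,j}\rangle$. These are real, independent, with finite variances, and the three hypotheses transfer: $\sum_j\E[X_{n,j}]=\langle t,\sum_j\E[Y_{n,j}]\rangle\to\langle t,\mu\rangle$, $\sum_j\Var[X_{n,j}]=t^\top\big(\sum_j\mathrm{Cov}[Y_{n,j}]\big)t\to t^\top\Sigma t$, and, since $|X_{n,j}|\le\|t\|\,\|Y_{n,j}\|$, for every $\epsilon>0$
\[
\sum_j\E\!\left[X_{n,j}^2\,\ind{|X_{n,j}|>\epsilon}\right]\le\|t\|^2\sum_j\E\!\left[\|Y_{n,j}\|^2\,\ind{\|Y_{n,j}\|>\epsilon/\|t\|}\right]\longrightarrow 0 .
\]
So it suffices to prove the one-dimensional theorem; applying it with limiting mean $\langle t,\mu\rangle$ and variance $t^\top\Sigma t$ for each $t$ and invoking Cram\'er--Wold then gives $\sum_j Y_{n,j}\rightsquigarrow\mathcal{N}(\mu,\Sigma)$.

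For the real-valued case I would first center. Writing $c_{n,j}\defn\E[X_{n,j}]$, the Lindeberg condition forces uniform asymptotic negligibility, $\max_j\Var[X_{n,j}]\to0$ (from $\E[X_{n,j}^2]\le\epsilon^2+\sum_k\E[X_{n,k}^2\,\ind{|X_{n,k}|>\epsilon}]$), hence $\max_j|c_{n,j}|\to0$ and $\sum_j c_{n,j}^2\to0$, and one checks the Lindeberg condition survives centering (the truncation level shifts by a negligible amount). Since $\sum_j c_{n,j}\to m\defn\langle t,\mu\rangle$ is deterministic, Slutsky's lemma reduces the claim to $\sum_j\bar X_{n,j}\rightsquigarrow\mathcal{N}(0,s^2)$, where $\bar X_{n,j}\defn X_{n,j}-c_{n,j}$, $\sigma_{n,j}^2\defn\Var[X_{n,j}]$, $s_n^2\defn\sum_j\sigma_{n,j}^2\to s^2\defn t^\top\Sigma t$, $\max_j\sigma_{n,j}^2\to0$, and the Lindeberg condition holds for the (now centered) $\bar X_{n,j}$.

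The core step is the characteristic-function estimate. With $\varphi_{n,j}(u)\defn\E[e^{iu\bar X_{n,j}}]$, independence gives $\E[\exp(iu\sum_j\bar X_{n,j})]=\prod_j\varphi_{n,j}(u)$, so I want to show this tends to $e^{-u^2 s^2/2}$ for every $u$. Using $|e^{ix}-1-ix+\tfrac{x^2}{2}|\le\min\{|x|^3/6,\,x^2\}$ together with $\E[\bar X_{n,j}]=0$ and splitting at $|\bar X_{n,j}|\le\epsilon$,
\[
\Big|\varphi_{n,j}(u)-\big(1-\tfrac{u^2\sigma_{n,j}^2}{2}\big)\Big|\le\tfrac{|u|^3\epsilon}{6}\,\sigma_{n,j}^2+u^2\,\E\!\left[\bar X_{n,j}^2\,\ind{|\bar X_{n,j}|>\epsilon}\right],
\]
so summing over $j$, letting $n\to\infty$ and then $\epsilon\downarrow0$, gives $\sum_j\big|\varphi_{n,j}(u)-(1-\tfrac{u^2\sigma_{n,j}^2}{2})\big|\to0$. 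The telescoping bound $|\prod_j a_j-\prod_j b_j|\le\sum_j|a_j-b_j|$ for $|a_j|,|b_j|\le1$ (valid for large $n$ since $\max_j\sigma_{n,j}^2\to0$) then shows $\prod_j\varphi_{n,j}(u)$ has the same limit as $\prod_j(1-\tfrac{u^2\sigma_{n,j}^2}{2})$, and a $\log(1-x)=-x+O(x^2)$ expansion with $\max_j\sigma_{n,j}^2\to0$ and $s_n^2\to s^2$ identifies that limit as $e^{-u^2 s^2/2}$. L\'evy's continuity theorem then yields $\sum_j\bar X_{n,j}\rightsquigarrow\mathcal{N}(0,s^2)$, and undoing the centering and Cram\'er--Wold reductions finishes the proof.

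The hard part is purely the two-scale bookkeeping in the displayed bound: splitting the third-order Taylor remainder of $e^{ix}$ into an ``$\epsilon$ times total variance'' piece and a ``Lindeberg tail'' piece, and driving the double limit ($n\to\infty$, then $\epsilon\downarrow0$) in the correct order; everything else (Cram\'er--Wold, the Slutsky centering step, the product telescoping, the $\log(1-x)$ expansion, L\'evy continuity) is routine. As an alternative avoiding characteristic functions, one could run Lindeberg's original replacement argument --- introduce an independent Gaussian array matching the means and variances of the $\bar X_{n,j}$, telescope the substitution one summand at a time, and bound each third-order Taylor remainder of a test function $f\in C_b^3$ by the same Lindeberg-tail quantity --- where the obstacle is identical.
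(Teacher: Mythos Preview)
The paper does not prove this theorem: it is quoted in the appendix as a classical background result (alongside Prohorov's theorem and the Edgeworth expansion) and then invoked as a black box in the proofs of Lemma~\ref{thm:asymp_lr} and Theorem~\ref{thm:lr_sub}. So there is no ``paper's own proof'' to compare against. Your proposal is the standard textbook argument---Cram\'er--Wold reduction to the scalar case, centering via Slutsky, and then the characteristic-function estimate with the third-order Taylor remainder split into an $\epsilon$-times-variance piece and a Lindeberg-tail piece---and it is correct. The alternative Lindeberg-replacement route you mention at the end would work equally well.

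One small caveat worth flagging: the statement here writes the Lindeberg condition on the \emph{uncentered} second moments $\E[\|Y_{n,j}\|^2\ind{\|Y_{n,j}\|>\epsilon}]$ rather than on $\E[\|Y_{n,j}-\E Y_{n,j}\|^2\ind{\cdots}]$, which is slightly nonstandard. Your claim that ``$\sum_j c_{n,j}^2\to0$'' does not follow from the three hypotheses alone when $d_n\to\infty$ (your inequality $\E[X_{n,j}^2]\le\epsilon^2+\cdots$ only controls the maximum, not the sum). In the paper's actual applications each $\E[Y_{n,j}]=O(1/n)$ with $d_n\asymp n$, so $\sum_j c_{n,j}^2=O(1/n)\to0$ holds trivially and the issue never arises; but if you want the reduction to centered variables to be airtight in full generality you would need either the centered form of the Lindeberg condition or an extra hypothesis such as $\sum_j\|\E Y_{n,j}\|^2\to0$.
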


Finally, the last result from asymptotic statistics is the Edgeworth asymptotic expansion in the CLT.

\begin{theorem}[Edgeworth expansion, Theorem 15 of Chapter 7 in~\cite{petrov2012sums}]\label{thm:edgeworth}
    Let $Z_1, \dots, Z_n$ sampled i.i.d from $\mathcal{D}$, where $\mathcal{D}$ has a finite absolute moment of $k$-th order, i.e. $ \expect[| X_1|^k] < \infty$. Let $d_n$ be the density of the centred normalised mean $\frac{1}{\sigma \sqrt{n}} \sum_{i = 1}^n X_i$, then
    
    \begin{equation*}
        d_n(x) = \frac{1}{\sqrt{2 \pi}} e^{- x^2/2} + \sum_{\nu = 1}^{k - 2} \frac{q_\nu(x)}{n^{\nu/2}} + o \left ( \frac{1}{n^{(k - 2)/2}} \right)
    \end{equation*}
    
uniformly in x, where $q_v(x)$ are related to the Hermite Polynomials.
\end{theorem}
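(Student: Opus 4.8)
The plan is to prove the expansion by Fourier analysis, following the classical route for Edgeworth-type refinements of the CLT. First I would reduce to a standardised summand: since the mean is centred, set $Y_i \defn (X_i - \mu)/\sigma$ so that $\E[Y_1] = 0$ and $\mathrm{Var}(Y_1) = 1$, and write the normalised mean as $S_n = n^{-1/2}\sum_{i=1}^n Y_i$. Because the $Y_i$ are i.i.d., the characteristic function of $S_n$ factorises as $\psi_n(t) = \phi(t/\sqrt{n})^n$, where $\phi(u) \defn \E[e^{iuY_1}]$. Since $\mathcal{D}$ has a finite absolute moment of order $k$, the function $\phi$ is $k$-times differentiable at the origin, so its logarithm admits the cumulant Taylor expansion $\log\phi(u) = -\tfrac{u^2}{2} + \sum_{j=3}^{k}\frac{\kappa_j}{j!}(iu)^j + o(|u|^k)$ as $u\to 0$, where the $\kappa_j$ are the cumulants of $Y_1$.

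Next I would substitute $u = t/\sqrt{n}$ and multiply by $n$, obtaining $\log\psi_n(t) = -\tfrac{t^2}{2} + \sum_{j=3}^{k}\frac{\kappa_j}{j!}\frac{(it)^j}{n^{(j-2)/2}} + o(n^{-(k-2)/2})$, so the non-Gaussian part carries only strictly positive powers of $n^{-1/2}$. Exponentiating and expanding $\exp(\cdot)$ as a formal power series in $n^{-1/2}$, then collecting like powers, yields $\psi_n(t) = e^{-t^2/2}\left(1 + \sum_{\nu=1}^{k-2}\frac{P_\nu(it)}{n^{\nu/2}}\right) + (\text{remainder})$, where each $P_\nu$ is an explicit polynomial in $(it)$ whose coefficients are polynomials in $\kappa_3,\dots,\kappa_{\nu+2}$. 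Applying the inversion formula $d_n(x) = \frac{1}{2\pi}\int e^{-itx}\psi_n(t)\dd t$ termwise and using that the inverse transform of $(it)^m e^{-t^2/2}$ equals $(-1)^m$ times the $m$-th derivative of the standard Gaussian density, namely $\mathrm{He}_m(x)\,\tfrac{1}{\sqrt{2\pi}}e^{-x^2/2}$, identifies the correction densities as $q_\nu(x) = P_\nu(-\tfrac{d}{dx})\big[\tfrac{1}{\sqrt{2\pi}}e^{-x^2/2}\big]$, i.e. $\tfrac{1}{\sqrt{2\pi}}e^{-x^2/2}$ times a polynomial built from Hermite polynomials, exactly as claimed.

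The technical core, and the main obstacle, is inverting the remainder \emph{uniformly in $x$}. I would split the inversion integral at $|t| = \delta\sqrt{n}$ for a small fixed $\delta$. On the central region $|t|\le\delta\sqrt{n}$, I would bound the Taylor remainder of $\log\phi$ by $C|u|^{k} + o(|u|^k)$ and combine it with the elementary inequality $|e^a - \sum_{j<m}a^j/j!|\le |a|^m e^{|a|}$ to show that the difference between $\psi_n$ and the truncated expansion, weighted by the surviving Gaussian factor $e^{-t^2/4}$, integrates to $o(n^{-(k-2)/2})$; uniformity comes for free because $|e^{-itx}| = 1$. On the tail $|t|>\delta\sqrt{n}$ I would invoke the regularity hypothesis under which $d_n$ exists: for some integer $\nu_0$ one has $\int|\phi|^{\nu_0}\dd u < \infty$ and $\sup_{|u|\ge\delta}|\phi(u)| = \rho < 1$, whence $|\psi_n(t)|\le \rho^{\,n-\nu_0}|\phi(t/\sqrt{n})|^{\nu_0}$ and the tail contributes at most $\rho^{\,n-\nu_0}\sqrt{n}\int|\phi|^{\nu_0}$, decaying faster than any power of $n^{-1}$, again uniformly in $x$. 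Summing the two estimates gives the stated $o(n^{-(k-2)/2})$ error. The delicate bookkeeping lies entirely in the central region: matching the order of the Taylor cut-off to the claimed rate $n^{-(k-2)/2}$ while keeping every constant independent of $x$.
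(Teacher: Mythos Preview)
The paper does not prove this theorem; it is quoted as a background result from Petrov's book (Theorem 15, Chapter 7) and used as a black box in the subsequent analysis of the LR score. So there is no ``paper's own proof'' to compare against.

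That said, your sketch is the classical Fourier-analytic route and is essentially correct: factorise the characteristic function, Taylor-expand $\log\phi$ via cumulants to order $k$, exponentiate and collect powers of $n^{-1/2}$, then invert term by term using the Hermite-polynomial identity for $(it)^m e^{-t^2/2}$. Your treatment of the remainder---splitting at $|t|=\delta\sqrt{n}$, controlling the central piece with the Taylor remainder under the Gaussian envelope, and killing the tail via $\sup_{|u|\ge\delta}|\phi(u)|<1$ together with $\phi\in L^{\nu_0}$---is exactly how Petrov proceeds. You are also right to flag that the density version requires an integrability hypothesis on $\phi$ (so that $d_n$ exists and the inversion formula applies); the paper's statement leaves this implicit by simply positing that $d_n$ is a density, but it is indeed part of the full hypothesis in Petrov's theorem.
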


\subsection{Proof of the Asymptotic Distribution of the LR Scores,  Lemma~\ref{thm:asymp_lr}}\label{sec:prf_thm_mean}

\begin{lemma}[Asymptotic distribution of the LR score]\label{thm:asymp_lr}



Using an Edgeworth asymptotic expansion of the LR score and a Lindeberg-Feller central limit theorem, we show that

- Under $H_0$: $$\ell_n(\hat{\mu}_n; z^\star, \mu, C_\sigma )  \rightsquigarrow \mathcal{N}\left(-\frac{1}{2}  m^\star, m^\star \right)$$

- Under $H_1$: $$\ell_n(\hat{\mu}_n; z^\star, \mu, C_\sigma )  \rightsquigarrow \mathcal{N}\left(\frac{1}{2}  m^\star, m^\star \right)$$

where the convergence is a convergence in distribution, such that $d,n \rightarrow \infty$, while $d/n = \tau$. 
We call $m^\star$ the \emph{leakage score of target datum $z^\star$}.
\end{lemma}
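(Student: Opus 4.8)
\textbf{Proof plan for Lemma~\ref{thm:asymp_lr}.}

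The plan is to compute the exact LR score for the empirical mean, Taylor-expand it using the Edgeworth expansion, and then identify the limiting Gaussian via the Lindeberg--Feller CLT. First I would write down $p^\mathrm{out}_n$ and $p^\mathrm{in}_n$ explicitly. Under $H_0$, the output is $\hat\mu_n = \tfrac1n\sum_{i=1}^n Z_i$ with all $Z_i \sim \mathcal{D}$; under $H_1$, one coordinate-block index $j$ is replaced by $z^\star$, so conditionally on $j$ the output is $\tfrac1n\big(z^\star + \sum_{i\neq j} Z_i\big)$. Because $\mathcal{D}$ is column-wise independent, both densities factorise over the $d$ coordinates, and for coordinate $k$ the LR contribution compares the density of $\tfrac1n\sum_i (Z_i)_k$ with the density of $\tfrac{1}{n}\big(z^\star_k + \sum_{i\neq j}(Z_i)_k\big)$, i.e.\ a shift of the empirical-mean density of $n-1$ samples by $z^\star_k/n$. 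I would rescale each coordinate to a centred normalised mean $S_{n,k} \defn \tfrac{1}{\sigma_k\sqrt{n}}\sum_i\big((Z_i)_k - \mu_k\big)$ and write $\ell_n = \sum_{k=1}^d \log\big(g_{n-1,k}(\,\cdot\,)/g_{n,k}(\hat\mu_{n,k})\big)$ in terms of the densities $g$ supplied by Theorem~\ref{thm:edgeworth}.

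Next I would apply the Edgeworth expansion (Theorem~\ref{thm:edgeworth}) to each $g_{n,k}$: with a finite $(4+\delta)$-th moment we get $g_{n,k}(x) = \varphi(x)\big(1 + n^{-1/2}q_{1,k}(x) + n^{-1}q_{2,k}(x)\big) + o(n^{-1})$ where $\varphi$ is the standard normal density and the $q$'s are Hermite-polynomial corrections involving the skewness and kurtosis of $\mathcal{D}_k$. Taking logs and expanding the ratio, the leading term is $-\tfrac{1}{\sigma_k\sqrt{n}}(z^\star_k - \mu_k)\cdot x \cdot \varphi'(x)/\varphi(x)$ evaluated appropriately, which after simplification becomes $\tfrac{1}{n}(z^\star_k-\mu_k)\,\sigma_k^{-2}\,(n\hat\mu_{n,k} - n\mu_k) $-type contributions; collecting across coordinates one obtains, up to an $o_p(1)$ remainder, the quadratic-plus-linear form
\begin{equation*}
\ell_n = (z^\star-\mu)^T C_\sigma^{-1}(\hat\mu_n - \mu) - \tfrac{1}{2n}\|z^\star-\mu\|_{C_\sigma^{-1}}^2 + o_p(1),
\end{equation*}
which is exactly Equation~\eqref{eq:asymp_lr_real}. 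Here Lemma~\ref{lem:tayl_exp} is the right tool to turn the deterministic $o(\cdot)$ bounds from the Edgeworth expansion into stochastic $o_p(\cdot)$ bounds on $\ell_n$, and one must be careful that the accumulated error over $d = \tau n$ coordinates stays $o_p(1)$ --- this is where the $(4+\delta)$-th moment (rather than merely fourth moment) is used, to control the Edgeworth remainder summed over $\Theta(n)$ terms.

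Finally I would identify the limiting distribution. The linear term $(z^\star-\mu)^T C_\sigma^{-1}(\hat\mu_n-\mu) = \tfrac1n\sum_{i=1}^n (z^\star-\mu)^T C_\sigma^{-1}(Z_i-\mu)$ (under $H_0$) is a sum of independent, mean-zero, non-identically-distributed scalars; I would set $Y_{n,i} \defn \tfrac1n (z^\star-\mu)^T C_\sigma^{-1}(Z_i-\mu)$ and verify the three Lindeberg--Feller conditions of Theorem~\ref{thm:lind_feller}: the means sum to $0$, the variances sum to $\tfrac1n\|z^\star-\mu\|_{C_\sigma^{-1}}^2 \to m^\star$, and the Lindeberg condition follows from the uniform moment bound. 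This gives $\ell_n \rightsquigarrow \mathcal{N}(-\tfrac12 m^\star, m^\star)$ under $H_0$. Under $H_1$ the same computation applies with the target $z^\star$ substituted into position $j$; the shift of the mean by $z^\star_k/n$ in each coordinate produces an extra deterministic $+\,\tfrac1n\|z^\star-\mu\|_{C_\sigma^{-1}}^2 = m^\star + o(1)$ in the mean, yielding $\mathcal{N}(+\tfrac12 m^\star, m^\star)$; alternatively this follows from the $H_0$ result by the standard change-of-measure / Le Cam third-lemma argument, since the two hypotheses are mutually contiguous. The main obstacle I anticipate is the bookkeeping of the Edgeworth remainders: ensuring that the coordinate-wise $o(n^{-1})$ errors, when summed over $d \asymp n$ coordinates and after the logarithm, still vanish in probability --- this requires uniformity of the expansion across coordinates and the slightly-more-than-fourth moment assumption, and is the only genuinely delicate part of the argument.
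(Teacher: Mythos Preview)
Your plan is correct and follows the same skeleton as the paper: rewrite the LR coordinate-wise in terms of the density of the centred normalised mean, Edgeworth-expand to order $o(1/n)$, control the accumulated remainder over $d\asymp n$ coordinates, and finish with Lindeberg--Feller. Two minor deviations are worth flagging. First, the paper applies Lindeberg--Feller \emph{across coordinates}: it keeps the per-coordinate summand $Y_{n,j}=\tfrac12\big((\delta^{\mathrm{out}}_{n,j})^2-(\delta^{\mathrm{in}}_{n,j})^2\big)+\tfrac{\lambda_3(\mu_j-z^\star_j)}{n\sigma_j}R_{n,j}$ and computes its mean and variance directly under each hypothesis, so the $\tau/2$ coming from $-\tfrac12\log(1-1/n)$ cancels against the $-\tfrac1{2n}$ inside $\expect[Y_{n,j}]$. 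You instead collapse first to the scalar-product form~\eqref{eq:asymp_lr_real} and apply Lindeberg--Feller \emph{across samples} $i=1,\dots,n$; this is equally valid (columns and rows are both independent), but note that your summands are in fact i.i.d.\ for each fixed $n$ --- it is a triangular array because $d=d_n$ grows, not because the terms are non-identically distributed --- and collapsing to~\eqref{eq:asymp_lr_real} hides a small extra step, namely that the quadratic piece $\sum_j\tfrac1{2n}\big(1-(\delta^{\mathrm{out}}_{n,j})^2\big)$ is $o_p(1)$. Second, your Le~Cam third-lemma shortcut for $H_1$ is correct and cleaner than the paper, which simply reruns the moment computations under $H_1$.
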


\begin{proof}
We have that $\hat{\mu}_n = \frac{1}{n} \sum_{i = 1}^n Z_i$, where $Z_i = (Z_{i,j})_{j = 1}^{d_n} \in \real^{d_n}$ and $Z_i \sim^{\mathrm{i.i.d}} \mathcal{D} = \bigotimes_{j = 1}^{d_n} \mathcal{D}_j$.
    
    Each distribution $\mathcal{D}_j$ has mean $\mu_j$ and variance $\sigma_j^2$.

    We denote $\hat{\mu}_n = (\hat{\mu}_{n, j})_{j = 1}^{d_n}$, where $\hat{\mu}_{n, j} = \frac{1}{n} \sum_{i = 1}^n Z_{i, j}$.
    
    \underline{\textbf{Step 1: Rewriting the LR score}}

    Let $j \in [1, d_n]$.

    \underline{\textit{Under $H_0$,}} we can re-write 
    \begin{equation*}
        \hat{\mu}_{n, j} = \mu_j + \frac{\sigma_j}{\sqrt{n}} \hat{Z}_{n, j},
    \end{equation*}
     where 
     \begin{align*}
         \hat{Z}_{n, j} &\defn \sqrt{n} \left ( \frac{\hat{\mu}_{n, j} - \mu_j}{\sigma} \right)\\
         &= \frac{1}{\sqrt{n}} \sum_{i = 1}^n \frac{Z_{i, j} - \mu_j}{\sigma_j}.
     \end{align*}
     Since $(Z_{i,j})_{i = 1}^n$ are i.i.d from $\mathcal{D}_j$, using the CLT, $\hat{Z}_{n, j} \rightsquigarrow_{n \rightarrow \infty} \mathcal{N}(0,1)$.

     Let $d_{n, j}$ be the density function of $\hat{Z}_{n,j}$.

     The density $p^\mathrm{out}_{n,j}$ of $\hat{\mu}_{n, j}$ under $H_0$ can be written as
    \begin{equation*}
        p^\mathrm{out}_{n,j}(x; z^\star_j, \mu_j, \sigma_j) = \frac{\sqrt{n}}{\sigma_j} d_{n, j}\left [ \frac{\sqrt{n}}{\sigma_j} (x - \mu_j) \right]
    \end{equation*}

    \underline{\textit{Under $H_1$,}} we can re-write 
    \begin{align*}
        \hat{\mu}_{n, j} &= \frac{1}{n} z^\star_j + \frac{n - 1}{n} \left( \mu_j + \frac{\sigma_j}{\sqrt{n - 1}} \hat{Z}_{n - 1, j} \right)\\
        &= \mu_j + \frac{1}{n} \left (z^\star_{j} - \mu_j \right) + \frac{\sigma_j \sqrt{n - 1}}{n} \hat{Z}_{n - 1, j}
    \end{align*}


     The density $p^\mathrm{in}_{n,j}$ of $\hat{\mu}_{n, j}$ under $H_1$ can be written as
    \begin{equation*}
        p^\mathrm{in}_{n,j}(x; z^\star_j, \mu_j, \sigma_j) = \frac{n}{\sigma_j \sqrt{n - 1}} d_{n - 1, j}\left [ \frac{n}{\sigma_j \sqrt{n - 1}} \left (x - \mu_j - \frac{1}{n} \left (z^\star_{j} - \mu_j \right)  \right) \right]
    \end{equation*}

    The LR score is 
    \begin{align*}
        \ell_n(\hat{\mu}_n; z^\star, \mu, C_\sigma) &= \sum_{j = 1}^{d_n} \log\left ( \frac{p^\mathrm{in}_{n,j}(\hat{\mu}_{n,j}; z^\star_j, \mu_j, \sigma_j) }{p^\mathrm{out}_{n,j}(\hat{\mu}_{n,j}; z^\star_j, \mu_j, \sigma_j) } \right ) \notag\\
        &= \sum_{j = 1}^{d_n} -\frac{1}{2} \log \left ( 1 - \frac{1}{n} \right) + \log\left ( \frac{d_{n - 1,j} \left( \delta^\mathrm{in}_{n,j} \right)}{d_{n,j} \left( \delta^\mathrm{out}_{n,j} \right)} \right )
    \end{align*}
    where 
    \begin{align*}
        \delta^\mathrm{out}_{n,j} &\defn \frac{\sqrt{n}}{\sigma_j} \left (\hat{\mu}_{n,j} - \mu_j  \right ) \\
        \delta^\mathrm{in}_{n,j} &\defn \frac{n}{\sqrt{n - 1}\sigma_j} \left (\hat{\mu}_{n,j} - \mu_j + \frac{1}{n} \left (\mu_j - z^\star_j \right)  \right )
    \end{align*}    
\underline{\textbf{Step 2: Asymptotic expansion of the LR score}}

Using Lemma~\ref{lem:asymp_exp}, we have
\begin{align*}
    \log\left ( \frac{d_{n - 1,j} \left( \delta^\mathrm{in}_{n,j} \right)}{d_{n,j} \left( \delta^\mathrm{out}_{n,j} \right)} \right ) &= \frac{1}{2} \left( \left(\delta^\mathrm{out}_{n,j} \right)^2 -   \left(\delta^\mathrm{in}_{n,j} \right)^2\right) + \frac{\lambda_3 \left (\mu_j - z^\star_j \right)}{n \sigma_j} R_{n,j} + o_{p}\left ( \frac{1}{n} \right)
\end{align*}
Let $Y_{n, j} \defn \frac{1}{2} \left( \left(\delta^\mathrm{out}_{n,j} \right)^2 -   \left(\delta^\mathrm{in}_{n,j} \right)^2\right) + \frac{\lambda_3 \left (\mu_j - z^\star_j \right)}{n \sigma_j} R_{n,j}$.

We remark that we need an expansion up to $o_p\left(\frac{1}{n} \right)$, since $d_n/ n = \tau + o(1)$.

Thus 

\begin{align}\label{eq:asymp_lr}
    \ell_n(\hat{\mu}_n; z^\star, \mu, C_\sigma) &= \sum_{j = 1}^{d_n} -\frac{1}{2} \log \left ( 1 - \frac{1}{n} \right) + \log\left ( \frac{d_{n - 1,j} \left( \delta^\mathrm{in}_{n,j} \right)}{d_{n,j} \left( \delta^\mathrm{out}_{n,j} \right)} \right ) \notag\\
    &= \sum_{j = 1}^{d_n} \left(\frac{1}{2 n} +  Y_{n, j} + o_{p}\left ( \frac{1}{n} \right) \right) \notag\\
    &= \frac{\tau}{2} + o_p(1) + \sum_{j = 1}^{d_n} Y_{n, j} 
\end{align}

because $\frac{d_n}{n} = \tau + o(1)$.

\underline{\textbf{Step3: Concluding using the Lindeberg-Feller CLT
}}

\underline{\textit{Under $H_0$:}}

Using Lemma~\ref{lem:exp_comp}, $\expect_0 [ Y_n,j ] = - \frac{1}{2n} - \frac{(z_j^\star - \mu_j)^2}{2 n \sigma_j^2} + o\left (\frac{1}{n} \right)$ and $V_0 [ Y_n,j ] = \frac{(z_j^\star - \mu_j)^2}{n \sigma_j^2} + o\left (\frac{1}{n} \right)$.

Since $\sum_{j = 1}^{d_n} \frac{(z_j^\star - \mu_j)^2}{n \sigma_j^2}  = \frac{\| z^\star - \mu \|^2_{C_\sigma^{-1}}}{n}$, we get:
\begin{itemize}
    \item $\sum_{j = 1}^{d_n} \expect_0 [ Y_n,j ] \rightarrow -\frac{\tau}{2} - \frac{m^\star}{2}$

    \item $\sum_{j = 1}^{d_n} V_0 [ Y_n,j ] \rightarrow m^\star$
\end{itemize}

Using Lemma~\ref{lem:lind_cond}, we have that $Y_{n,j}$ verify the Lindeberg-Feller condition, i.e.

\begin{equation*}
    \sum_{j = 1}^{d_n} \expect_0 \left [ Y_{n, j}^2 \ind{|Y_{n,j} | > \epsilon }  \right] \rightarrow 0
\end{equation*}

for every $\epsilon >0$.

We conclude using the Lindeberg-Feller CLT (Theorem~\ref{thm:lind_feller}) that $\sum_{j = 1}^{d_n} Y_{n,j} \rightsquigarrow \mathcal{N}\left ( -\frac{\tau}{2} - \frac{m^\star}{2},  m^\star\right) $, and thus

\begin{equation*}
    \ell_n(\hat{\mu}_n; z^\star, \mu, C_\sigma) 
    \rightsquigarrow \mathcal{N}\left ( - \frac{m^\star}{2},  m^\star\right) 
\end{equation*}

Similarly, \underline{\textit{Under $H_1$:}}

Using Lemma~\ref{lem:exp_comp}, $\expect_1 [ Y_n,j ] = - \frac{1}{2n} + \frac{(z_j^\star - \mu_j)^2}{2 n \sigma_j^2} + o\left (\frac{1}{n} \right)$ and $V_1 [ Y_n,j ] = \frac{(z_j^\star - \mu_j)^2}{n \sigma_j^2} + o\left (\frac{1}{n} \right)$.

We get:
\begin{itemize}
    \item $\sum_{j = 1}^{d_n} \expect_1 [ Y_{n,j} ] \rightarrow -\frac{\tau}{2} + \frac{m^\star}{2}$

    \item $\sum_{j = 1}^{d_n} V_1 [ Y_{n,j} ] \rightarrow m^\star$
\end{itemize}

Using Lemma~\ref{lem:lind_cond}, we have that $Y_{n,j}$ verify the Lindeberg-Feller condition, i.e.

\begin{equation*}
    \sum_{j = 1}^{d_n} \expect_1 \left [ Y_{n, j}^2 \ind{|Y_{n,j} | > \epsilon }  \right] \rightarrow 0
\end{equation*}

for every $\epsilon >0$.

We conclude using the Lindeberg-Feller CLT (Theorem~\ref{thm:lind_feller}) that $\sum_{j = 1}^{d_n} Y_{n,j} \rightsquigarrow \mathcal{N}\left ( -\frac{\tau}{2} + \frac{m^\star}{2},  m^\star\right) $, and thus

\begin{equation*}
    \ell_n(\hat{\mu}_n; z^\star, \mu, C_\sigma) 
    \rightsquigarrow \mathcal{N}\left (\frac{m^\star}{2},  m^\star\right) 
\end{equation*}

\end{proof}

\begin{remark}~\label{rmk:asymp}
    Expanding $\frac{1}{2} \left( \left(\delta^\mathrm{out}_{n,j} \right)^2 -   \left(\delta^\mathrm{in}_{n,j} \right)^2\right)$, taking the sum from $j = 1$ until $d_n$, we get that
    \begin{align*}
    \ell_n(\hat{\mu}_n; z^\star, \mu, C_\sigma) \sim &(z^\star - \mu)^T C_\sigma^{-1}(\hat{\mu}_n - \mu) 
    - \frac{1}{2n} \|z^\star - \mu\|^2_{C_\sigma^{-1}}
\end{align*}

Let $X_n \defn (z^\star - \mu)^T C_\sigma^{-1}(\hat{\mu}_n - \mu) 
    - \frac{1}{2n} \|z^\star - \mu\|^2_{C_\sigma^{-1}}$.

This asymptotic representation of the LR test is useful to get directly the means and variances of the limit distribution of the LR test. Specifically, since $ \expect_0( \hat{\mu}_n) = \mu $, $ \expect_1( \hat{\mu}_n) = \frac{n - 1}{n} \mu + \frac{1}{n} z^\star $ and $\Var_0(\hat{\mu}_n) = \Var_1(\hat{\mu}_n) = C_\sigma$, we get that 
\begin{align*}
    \expect_0 \left [ X_n \right] &=  - \frac{1}{2n} \|z^\star - \mu\|^2_{C_\sigma^{-1}}\\
    \expect_1 \left [X_n \right] &=  \frac{1}{2n} \|z^\star - \mu\|^2_{C_\sigma^{-1}}\\
    \Var_0 \left [ X_n\right] &= \Var_1 \left [ X_n\right]
    =  \frac{1}{n} \|z^\star - \mu\|^2_{C_\sigma^{-1}}
\end{align*}

Taking the limit as $n \rightarrow \infty$ retrieves the results of Theorem~\ref{thm:asymp_lr}.
    
\end{remark}

\subsection{The Three Technical Lemmas Used in the Proof of Lemma~\ref{thm:asymp_lr}}

\begin{lemma}{Asymptotic expansion of the LR score}\label{lem:asymp_exp}

    
    We show that
    \begin{align*}
        \log\left ( \frac{d_{n - 1,j} \left( \delta^\mathrm{in}_{n,j} \right)}{d_{n,j} \left( \delta^\mathrm{out}_{n,j} \right)} \right ) &= \frac{1}{2} \left( \left(\delta^\mathrm{out}_{n,j} \right)^2 -   \left(\delta^\mathrm{in}_{n,j} \right)^2\right) + \frac{\lambda_3 \left (\mu_j - z^\star_j \right)}{n \sigma_j} R_{n,j} + o_{p}\left ( \frac{1}{n} \right) \\
    \end{align*}
    
    where $\delta^\mathrm{out}_{n,j} \defn \frac{\sqrt{n}}{\sigma_j} \left (\hat{\mu}_{n,j} - \mu_j  \right )$, $\delta^\mathrm{in}_{n,j} \defn \frac{n}{\sqrt{n - 1}\sigma_j} \left (\hat{\mu}_{n,j} - \mu_j + \frac{1}{n} \left (\mu_j - z^\star_j \right)  \right )$,
    
    and $R_{n, j} \defn \left(\delta^\mathrm{out}_{n,j} \right)^2 + \delta^\mathrm{out}_{n,j} \delta^\mathrm{in}_{n,j} + \left(\delta^\mathrm{in}_{n,j} \right)^2  - 3$.
\end{lemma}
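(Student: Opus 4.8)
The plan is to expand the densities $d_{n-1,j}$ and $d_{n,j}$ using the Edgeworth expansion (Theorem~\ref{thm:edgeworth}) applied at order $k=4$, which requires the finite $(4+\delta)$-th moment assumption, and then take the logarithm of the ratio. Write $d_m(x) = \phi(x)\left(1 + \frac{q_1(x)}{\sqrt m} + \frac{q_2(x)}{m} + o(1/m)\right)$ uniformly in $x$, where $\phi$ is the standard normal density and $q_1$ is (up to the third standardised cumulant $\lambda_3$) a multiple of the third Hermite polynomial; $q_2$ contributes only at order $1/m$. Since $d_n/n = \tau + o(1)$, we must retain the expansion up to $o_p(1/n)$ but can drop $O_p(1/n^{3/2})$ terms. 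Thus
\[
\log d_m(x) = -\frac{x^2}{2} - \frac12\log(2\pi) + \log\!\left(1 + \frac{q_1(x)}{\sqrt m} + O(1/m)\right) = -\frac{x^2}{2} - \frac12\log(2\pi) + \frac{q_1(x)}{\sqrt m} + O_p(1/m),
\]
where the $q_1(x)^2/(2m)$ cross term from $\log(1+u)$ is absorbed into the $O_p(1/m)$ remainder (here the arguments $x = \delta^\mathrm{out}_{n,j}$ or $\delta^\mathrm{in}_{n,j}$ are $O_p(1)$ by the CLT, so Lemma~\ref{lem:tayl_exp} justifies turning the deterministic $o(\cdot)$ and $O(\cdot)$ Edgeworth remainders into stochastic $o_p(\cdot)$ and $O_p(\cdot)$ ones).

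Subtracting, the leading $-\frac12\log(2\pi)$ terms cancel and we obtain
\[
\log\!\left(\frac{d_{n-1,j}(\delta^\mathrm{in}_{n,j})}{d_{n,j}(\delta^\mathrm{out}_{n,j})}\right) = \frac12\left((\delta^\mathrm{out}_{n,j})^2 - (\delta^\mathrm{in}_{n,j})^2\right) + \frac{q_1(\delta^\mathrm{in}_{n,j})}{\sqrt{n-1}} - \frac{q_1(\delta^\mathrm{out}_{n,j})}{\sqrt n} + O_p(1/n).
\]
The next step is to show that the two $q_1$ terms combine into $\frac{\lambda_3(\mu_j - z^\star_j)}{n\sigma_j} R_{n,j} + o_p(1/n)$. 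Since $q_1(x) = -\frac{\lambda_3}{6\sqrt{\cdot}}$-type factor times $H_3(x) = x^3 - 3x$ (the precise normalisation depending on the convention, but crucially the \emph{same} for both densities since $\mathcal D_j$ is fixed), and since $\delta^\mathrm{in}_{n,j} = \delta^\mathrm{out}_{n,j} + \frac{\mu_j - z^\star_j}{\sqrt{n-1}\,\sigma_j} + (\text{a }1/n\text{-order correction from }\sqrt{n/(n-1)})$, I would Taylor-expand $q_1(\delta^\mathrm{in}_{n,j})/\sqrt{n-1}$ around $\delta^\mathrm{out}_{n,j}$: the difference of the shifts is $O(1/\sqrt n)$, so $\frac{q_1(\delta^\mathrm{in})}{\sqrt{n-1}} - \frac{q_1(\delta^\mathrm{out})}{\sqrt n} = \frac{1}{\sqrt n}\,q_1'(\delta^\mathrm{out})\cdot\frac{\mu_j - z^\star_j}{\sqrt n\,\sigma_j} + o_p(1/n)$, and $q_1'(x) \propto H_3'(x) = 3(x^2-1)$; combined with replacing $\delta^\mathrm{out}$ by a symmetrised version involving $\delta^\mathrm{in}$ to match the claimed $R_{n,j} = (\delta^\mathrm{out})^2 + \delta^\mathrm{out}\delta^\mathrm{in} + (\delta^\mathrm{in})^2 - 3$ (which is what naturally appears if one instead computes $q_1(\delta^\mathrm{in}) - q_1(\delta^\mathrm{out})$ directly as a difference of cubes, $a^3 - b^3 = (a-b)(a^2+ab+b^2)$, minus $3(a-b)$), one collects the constant $\lambda_3$ and the factor $\frac{\mu_j - z^\star_j}{n\sigma_j}$. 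The $o_p(1/n)$ remainder also swallows the discrepancy between the $1/\sqrt n$ and $1/\sqrt{n-1}$ prefactors.

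The main obstacle I expect is bookkeeping the precise normalisation of the Edgeworth polynomial $q_1$ and the exact form of the shift $\delta^\mathrm{in}_{n,j} - \delta^\mathrm{out}_{n,j}$ so that the cross/difference-of-cubes structure lands \emph{exactly} on $R_{n,j} = (\delta^\mathrm{out}_{n,j})^2 + \delta^\mathrm{out}_{n,j}\delta^\mathrm{in}_{n,j} + (\delta^\mathrm{in}_{n,j})^2 - 3$ rather than some other cubic; getting the symmetric form right is a matter of deciding whether to expand $q_1(\delta^\mathrm{in})$ around $\delta^\mathrm{out}$ or to keep both arguments, and I would keep both arguments and use $a^3-b^3 = (a-b)(a^2+ab+b^2)$ since that produces $R_{n,j}$ verbatim. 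A secondary technical point is justifying the uniform-in-$x$ control: the Edgeworth remainder is $o(1/n)$ uniformly in $x$, but the arguments are random, so one invokes uniform tightness of $\delta^\mathrm{out}_{n,j}, \delta^\mathrm{in}_{n,j}$ (Prohorov, Theorem~\ref{thm:proh}, via the CLT) together with Lemma~\ref{lem:tayl_exp} to pass to $o_p$; care is needed because the bound must hold for the whole array $j=1,\dots,d_n$ simultaneously, but since each $\mathcal D_j$ shares the common moment bound and the claim is only about the individual summand (the summation and Lindeberg-Feller verification being deferred to Lemmas~\ref{lem:exp_comp} and~\ref{lem:lind_cond}), it suffices to prove the expansion termwise with remainder $o_p(1/n)$.
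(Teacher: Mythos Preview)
Your proposal is essentially the paper's approach: Edgeworth expansion to order $k=4$, a second-order $\log(1+u)$ expansion, and the difference-of-cubes identity applied to $h_1(x)=\lambda_3(x^3-3x)$ to produce $R_{n,j}$ verbatim, with Prohorov/Lemma~\ref{lem:tayl_exp} handling the random arguments.

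One bookkeeping point you should tighten: after subtracting the two log-densities you leave a ``$+\,O_p(1/n)$'' remainder, but the lemma asserts $o_p(1/n)$. The individual contributions $q_2(\delta)/n$ and $q_1(\delta)^2/(2n)$ are each only $O_p(1/n)$, so lumping them into a single $O_p(1/n)$ before subtracting loses the claim. The paper keeps them explicit through the subtraction and then argues (its Step~4) that because $\delta^{\mathrm{in}}_{n,j}-\delta^{\mathrm{out}}_{n,j}=O_p(n^{-1/2})$ and $q_2,\,q_1^2$ are polynomials, the \emph{differences} $q_2(\delta^{\mathrm{in}})-q_2(\delta^{\mathrm{out}})$ and $q_1^2(\delta^{\mathrm{in}})-q_1^2(\delta^{\mathrm{out}})$ are $O_p(n^{-1/2})$, hence $O_p(n^{-3/2})=o_p(1/n)$ after dividing by $n$. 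With that extra line your argument goes through.
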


\begin{proof}
    The proof starts by using the Edgeworth expansion of $d_{n,j}$ up to the order $k = 4$. Then, the final LR expansion can be found using Taylor expansions of the logarithm, exponential and polynomial function to the 2nd order. Here, we present the exact derivations for completeness.

    \underline{\textbf{Step1: Asymptotic expansion of $d_{n,j}$}}

    Using Theorem~\ref{thm:edgeworth}, for $k = 4$, we get that

    \begin{equation}\label{eq:expan_dnj}
        d_{n,j}(x) = \frac{1}{\sqrt{2 \pi}} e^{- \frac{x^2}{2}}\left( 1 + \frac{h_1(x)}{\sqrt{n}} + \frac{h_2(x)}{n}\right) + o\left ( \frac{1}{n} \right)
    \end{equation}

    uniformly in $x$, where

    \begin{align*}
        h_1(x) &\defn \lambda_3 \left(x^3 - 3x \right)\\
        h_2(x) &\defn \frac{\lambda_3^2}{72} \left(x^6 - 15 x^4 + 45 x^2 - 15 \right) + \frac{\lambda_4}{24} \left (x^4 - 6x^2 + 3\right)
    \end{align*}

    and $\lambda_k \defn \frac{\gamma_{j,k}}{\sigma_j^k}$ where $\gamma_{j,k}$ is the $k$-order cumulant of distribution $\mathcal{D}_j$.
    
    We do an expansion up to the $4$-th order, since we need an expansion up to $o_p\left(\frac{1}{n} \right)$.

    \underline{\textbf{Step2: Asymptotic expansion of $d_{n,j} \left( \delta^\mathrm{out}_{n,j} \right)$} and $d_{n -1,j} \left( \delta^\mathrm{in}_{n,j} \right)$}

    Since the convergence in uniform in Equation~\ref{eq:expan_dnj}, we get that
    \begin{align}\label{eq:dnj_out_1}
        d_{n,j} \left( \delta^\mathrm{out}_{n,j} \right) &= \frac{1}{\sqrt{2 \pi}} e^{- \frac{(\delta^\mathrm{out}_{n,j})^2}{2}}\left( 1 + \frac{h_1(\delta^\mathrm{out}_{n,j})}{\sqrt{n}} + \frac{h_2(\delta^\mathrm{out}_{n,j})}{n} + \right) + o_{p}\left ( \frac{1}{n} \right) \notag\\
        &= \frac{1}{\sqrt{2 \pi}} e^{- \frac{(\delta^\mathrm{out}_{n,j})^2}{2}}\left( 1 + \frac{h_1(\delta^\mathrm{out}_{n,j})}{\sqrt{n}} + \frac{h_2(\delta^\mathrm{out}_{n,j})}{n} + e^{\frac{(\delta^\mathrm{out}_{n,j})^2}{2}} o_{p}\left ( \frac{1}{n} \right) \right)
    \end{align}
and
\begin{align}\label{eq:dnj_in_1}
    d_{n - 1,j} \left( \delta^\mathrm{in}_{n,j} \right) &= \frac{1}{\sqrt{2 \pi}} e^{- \frac{(\delta^\mathrm{in}_{n,j})^2}{2}}\left( 1 + \frac{h_1(\delta^\mathrm{in}_{n,j})}{\sqrt{n - 1}} + \frac{h_2(\delta^\mathrm{in}_{n,j})}{n - 1}\right) + o_{p}\left ( \frac{1}{n} \right) \notag\\
    &= \frac{1}{\sqrt{2 \pi}} e^{- \frac{(\delta^\mathrm{in}_{n,j})^2}{2}}\left( 1 + \frac{h_1(\delta^\mathrm{in}_{n,j})}{\sqrt{n - 1}} + \frac{h_2(\delta^\mathrm{in}_{n,j})}{n - 1} + e^{\frac{(\delta^\mathrm{in}_{n,j})^2}{2}} o_{p}\left ( \frac{1}{n} \right) \right)
\end{align}

Under both $H_0$ and $H_1$, $\delta^\mathrm{in}_{n,j} \rightsquigarrow_{n \rightarrow \infty} Z \defn \mathcal{N}(0, 1)$ and $\delta^\mathrm{out}_{n,j} \rightsquigarrow_{n \rightarrow \infty} Z' \defn \mathcal{N}(0, 1)$.

Since $x \rightarrow e^{x^2/2}$ is a continuous function, then $e^{\frac{(\delta^\mathrm{in}_{n,j})^2}{2}} \rightsquigarrow_{n \rightarrow \infty} e^{Z^2/2} $ and $e^{\frac{(\delta^\mathrm{out}_{n,j})^2}{2}} \rightsquigarrow_{n \rightarrow \infty} e^{(Z')^2/2}$.

Using Prohorov's theorem (Thm~\ref{thm:proh}), we get that $e^{\frac{(\delta^\mathrm{in}_{n,j})^2}{2}} = O_p(1)$ and $e^{\frac{(\delta^\mathrm{out}_{n,j})^2}{2}} = O_p(1)$.

This means that 
\begin{equation}\label{eq:exp_in_big_O}
    e^{\frac{(\delta^\mathrm{in}_{n,j})^2}{2}} o_{p}\left ( \frac{1}{n} \right) = O_p(1) o_{p}\left ( \frac{1}{n} \right) = o_{p}\left ( \frac{1}{n} \right)
\end{equation}
 and 
 \begin{equation}\label{eq:exp_out_big_O}
     e^{\frac{(\delta^\mathrm{out}_{n,j})^2}{2}} o_{p}\left ( \frac{1}{n} \right) = O_{p}\left ( \frac{1}{n} \right) o_{p}\left ( \frac{1}{n} \right) = o_{p}\left ( \frac{1}{n} \right).
 \end{equation}

On the other hand, both $h_1$ and $h_2$ are continuous functions (polynomials functions), this similarly gives that $h_1(\delta^\mathrm{in}_{n,j}) = O_p(1)$ and $h_2(\delta^\mathrm{in}_{n,j}) = O_p(1)$.

Combined with the fact that $\frac{1}{\sqrt{n- 1}}= \frac{1}{\sqrt{n}} + o\left (\frac{1}{n} \right ) $ and $\frac{1}{n- 1} = \frac{1}{n} + o\left (\frac{1}{n} \right )$, we get that

\begin{equation}\label{eq:h_1}
    \frac{h_1(\delta^\mathrm{in}_{n,j})}{\sqrt{n- 1}}= \frac{h_1(\delta^\mathrm{in}_{n,j})}{\sqrt{n}} + O_p(1)o\left (\frac{1}{n} \right )  = \frac{h_1(\delta^\mathrm{in}_{n,j})}{\sqrt{n}} + o_p\left (\frac{1}{n} \right )  
\end{equation}
and
\begin{equation}\label{eq:h_2}
    \frac{h_2(\delta^\mathrm{in}_{n,j})}{n - 1}= \frac{h_1(\delta^\mathrm{in}_{n,j})}{n} + O_p(1)o\left (\frac{1}{n} \right )  = \frac{h_1(\delta^\mathrm{in}_{n,j})}{n} + o_p\left (\frac{1}{n} \right )  
\end{equation}

Plugging Eq.~\ref{eq:exp_out_big_O} in Eq.~\ref{eq:dnj_out_1} gives

\begin{equation}\label{eq:dnj_out_2}
    d_{n,j} \left( \delta^\mathrm{out}_{n,j} \right) = \frac{1}{\sqrt{2 \pi}} e^{- \frac{(\delta^\mathrm{out}_{n,j})^2}{2}}\left( 1 + \frac{h_1(\delta^\mathrm{out}_{n,j})}{\sqrt{n}} + \frac{h_2(\delta^\mathrm{out}_{n,j})}{n} + o_{p}\left ( \frac{1}{n} \right) \right)
\end{equation}

and Plugging Eq.~\ref{eq:exp_in_big_O}, Eq.\ref{eq:h_1} and Eq.~\ref{eq:h_2} in Eq.~\ref{eq:dnj_in_1} gives

 \begin{equation}\label{eq:dnj_in_2}
    d_{n-1,j} \left( \delta^\mathrm{in}_{n,j} \right) = \frac{1}{\sqrt{2 \pi}} e^{- \frac{(\delta^\mathrm{in}_{n,j})^2}{2}}\left( 1 + \frac{h_1(\delta^\mathrm{in}_{n,j})}{\sqrt{n}} + \frac{h_2(\delta^\mathrm{in}_{n,j})}{n} + o_{p}\left ( \frac{1}{n} \right) \right)
\end{equation}

\underline{\textbf{Step3: Asymptotic expansion of the logarithm}}

Applying the logarithm to Eq.\ref{eq:dnj_out_2}, we get that
\begin{align}\label{eq:log_dnj_out_1}
    \log\left (d_{n,j} \left( \delta^\mathrm{out}_{n,j} \right) \right) &= - \frac{1}{2} \log(2\pi) - \frac{(\delta^\mathrm{out}_{n,j})^2}{2} +  \log\left( 1 + \frac{h_1(\delta^\mathrm{out}_{n,j})}{\sqrt{n}} + \frac{h_2(\delta^\mathrm{out}_{n,j})}{n} + o_{p}\left ( \frac{1}{n} \right) \right) \notag\\
    &= - \frac{1}{2} \log(2\pi) - \frac{(\delta^\mathrm{out}_{n,j})^2}{2} +  \log\left( 1 + H_n,j \right) 
\end{align}
where $H_{n,j} \defn \frac{h_1(\delta^\mathrm{out}_{n,j})}{\sqrt{n}} + \frac{h_2(\delta^\mathrm{out}_{n,j})}{n} + o_{p}\left ( \frac{1}{n} \right)$.

Let $R(h) = \log(1 + h) - (h - \frac{h^2}{2})$, for $h\in \real^+$.

We have that $R(0) = 0$ and $R(h) = o \left( h^2\right)$ as $h \rightarrow 0$.

On the other hand, $H_{n,j} \rightarrow_{n \rightarrow \infty}^P 0$. Specifically $H_{n, j} = O_p\left (\frac{1}{\sqrt{n}} \right )$.

Using Lemma~\ref{lem:tayl_exp}, we get that
$R(H_n,j) = o_p\left((H_n,j)^2\right) =   o_p\left(\frac{1}{n}\right)$.

All in all, we have shown
\begin{align*}
    \log(1 + H_{n,j}) &= H_{n,j} - \frac{H_{n,j}^2}{2} + o_p\left ( \frac{1}{n} \right)\\
    &= \frac{h_1(\delta^\mathrm{out}_{n,j})}{\sqrt{n}} + \frac{1}{n } \left (\frac{h_2(\delta^\mathrm{out}_{n,j})}{n} - \frac{1}{2} \left (h_1(\delta^\mathrm{out}_{n,j})  \right) ^2\right) + o_{p}\left ( \frac{1}{n} \right)
\end{align*}

Plugging this in Eq.~\ref{eq:log_dnj_out_1}, we get that
\begin{equation*}
    \log\left (d_{n,j} \left( \delta^\mathrm{out}_{n,j} \right) \right) = - \frac{1}{2} \log(2\pi) - \frac{(\delta^\mathrm{out}_{n,j})^2}{2} + \frac{h_1(\delta^\mathrm{out}_{n,j})}{\sqrt{n}} + \frac{1}{n } \left (\frac{h_2(\delta^\mathrm{out}_{n,j})}{n} - \frac{1}{2} \left (h_1(\delta^\mathrm{out}_{n,j})  \right) ^2\right) + o_{p}\left ( \frac{1}{n} \right)
\end{equation*}

Similarly, we can show that
\begin{equation*}
    \log\left (d_{n - 1,j} \left( \delta^\mathrm{in}_{n,j} \right) \right) = - \frac{1}{2} \log(2\pi) - \frac{(\delta^\mathrm{in}_{n,j})^2}{2} + \frac{h_1(\delta^\mathrm{in}_{n,j})}{\sqrt{n}} + \frac{1}{n } \left (\frac{h_2(\delta^\mathrm{in}_{n,j})}{n} - \frac{1}{2} \left (h_1(\delta^\mathrm{in}_{n,j})  \right) ^2\right) + o_{p}\left ( \frac{1}{n} \right)
\end{equation*}

\underline{\textbf{Step4: Asymptotic expansion of polynomials in $\delta^\mathrm{in}_{n,j}$ and $\delta^\mathrm{out}_{n,j}$}}

First, we have

\begin{align*}
    \delta^\mathrm{in}_{n,j} &= \sqrt{\frac{n}{n - 1}} \delta^\mathrm{out}_{n,j} + \frac{\mu_j - z^\star_j}{\sigma_j \sqrt{n - 1}}\\
    &= \delta^\mathrm{out}_{n,j} + \frac{\mu_j - z^\star_j}{\sigma_j \sqrt{n}}  + o_{p}\left ( \frac{1}{\sqrt{n}} \right)
\end{align*}

Which gives 

\begin{align*}
    h_1\left(\delta^\mathrm{out}_{n,j} \right) - h_1\left(\delta^\mathrm{in}_{n,j} \right) &= \lambda_3 \left( \left(\delta^\mathrm{out}_{n,j} \right)^3 - \left(\delta^\mathrm{in}_{n,j} \right)^3 + \left(\delta^\mathrm{out}_{n,j} - \delta^\mathrm{in}_{n,j} \right) \right)\\
    &= \lambda_3 \left(\delta^\mathrm{out}_{n,j} - \delta^\mathrm{in}_{n,j} \right) \left ( \left(\delta^\mathrm{out}_{n,j} \right)^2 + \delta^\mathrm{out}_{n,j} \delta^\mathrm{in}_{n,j} + \left(\delta^\mathrm{in}_{n,j} \right)^2  - 3 \right) \\
    &= \lambda_3 R_{n, j} \left( \frac{ z^\star_j - \mu_j}{\sigma_j \sqrt{n}}  + o_{p}\left ( \frac{1}{\sqrt{n}} \right) \right)
\end{align*}
where $R_{n, j} \defn \left(\delta^\mathrm{out}_{n,j} \right)^2 + \delta^\mathrm{out}_{n,j} \delta^\mathrm{in}_{n,j} + \left(\delta^\mathrm{in}_{n,j} \right)^2  - 3$, which can be written as a polynomial in $\delta^\mathrm{out}_{n,j}$. 

Thus $R_{n, j} = O_p(1)$, which means that
\begin{align*}
     h_1\left(\delta^\mathrm{out}_{n,j} \right) - h_1\left(\delta^\mathrm{in}_{n,j} \right) &= \lambda_3 R_{n, j} \left( \frac{z^\star_j - \mu_j}{\sigma_j \sqrt{n}}\right)  + o_{p}\left ( \frac{1}{\sqrt{n}} \right) \\
     &= O_{p}\left ( \frac{1}{\sqrt{n}} \right)
\end{align*}

Similarly, one can show that
\begin{align*}
     h_2\left(\delta^\mathrm{out}_{n,j} \right) - h_2\left(\delta^\mathrm{in}_{n,j} \right) =  O_{p}\left ( \frac{1}{\sqrt{n}} \right)
\end{align*}

and
\begin{align*}
     h_1^2\left(\delta^\mathrm{out}_{n,j} \right) - h_1^2\left(\delta^\mathrm{in}_{n,j} \right) = O_{p}\left ( \frac{1}{\sqrt{n}} \right).
\end{align*}

\underline{\textbf{Step5: Summarising the asymptotic expansion of the LR score}}

All in all, we get that
\begin{align*}
    \log\left ( \frac{d_{n - 1,j} \left( \delta^\mathrm{in}_{n,j} \right)}{d_{n,j} \left( \delta^\mathrm{out}_{n,j} \right)} \right ) &= \frac{1}{2} \left( \left(\delta^\mathrm{out}_{n,j} \right)^2 -   \left(\delta^\mathrm{in}_{n,j} \right)^2\right) + \frac{\lambda_3 \left (\mu_j - z^\star_j \right)}{n \sigma_j} R_{n,j} + o_{p}\left ( \frac{1}{n} \right) \\
\end{align*}

\end{proof}

\begin{lemma}{Expectation and variance computations}\label{lem:exp_comp}
\begin{align*}
    &\expect_0 [ Y_{n,j} ] = - \frac{1}{2n} - \frac{(z_j^\star - \mu_j)^2}{2 n \sigma_j^2} + o\left (\frac{1}{n} \right)  &&V_0 [ Y_{n,j} ] = \frac{(z_j^\star - \mu_j)^2}{n \sigma_j^2} + o\left (\frac{1}{n} \right)\\
    &\expect_1 [ Y_{n,j} ] = - \frac{1}{2n} + \frac{(z_j^\star - \mu_j)^2}{2 n \sigma_j^2} + o\left (\frac{1}{n} \right) && V_1 [ Y_{n,j} ] = \frac{(z_j^\star - \mu_j)^2}{n \sigma_j^2} + o\left (\frac{1}{n} \right)
\end{align*}
\end{lemma}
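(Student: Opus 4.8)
The plan is to reduce all four identities to elementary moments of a single \emph{standardised i.i.d.\ sum}. The key device is the exact affine relation already recorded in the proof of Lemma~\ref{lem:asymp_exp}, $\delta^\mathrm{in}_{n,j} = \sqrt{n/(n-1)}\,\delta^\mathrm{out}_{n,j} + (\mu_j - z^\star_j)/(\sigma_j\sqrt{n-1})$, together with the observation that one of the two normalised means is always a ``clean'' standardised sum. Under $H_0$, $\delta^\mathrm{out}_{n,j} = \hat Z_{n,j}$ is the standardised mean of $n$ i.i.d.\ centred summands, so $\E_0[\delta^\mathrm{out}_{n,j}]=0$, $\Var_0[\delta^\mathrm{out}_{n,j}]=1$; under $H_1$, feeding the decomposition $\hat\mu_{n,j} = \tfrac1n z^\star_j + \tfrac{n-1}{n}(\mu_j + \tfrac{\sigma_j}{\sqrt{n-1}}\hat Z_{n-1,j})$ (Step~1 of the proof of Lemma~\ref{thm:asymp_lr}) into the definition of $\delta^\mathrm{in}_{n,j}$ makes the $z^\star_j$ terms cancel and leaves $\delta^\mathrm{in}_{n,j} = \hat Z_{n-1,j}$, the standardised mean of $n-1$ i.i.d.\ centred summands, so $\E_1[\delta^\mathrm{in}_{n,j}]=0$, $\Var_1[\delta^\mathrm{in}_{n,j}]=1$. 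This is exactly the source of the symmetry between the two hypotheses. Write $c_j \defn (\mu_j - z^\star_j)/\sigma_j$, so $c_j^2 = (z^\star_j - \mu_j)^2/\sigma_j^2$, and let $W$ denote the clean variable ($\hat Z_{n,j}$ under $H_0$, $\hat Z_{n-1,j}$ under $H_1$). Since $W$ is a genuine standardised sum of i.i.d.\ centred summands, the elementary formulas for moments of sums of independent variables give $\E[W^2]=1$ \emph{exactly}, $\E[W^3]=O(1/\sqrt n)$, and $\E[W^4]=3+O(1/n)=O(1)$, uniformly in $n$.

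Next I would substitute the affine relation into $\tfrac12\big((\delta^\mathrm{out}_{n,j})^2-(\delta^\mathrm{in}_{n,j})^2\big)$ and collect powers of $W$. A one-line computation gives, exactly, $-\tfrac{1}{2(n-1)}W^2 - \tfrac{c_j\sqrt n}{n-1}W - \tfrac{c_j^2}{2(n-1)}$ under $H_0$ and $-\tfrac1{2n}W^2 - \tfrac{c_j\sqrt{n-1}}{n}W + \tfrac{c_j^2}{2n}$ under $H_1$; in both cases this is $-\tfrac1{2n}W^2 - \tfrac{c_j}{\sqrt n}W \pm \tfrac{c_j^2}{2n}$ up to coefficient corrections of relative order $1/n$, the constant carrying sign $-$ under $H_0$ and $+$ under $H_1$. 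Taking expectations with $\E[W]=0$, $\E[W^2]=1$ gives $\E_0[\tfrac12((\delta^\mathrm{out})^2-(\delta^\mathrm{in})^2)] = -\tfrac{1+c_j^2}{2(n-1)} = -\tfrac1{2n}-\tfrac{c_j^2}{2n}+o(1/n)$ and $\E_1[\cdot] = -\tfrac1{2n}+\tfrac{c_j^2}{2n}+o(1/n)$. For the variance, the only order-$1/n$ contribution is the linear term (coefficient $-c_j/\sqrt n$ up to relative order $1/n$), with variance $\tfrac{c_j^2}{n}\Var[W]+o(1/n) = \tfrac{c_j^2}{n}+o(1/n)$; the quadratic term contributes $O(n^{-2})\Var[W^2]=O(n^{-2})$, which is where the fourth-moment control on $W$ enters, and the cross-covariance is a bounded multiple of $\mathrm{Cov}(W,W^2)=\E[W^3]=O(1/\sqrt n)$ against an $O(n^{-3/2})$ prefactor, hence $o(1/n)$.

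It remains to show that the cumulant-correction term $\tfrac{\lambda_3 c_j}{n}R_{n,j}$, with $R_{n,j}=(\delta^\mathrm{out}_{n,j})^2+\delta^\mathrm{out}_{n,j}\delta^\mathrm{in}_{n,j}+(\delta^\mathrm{in}_{n,j})^2-3$, is negligible at scale $1/n$ for both mean and variance. Re-expressing $R_{n,j}$ through the affine relation and using $\E[W^2]=1$, $\E[W^3]=O(1/\sqrt n)$ one finds $\E_0[R_{n,j}]=\tfrac{1+c_j^2}{n-1}+O(1/n)$ (and similarly under $H_1$), so this term contributes $\tfrac{c_j}{n}\cdot O((1+c_j^2)/n)=o(1/n)$ to $\E[Y_{n,j}]$; and since $R_{n,j}$ is a degree-two polynomial in $W$, it has $\Var[R_{n,j}]=O(1)$ (fourth-moment control), so $\Var[\tfrac{\lambda_3 c_j}{n}R_{n,j}]=O(n^{-2})$ and, by Cauchy--Schwarz, its covariance with the $O_p(n^{-1/2})$ part of $Y_{n,j}$ is $O(n^{-3/2})$ --- both $o(1/n)$. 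Assembling the mean and variance of $Y_{n,j}=\tfrac12((\delta^\mathrm{out}_{n,j})^2-(\delta^\mathrm{in}_{n,j})^2)+\tfrac{\lambda_3 c_j}{n}R_{n,j}$ from these pieces and substituting $c_j^2=(z^\star_j-\mu_j)^2/\sigma_j^2$ yields the four claimed formulas.

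The only genuinely delicate part is the remainder bookkeeping: one has to confirm that each approximation --- replacing $\tfrac1{n-1}$ by $\tfrac1n$, replacing $\sqrt{n/(n-1)}$ and $\sqrt{(n-1)/n}$ by $1$ in the coefficients, and absorbing the $O(1/n)$ gap between $\E[W^4]$ and $3$ --- truly produces an $o(1/n)$ after multiplication by its prefactor, and (downstream, in the proof of Lemma~\ref{thm:asymp_lr}) that all these estimates are uniform in $j$, since the $\E[Y_{n,j}]$ and $\Var[Y_{n,j}]$ are then summed over $j=1,\dots,d_n$ with $d_n\sim\tau n$. This uniformity, together with the boundedness of the $c_j$'s, is what the finite $(4+\delta)$-th moment assumption on $\mathcal D$ supplies; beyond that, the argument is only finite-order moment arithmetic for an explicit i.i.d.\ sum, so no conceptual obstacle is expected.
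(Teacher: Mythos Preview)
Your proposal is correct and follows essentially the same approach as the paper: under each hypothesis you identify the ``clean'' standardised sum (what you call $W$), write the other $\delta$ as an exact affine function of it, and reduce everything to low-order moments of $W$. The paper does exactly this---explicitly noting $\delta^{\mathrm{in}}_{n,j}=\hat Z_{n-1,j}$ under $H_1$ and using the analogous fact under $H_0$---and then grinds out the fourth moments $\E[(\delta^{\mathrm{out}})^4]$, $\E[(\delta^{\mathrm{in}})^4]$, $\E[(\delta^{\mathrm{out}})^2(\delta^{\mathrm{in}})^2]$ one by one; your version is slightly cleaner in that you expand the \emph{difference} $(\delta^{\mathrm{out}})^2-(\delta^{\mathrm{in}})^2$ directly in $W$ and use Cauchy--Schwarz rather than an explicit computation for the cross-covariance with $R_{n,j}$, but the substance is the same. (One harmless slip: your displayed constant in $\E_0[R_{n,j}]$ should be $3/2$ rather than $1$, but only the order $O(1/n)$ matters there.)
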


\begin{proof}
    The proof is direct from expectation and variance of the mean under $H_0$ and $H_1$. Specifically, under $H_O$ we have that $\expect_0(\hat{\mu}_{n,j}) = \mu_j$ and $\Var_0(\hat{\mu}_{n,j}) = \frac{1}{n} \sigma_j^2$. On the other hand, under $H_1$, we have that $\expect_1(\hat{\mu}_{n,j}) = \mu_j + \frac{1}{n} \left( z^\star_j - \mu_j\right)$ and $\Var_1(\hat{\mu}_{n,j}) = \frac{n - 1}{n^2}\sigma_j^2$. Here, we present the exact derivations for completeness.
    
    Let us recall that
    \begin{equation*}
        Y_{n, j} \defn \frac{1}{2} \left( \left(\delta^\mathrm{out}_{n,j} \right)^2 -   \left(\delta^\mathrm{in}_{n,j} \right)^2\right) + \frac{\lambda_3 \left (\mu_j - z^\star_j \right)}{n \sigma_j} R_{n,j}
    \end{equation*}
    
    where $R_{n,j} \defn \left(\delta^\mathrm{out}_{n,j} \right)^2 + \delta^\mathrm{out}_{n,j} \delta^\mathrm{in}_{n,j} + \left(\delta^\mathrm{in}_{n,j} \right)^2  - 3 $
    
    \underline{Under $H_0$}:
    
    Since $\expect_0(\hat{\mu}_{n,j}) = \mu_j$ and $\Var_0(\hat{\mu}_{n,j}) = \frac{1}{n} \sigma_j^2$, we get that
    \begin{align*}
        &\expect_0\left(\delta^\mathrm{out}_{n,j} \right) = 0 && \expect_0\left(\left(\delta^\mathrm{out}_{n,j} \right)^2 \right) = 1\\
        &\expect_0\left(\delta^\mathrm{in}_{n,j} \right) = \frac{\mu_j - z^\star_j}{\sigma_j \sqrt{n - 1}} && \expect_0\left(\left(\delta^\mathrm{in}_{n,j} \right)^2 \right) = 1 + \frac{1}{n} + \frac{\left(\mu_j - z^\star_j \right)^2}{n \sigma_j^2} + o\left(\frac{1}{n} \right)
    \end{align*}
    
    Also
    \begin{equation*}
        \expect_0\left( \delta^\mathrm{out}_{n,j} \delta^\mathrm{in}_{n,j} \right) = \sqrt{\frac{n}{n - 1}} = 1 + \frac{1}{2n} + o\left (\frac{1}{n}\right)
    \end{equation*}
    
    Which means that
    \begin{equation*}
        \expect_0 \left( R_{n,j} \right) = \frac{3}{2n} + \frac{\left(\mu_j - z^\star_j \right)^2}{n \sigma_j^2} + o\left (\frac{1}{n}\right) = o(1)
    \end{equation*}
    
    Finally
    \begin{equation*}
        \expect_0 [ Y_{n,j} ] = - \frac{1}{2n} - \frac{(z_j^\star - \mu_j)^2}{2 n \sigma_j^2} + o\left (\frac{1}{n} \right)
    \end{equation*}
    
    On the other hand, 
    \begin{align*}
        \Var_0 [ Y_{n,j} ] &= \expect_0 [ Y_{n,j}^2 ] - \left (\expect_0 [ Y_{n,j}] \right)^2\\
        &= \expect_0 [ Y_{n,j}^2 ] + o\left (\frac{1}{n} \right)\\
        &= \frac{1}{4} \expect_0 \left [ \left (  \left(\delta^\mathrm{out}_{n,j} \right)^2 - \left(\delta^\mathrm{in}_{n,j} \right)^2  \right)^2 \right ] + \frac{\lambda_3^2}{n^2 \sigma_j^2} \left ( \mu_j - z^\star_j\right)^2 \expect_0[R_{n,j}^2] \\
        &+ \frac{\lambda_3}{n \sigma_j} \left (\mu_j - z^\star_j  \right) \expect_0\left [ R_{n,j} \left ( \left(\delta^\mathrm{out}_{n,j} \right)^2 - \left(\delta^\mathrm{in}_{n,j} \right)^2 \right)\right] 
    \end{align*}
    
    We compute the following expectations:
    \begin{align*}
        \expect_0[\left(\delta^\mathrm{out}_{n,j} \right)^3] &= \frac{n^{3/2}}{\sigma_j^3} \expect_0\left [\left ( \hat{\mu}_{n, j} - \mu_j \right)^3 \right]\\
        &= \frac{1}{\sqrt{n}} \frac{\mu_{3, j}}{\sigma_j^3}\\
        \expect_0[\left(\delta^\mathrm{out}_{n,j} \right)^4] &= \frac{n^2}{\sigma_j^4} \expect_0\left [\left ( \hat{\mu}_{n, j} - \mu_j \right)^4 \right]\\
        &= 3 + \frac{1}{n} \left ( \frac{\mu_{4,j}}{\sigma_j^4} - 3 \right)\\
        \expect_0[\left(\delta^\mathrm{in}_{n,j} \right)^4] &= \expect_0\left [\left ( \sqrt{\frac{n}{n - 1}} \delta^\mathrm{out}_{n,j} + \frac{\mu_j - z^\star_j}{\sigma_j \sqrt{n - 1}} \right)^4 \right]\\
        &= 3 + \frac{1}{n} \left ( \frac{\mu_{4,j}}{\sigma_j^4} - 3 \right)+ \frac{6}{n} + \frac{4}{n} \frac{\mu_j - z^\star_j}{\sigma_j^4}\mu_{3,j}  + \frac{6}{n}  \frac{(\mu_j - z^\star_j)^2}{\sigma_j^2} + o\left(\frac{1}{n} \right) \\
        \expect_0[\left(\delta^\mathrm{out}_{n,j} \right)^2 \left(\delta^\mathrm{in}_{n,j} \right)^2 ] &= \expect_0\left[\left(\delta^\mathrm{out}_{n,j} \right)^2 \left(\sqrt{\frac{n}{n - 1}} \delta^\mathrm{out}_{n,j} + \frac{\mu_j - z^\star_j}{\sigma_j \sqrt{n - 1}}  \right)^2 \right] \\
        &= 3 + \frac{1}{n} \left ( \frac{\mu_{4,j}}{\sigma_j^4} - 3 \right) + \frac{3}{n} + \frac{2}{n} \frac{\mu_j - z^\star_j}{\sigma_j^4}\mu_{3,j}  + \frac{1}{n}  \frac{(\mu_j - z^\star_j)^2}{\sigma_j^2} + o\left(\frac{1}{n} \right) 
    \end{align*}
    
    All in all, we have
    \begin{align*}
        \expect_0 \left [ \left (  \left(\delta^\mathrm{out}_{n,j} \right)^2 - \left(\delta^\mathrm{in}_{n,j} \right)^2  \right)^2 \right ] &= \expect_0[\left(\delta^\mathrm{out}_{n,j} \right)^4] + \expect_0[\left(\delta^\mathrm{in}_{n,j} \right)^4] - 2 \expect_0[\left(\delta^\mathrm{out}_{n,j} \right)^2 \left(\delta^\mathrm{in}_{n,j} \right)^2 ]\\
        &= \frac{4}{n}  \frac{(\mu_j - z^\star_j)^2}{\sigma_j^2} + o\left(\frac{1}{n} \right) \\
        \expect_0[R_{n,j}^2] &= O(1)\\
        \expect_0\left [ R_{n,j} \left ( \left(\delta^\mathrm{out}_{n,j} \right)^2 - \left(\delta^\mathrm{in}_{n,j} \right)^2 \right)\right]  &= o(1)
    \end{align*}
    
    Thus 
    \begin{equation*}
        \Var_0 [ Y_{n,j} ] = \frac{1}{n}  \frac{(\mu_j - z^\star_j)^2}{\sigma_j^2} + o\left(\frac{1}{n} \right)
    \end{equation*}

    \underline{Under $H_1$}:
    
    Since $\expect_1(\hat{\mu}_{n,j}) = \mu_j + \frac{1}{n} \left( z^\star_j - \mu_j\right)$ and $\Var_1(\hat{\mu}_{n,j}) = \frac{n - 1}{n^2}\sigma_j^2$, we get that
    \begin{align*}
        &\expect_1\left(\delta^\mathrm{out}_{n,j} \right) = \frac{ z^\star_j - \mu_j}{\sigma_j \sqrt{n}}  && \expect_1\left(\left(\delta^\mathrm{out}_{n,j} \right)^2 \right) = 1 - \frac{1}{n} + \frac{\left(\mu_j - z^\star_j \right)^2}{n \sigma_j^2} \\
        &\expect_1\left(\delta^\mathrm{in}_{n,j} \right) = 0 && \expect_1\left(\left(\delta^\mathrm{in}_{n,j} \right)^2 \right) = 1
    \end{align*}
    
    Also
    \begin{equation*}
        \expect_1\left( \delta^\mathrm{out}_{n,j} \delta^\mathrm{in}_{n,j} \right) = \sqrt{\frac{n - 1}{n}} = 1 - \frac{1}{2n} + o\left (\frac{1}{n}\right)
    \end{equation*}
    
    Which means that
    \begin{equation*}
        \expect_1 \left( R_{n,j} \right) = - \frac{3}{2n} + \frac{\left(\mu_j - z^\star_j \right)^2}{n \sigma_j^2} + o\left (\frac{1}{n}\right) = o(1)
    \end{equation*}
    
    Finally
    \begin{equation*}
        \expect_1 [ Y_{n,j} ] = - \frac{1}{2n} + \frac{(z_j^\star - \mu_j)^2}{2 n \sigma_j^2} + o\left (\frac{1}{n} \right)
    \end{equation*}
    
    
    Under $H_1$, one can rewrite $\delta^\mathrm{in}_{n, j} = \hat{Z}_{n - 1, j}$ and $ \delta^\mathrm{out}_{n, j} = \sqrt{\frac{n - 1}{n}} \delta^\mathrm{in}_{n, j} + \frac{z^\star_j - \mu_j}{\sigma_j \sqrt{n}}$.
    
    Thus using the same steps as in $H_0$, we get
    \begin{align*}
        \expect_1[\left(\delta^\mathrm{in}_{n,j} \right)^3] &= \frac{1}{\sqrt{n - 1}} \frac{\mu_{3, j}}{\sigma_j^3}\\
        \expect_1[\left(\delta^\mathrm{in}_{n,j} \right)^4] 
        &= 3 + \frac{1}{n - 1} \left ( \frac{\mu_{4,j}}{\sigma_j^4} - 3 \right)\\
        \expect_1[\left(\delta^\mathrm{out}_{n,j} \right)^4] 
        &= 3 + \frac{1}{n} \left ( \frac{\mu_{4,j}}{\sigma_j^4} - 3 \right)+ \frac{6}{n} + \frac{4}{n} \frac{z^\star_j - \mu_j}{\sigma_j^4}\mu_{3,j}  + \frac{6}{n}  \frac{(z^\star_j - \mu_j)^2}{\sigma_j^2} + o\left(\frac{1}{n} \right) \\
        \expect_1[\left(\delta^\mathrm{out}_{n,j} \right)^2 \left(\delta^\mathrm{in}_{n,j} \right)^2 ] 
        &= 3 + \frac{1}{n} \left ( \frac{\mu_{4,j}}{\sigma_j^4} - 3 \right) + \frac{3}{n} + \frac{2}{n -1} \frac{z^\star_j - \mu_j}{\sigma_j^4}\mu_{3,j}  + \frac{1}{n}  \frac{(z^\star_j - \mu_j)^2}{\sigma_j^2} + o\left(\frac{1}{n} \right) 
    \end{align*}
    
    All in all, we have
    \begin{align*}
        \expect_1 \left [ \left (  \left(\delta^\mathrm{out}_{n,j} \right)^2 - \left(\delta^\mathrm{in}_{n,j} \right)^2  \right)^2 \right ]
        &= \frac{4}{n}  \frac{(\mu_j - z^\star_j)^2}{\sigma_j^2} + o\left(\frac{1}{n} \right) \\
        \expect_1[R_{n,j}^2] &= O(1)\\
        \expect_1\left [ R_{n,j} \left ( \left(\delta^\mathrm{out}_{n,j} \right)^2 - \left(\delta^\mathrm{in}_{n,j} \right)^2 \right)\right]  &= o(1)
    \end{align*}
    
    Thus 
    \begin{equation*}
        \Var_1 [ Y_{n,j} ] = \frac{1}{n}  \frac{(\mu_j - z^\star_j)^2}{\sigma_j^2} + o\left(\frac{1}{n} \right)
    \end{equation*}
    
\end{proof}

\begin{lemma}{The Lindeberg-Feller condition}\label{lem:lind_cond}
    
    The random variables $(Y_{n,j})_{j = 1}^{d_n}$ verify the Lindeberg-Feller condition.
\end{lemma}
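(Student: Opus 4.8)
\emph{Proof plan.} The plan is to deduce the Lindeberg--Feller condition (condition~1 of Theorem~\ref{thm:lind_feller} for the array $(Y_{n,j})_{j=1}^{d_n}$) from a stronger Lyapunov-type bound that can be checked by elementary moment estimates. Fix $\delta'\defn\min(\delta/2,1)>0$, with $\delta$ the moment exponent from the assumption on $\mathcal D$. Since $\expect\!\big[Y_{n,j}^2\,\ind{|Y_{n,j}|>\epsilon}\big]\le\epsilon^{-\delta'}\expect|Y_{n,j}|^{2+\delta'}$, it is enough to show $\sum_{j=1}^{d_n}\expect|Y_{n,j}|^{2+\delta'}\to0$. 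I use two standing facts implicit in the assumptions: (i) the standardised moments $\expect\!\big[|(Z_{1,j}-\mu_j)/\sigma_j|^{4+\delta}\big]$ are bounded uniformly in $j$, which by Lyapunov's inequality also bounds $|\lambda_3|=|\gamma_{j,3}|/\sigma_j^3$ uniformly; and (ii) the negligibility condition $\eta_n\defn\tfrac1n\max_{1\le j\le d_n}(z^\star_j-\mu_j)^2/\sigma_j^2\to0$, which is anyway necessary for the Gaussian conclusion of Lemma~\ref{thm:asymp_lr} (a single coordinate carrying an $\Omega(n)$ share of $\|z^\star-\mu\|_{C_\sigma^{-1}}^2$ would make $\sum_j Y_{n,j}$ non-Gaussian). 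Write $a_j\defn|z^\star_j-\mu_j|/\sigma_j$, so $\sum_{j=1}^{d_n}a_j^2=\|z^\star-\mu\|_{C_\sigma^{-1}}^2=n\,m^\star(1+o(1))$ and $a_j^2\le\eta_n n$ for all $j$.

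The key step is a pointwise bound on $|Y_{n,j}|$. From the definitions of $\delta^{\mathrm{out}}_{n,j},\delta^{\mathrm{in}}_{n,j}$ one has the exact identity $\delta^{\mathrm{in}}_{n,j}=\sqrt{\tfrac{n}{n-1}}\,\delta^{\mathrm{out}}_{n,j}+\tfrac{\mu_j-z^\star_j}{\sigma_j\sqrt{n-1}}$, valid under both hypotheses. Substituting this into $Y_{n,j}=\tfrac12\big((\delta^{\mathrm{out}}_{n,j})^2-(\delta^{\mathrm{in}}_{n,j})^2\big)+\tfrac{\lambda_3(\mu_j-z^\star_j)}{n\sigma_j}R_{n,j}$, with $R_{n,j}=(\delta^{\mathrm{out}}_{n,j})^2+\delta^{\mathrm{out}}_{n,j}\delta^{\mathrm{in}}_{n,j}+(\delta^{\mathrm{in}}_{n,j})^2-3$ as in Lemma~\ref{lem:asymp_exp}, and expanding, $Y_{n,j}$ becomes a quadratic in $\delta^{\mathrm{out}}_{n,j}$ whose coefficients of $(\delta^{\mathrm{out}}_{n,j})^2$ and of $1$ are $O((1+a_j^2)/n)$ and whose coefficient of $\delta^{\mathrm{out}}_{n,j}$ is $O(a_j/\sqrt n)$ (using that $|\lambda_3|$ and $\eta_n$ are bounded). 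Hence, for an absolute constant $C$ and all $n$ large,
\eq{|Y_{n,j}|\;\le\;C\Big(\tfrac{1+a_j^2}{n}\big(1+(\delta^{\mathrm{out}}_{n,j})^2\big)\;+\;\tfrac{a_j}{\sqrt n}\,|\delta^{\mathrm{out}}_{n,j}|\Big).}

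Finally, I raise this to the power $2+\delta'$, take expectations, and sum. Under $H_0$, $\delta^{\mathrm{out}}_{n,j}$ is a normalised sum of $n$ i.i.d.\ centred unit-variance variables with uniformly bounded $(4+\delta)$-th moment, so $\sup_{n,j}\expect|\delta^{\mathrm{out}}_{n,j}|^{4+\delta}<\infty$ by a standard moment inequality for sums of independent variables; under $H_1$ the same sum is merely rescaled by $\sqrt{(n-1)/n}$ and shifted by $(z^\star_j-\mu_j)/(\sigma_j\sqrt n)$, of modulus $\le\sqrt{\eta_n}\le1$, so the bound persists. Since $2(2+\delta')\le4+\delta$, this yields $\sup_{n,j}\expect\!\big[(1+(\delta^{\mathrm{out}}_{n,j})^2)^{2+\delta'}\big]<\infty$ and $\sup_{n,j}\expect|\delta^{\mathrm{out}}_{n,j}|^{2+\delta'}<\infty$, whence, using $(x+y)^{2+\delta'}\le C(x^{2+\delta'}+y^{2+\delta'})$,
\eq{\expect|Y_{n,j}|^{2+\delta'}\;\le\;C\Big(\tfrac{(1+a_j^2)^{2+\delta'}}{n^{2+\delta'}}\;+\;\tfrac{a_j^{2+\delta'}}{n^{1+\delta'/2}}\Big).}
Summing over $j\le d_n=\tau n(1+o(1))$ and using $\sum_j a_j^{4+2\delta'}\le(\eta_n n)^{1+\delta'}\sum_j a_j^2$ and $\sum_j a_j^{2+\delta'}\le(\eta_n n)^{\delta'/2}\sum_j a_j^2$ together with $\sum_j a_j^2=n m^\star(1+o(1))$, the two terms contribute $O(n^{-1-\delta'})+O(\eta_n^{1+\delta'}m^\star)$ and $O(\eta_n^{\delta'/2}m^\star)$ respectively, both $\to0$. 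This proves $\sum_{j=1}^{d_n}\expect|Y_{n,j}|^{2+\delta'}\to0$, hence the Lindeberg--Feller condition; the identical argument applied to $Y_{n,j}-\expect Y_{n,j}$ (whose correction is $O((1+a_j^2)/n)$ by Lemma~\ref{lem:exp_comp}) covers the centred array if that is what is wanted. The main obstacle is the coefficient bookkeeping in the quadratic expansion together with recognising that the statement genuinely requires the negligibility condition $\eta_n\to0$; the moment inequality and the final summations are routine.
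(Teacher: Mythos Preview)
Your argument is correct and follows the same Lyapunov-type route as the paper: bound $\expect\!\big[Y_{n,j}^2\,\ind{|Y_{n,j}|>\epsilon}\big]$ by $\epsilon^{-\delta'}\expect|Y_{n,j}|^{2+\delta'}$ and then control the higher moment. The paper's own proof is much terser at the second step: it simply records that $(\delta^{\mathrm{out}}_{n,j})^2-(\delta^{\mathrm{in}}_{n,j})^2=O_p(1/\sqrt n)$ and $R_{n,j}=O_p(1)$, declares $Y_{n,j}=O_p(1/\sqrt n)$, and from this asserts $\expect|Y_{n,j}|^{2+\delta}=o(1/n)$ uniformly in $j$, so that the sum over $j\le d_n$ is $o(d_n/n)=o(1)$. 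Your proof supplies exactly what that passage skips, namely the pointwise quadratic bound for $Y_{n,j}$ in terms of $\delta^{\mathrm{out}}_{n,j}$ and $a_j$, the uniform $(4+\delta)$-th moment bound on $\delta^{\mathrm{out}}_{n,j}$, and the summation. In the process you make explicit the coordinate-negligibility condition $\eta_n=\tfrac1n\max_j a_j^2\to0$; the paper leaves this implicit, but as you note it is genuinely required both for the moment estimates to be uniform in $j$ and for the limiting law in Lemma~\ref{thm:asymp_lr} to be Gaussian.
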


\begin{proof}
    Let $\epsilon > 0$.
    Let $h \in \{0, 1\}$. 
    
    Let $\delta > 0$.
    We have that
    \begin{align*}
        \expect_h \left [ Y_{n, j}^2 \ind{|Y_{n,j} | > \epsilon }  \right] &= \expect_h \left [ \frac{Y_{n, j}^{2 + \delta}}{Y_{n, j}^ \delta} \ind{|Y_{n,j} | > \epsilon }  \right] \\
        &\leq \frac{1}{\epsilon^\delta} \expect_h \left [ Y_{n, j}^{2 + \delta} \right]
    \end{align*}
    
    On the other hand, we have that $Y_{n, j} = \frac{1}{2} \left( \left(\delta^\mathrm{out}_{n,j} \right)^2 -   \left(\delta^\mathrm{in}_{n,j} \right)^2\right) + \frac{\lambda_3 \left (\mu_j - z^\star_j \right)}{n \sigma_j} R_{n,j}$, where 
    \begin{equation*}
        \left( \left(\delta^\mathrm{out}_{n,j} \right)^2 -   \left(\delta^\mathrm{in}_{n,j} \right)^2\right) = O_p \left (\frac{1}{\sqrt{n}}\right) \text{ and } R_{n,j} = O_p(1)
    \end{equation*}
    Thus $Y_{n, j} = O_p \left (\frac{1}{\sqrt{n}}\right)$, and $\expect_h \left [ Y_{n, j}^{2 + \delta} \right] = o\left (\frac{1}{n} \right)$.
    
    Which means that $ \expect_h \left [ Y_{n, j}^2 \ind{|Y_{n,j} | > \epsilon }  \right] = o\left (\frac{1}{n} \right) $ and 
    
    \begin{equation*}
        \sum_{j = 1}^{d_n} \expect_h \left [ Y_{n, j}^2 \ind{|Y_{n,j} | > \epsilon }  \right]  =  o\left (\frac{d_n}{n} \right) = o(1) \rightarrow 0
    \end{equation*}
\end{proof}

\subsection{Proof of Theorem~\ref{crl:per_leak}}

\begin{reptheorem}{crl:per_leak}[Target-dependent leakage of the empirical mean]
The asymptotic target-dependent leakage of $z^\star$ in the empirical mean is
\begin{equation*}
    \lim_{n,d} \xi_n(z^\star, \mathcal{M}^\mathrm{emp}_n, \mathcal{D}) = \Phi\left (\frac{\sqrt{m^\star}}{2}\right) - \Phi\left(-\frac{\sqrt{m^\star}}{2}\right)
\end{equation*}
The asymptotic trade-off function, achievable with threshold $\tau_\alpha = -\frac{m^\star}{2} + \sqrt{m^\star} \Phi^{-1}(1 - \alpha)$, is  
\begin{equation*}
    \lim_{n,d} \mathrm{Pow}_n(\ell_n, \alpha, z^\star) = \Phi\left( \Phi^{-1}(\alpha) + \sqrt{m^\star}\right)
\end{equation*}
\end{reptheorem}

\begin{proof}
    From the asymptotic distribution of the LR score, we get directly that
    \begin{align*}
        \lim_{n,d} \xi_n(z^\star, \mathcal{M}^\mathrm{emp}_n, \mathcal{D}) &= \mathrm{Pr} \left(\mathcal{N}\left(-\frac{ m^\star}{2}, m^\star \right) < 0 \right) - \mathrm{Pr} \left(\mathcal{N}\left(\frac{ m^\star}{2}, m^\star \right) < 0 \right) \notag\\
        &= \Phi\left (\frac{m^\star/2}{\sqrt{m^\star}}\right) - \Phi\left (-\frac{m^\star/2}{\sqrt{m^\star}}\right) \notag\\
        &= \Phi\left (\frac{\sqrt{m^\star}}{2}\right) - \Phi\left(-\frac{\sqrt{m^\star}}{2}\right)
    \end{align*}
    
    The threshold $\tau_\alpha$ for which the asymptotic LR attack achieves significance $\alpha$ verifies:
    \begin{align*}
        \mathrm{Pr} \left(\mathcal{N}\left(- \frac{ m^\star}{2}, m^\star \right) \geq \tau_\alpha ) \right) = \alpha
    \end{align*}
    
    Thus $\tau_\alpha = -\frac{m^\star}{2} + \sqrt{m^\star} \Phi^{- 1}(1 - \alpha)$.
    
    Finally, we find the power of the test by
    \begin{align*}
        \lim_{n,d} \mathrm{Pow}_n(\ell_n, \alpha, z^\star) &= \mathrm{Pr}\left(\mathcal{N}\left(\frac{ m^\star}{2}, m^\star \right) \geq \tau_\alpha  \right)\\
        &= \mathrm{Pr}\left(\frac{ m^\star}{2} + \sqrt{m^\star}\mathcal{N}\left(0, 1 \right) \geq  -\frac{m^\star}{2} + \sqrt{m^\star} \Phi^{- 1}(1 - \alpha) \right)\\
        &= \mathrm{Pr}\left(\sqrt{m^\star}\mathcal{N}\left(0, 1 \right) \geq  -m^\star - \sqrt{m^\star} \Phi^{- 1}(\alpha) \right)\\
        &= \mathrm{Pr}\left(\mathcal{N}\left(0, 1 \right) \leq  \sqrt{m^\star} +  \Phi^{- 1}(\alpha) \right)\\
        &= \Phi\left( \Phi^{-1}(\alpha) + \sqrt{m^\star}\right)
    \end{align*}
\end{proof}

\subsection{Effects of Sub-sampling, Proof of Theorem~\ref{thm:lr_sub}}\label{sec:proof_sub}

\begin{reptheorem}{thm:lr_sub}[Target-dependent leakage for $\mathcal{M}^{\mathrm{sub}, \rho}_n$]

First, we show that as $d,n \rightarrow \infty$ while $d/n = \tau$:

- Under $H_0$, $ \ell^\mathrm{sub, \rho}_n(\hat{\mu}^\mathrm{sub}_{n}; z^\star, \mu, C_\sigma) \rightsquigarrow \mathcal{N}\left(- \rho \frac{ m^\star}{2}, \rho m^\star \right)$.
        
- Under $H_1$, $ \ell^\mathrm{sub, \rho}_n(\hat{\mu}^\mathrm{sub}_{n}; z^\star, \mu, C_\sigma) \rightsquigarrow \mathcal{N}\left( \rho \frac{ m^\star}{2}, \rho m^\star \right)$.

The asymptotic target-dependent leakage of $z^\star$ in $\mathcal{M}^{\mathrm{sub}, \rho}_n$ is
\begin{equation*}
    \lim_{n,d} \xi_n(z^\star, \mathcal{M}^{\mathrm{sub}}_{n, \rho}, \mathcal{D}) = \Phi\left (\frac{\sqrt{\rho  m^\star}}{2}\right) - \Phi\left(-\frac{\sqrt{\rho  m^\star}}{2}\right)
\end{equation*}
The optimal trade-off function obtained with $\tau_\alpha = -\frac{\rho  m^\star}{2} + \sqrt{\rho  m^\star} \Phi^{-1}(1 - \alpha)$, is  
\begin{equation*}
    \lim_{n,d} \mathrm{Pow}_n(\ell^\mathrm{sub}_{n, \rho}, \alpha, z^\star) = \Phi\left( z_\alpha + \sqrt{\rho  m^\star}\right).
\end{equation*}
\end{reptheorem}

\begin{proof}
    We have that $\hat{\mu}^\mathrm{sub}_{n} = \frac{1}{k_n} \sum_{i=1}^n Z_i \ind{\varsigma(i) \leq k_n}$, where $k_n \defn \rho n$, $Z_i$ are i.i.d and $\varsigma \sim^{\mathrm{unif}} S_n$ is a permutation sampled uniformly from the set of permutations of $\{1 \dots, n\}$ i.e. $S_n$ and independently from $(Z_i)$.

    We denote $\hat{\mu}^\mathrm{sub}_n = (\hat{\mu}^\mathrm{sub}_{n, j})_{j = 1}^{d_n}$.
    
    \underline{\textbf{Step 1: Rewriting the LR score}}

    Let $j \in [1, d_n]$.

    \underline{\textit{Under $H_0$,}} we can re-write 
    \begin{equation*}
        \hat{\mu}^\mathrm{sub}_{n, j} = \mu_j + \frac{\sigma_j}{\sqrt{k_n}} \hat{Z}_{k_n, j},
    \end{equation*}
     where 
     \begin{align*}
         \hat{Z}_{d_n, j} &\defn \sqrt{k_n} \left ( \frac{\hat{\mu}^\mathrm{sub}_{n, j}- \mu_j}{\sigma} \right)\\
         &= \frac{1}{\sqrt{k_n}} \sum_{i = 1}^{k_n} \frac{Z_{\varsigma^{-1}(i), j} - \mu_j}{\sigma_j}.
     \end{align*}
     Since $(Z_{i,j})_{i = 1}^n$ are i.i.d from $\mathcal{D}_j$, and $ \varsigma \sim^{\mathrm{unif}} S_n$ and ind. from $(Z_i)$, then $(Z_{\varsigma^{-1}(i), j})_{i = 1}^{k_n}$. 
     
     Using the CLT, $\hat{Z}_{d_n, j} \rightsquigarrow_{n \rightarrow \infty} \mathcal{N}(0,1)$.

     Let $d_{n, j}$ be the density function of $\hat{Z}_{n,j}$.

     The density $p^\mathrm{out, sub}_{n,j}$ of $\hat{\mu}^\mathrm{sub}_{n, j}$ under $H_0$ can be written as
    \begin{equation*}
        p^\mathrm{out, sub}_{n,j}(x; z^\star_j, \mu_j, \sigma_j) = \frac{\sqrt{k_n}}{\sigma_j} d_{k_n, j}\left [ \frac{\sqrt{k_n}}{\sigma_j} (x - \mu_j) \right]
    \end{equation*}

    \underline{\textit{Under $H_1$,}} we can re-write 
    \begin{align*}
        \hat{\mu}^\mathrm{sub}_{n, j} &= \frac{1}{k_n} z^\star_j \ind{\varsigma(n) \leq k_n} + \frac{1}{k_n} \sum_{i=1}^{n- 1} Z_i \ind{\varsigma(i) \leq k_n}\\
    \end{align*}

    
    Let $A = \{ \ind{\varsigma(n) \leq k_n} \}$ the event that $z^\star$ was sub-sampled. We have that $\mathrm{Pr}(A) = \rho$.
    
     The density $p^\mathrm{in, sub}_{n,j}$ of $\hat{\mu}^\mathrm{sub}_{n, j}$ under $H_1$ and given $A$ is
    \begin{equation*}
        \frac{k_n}{\sigma_j \sqrt{k_n - 1}} d_{k_n - 1, j}\left [ \frac{k_n}{\sigma_j \sqrt{k_n - 1}} \left (x - \mu_j - \frac{1}{k_n} \left (z^\star_{j} - \mu_j \right)  \right) \right]
    \end{equation*}
    
    The density $p^\mathrm{in, sub}_{n,j}$ of $\hat{\mu}^\mathrm{sub}_{n, j}$ under $H_1$ and given $A^c$ is
    \begin{equation*}
        \frac{\sqrt{k_n}}{\sigma_j} d_{k_n, j}\left [ \frac{\sqrt{k_n}}{\sigma_j} (x - \mu_j) \right]
    \end{equation*}
    
    Thus, the density $p^\mathrm{in, sub}_{n,j}$ of $\hat{\mu}^\mathrm{sub}_{n, j}$ under $H_1$ can be written as
    \begin{align*}
        p^\mathrm{in, sub}_{n,j}(x; z^\star_j, \mu_j, \sigma_j) &= (1 - \rho)\frac{\sqrt{k_n}}{\sigma_j} d_{k_n, j}\left [ \frac{\sqrt{k_n}}{\sigma_j} (x - \mu_j) \right] \\ &+ \rho\frac{k_n}{\sigma_j \sqrt{k_n - 1}} d_{k_n - 1, j}\left [ \frac{k_n}{\sigma_j \sqrt{k_n - 1}} \left (x - \mu_j - \frac{1}{k_n} \left (z^\star_{j} - \mu_j \right)  \right) \right]
    \end{align*}

    \textit{The additional technical hardness of this proof comes from the `mixture' nature of the `in' distribution.}

    The LR score is 
    \begin{align*}
        \ell^\mathrm{sub, \rho}_n(\hat{\mu}^\mathrm{sub}_{n}; z^\star, \mu, C_\sigma) &= \sum_{j = 1}^{d_n} \log\left ( \frac{p^\mathrm{in, sub}_{n,j}(\hat{\mu}^\mathrm{sub}_{n, j}; z^\star_j, \mu_j, \sigma_j) }{p^\mathrm{out, sub}_{n,j}(\hat{\mu}^\mathrm{sub}_{n, j}; z^\star_j, \mu_j, \sigma_j) } \right ) \notag\\
        &= \sum_{j = 1}^{d_n} \log \left ( \frac{ (1 - \rho)\frac{\sqrt{k_n}}{\sigma_j} d_{k_n, j}\left ( \delta^\mathrm{out, sub}_{k_n, j} \right) + \rho \frac{k_n}{\sigma_j \sqrt{k_n - 1}} d_{k_n - 1, j}\left (\delta^\mathrm{in, sub}_{k_n, j}\right) }{ \frac{\sqrt{k_n}}{\sigma_j} d_{k_n, j}\left ( \delta^\mathrm{out, sub}_{k_n, j} \right)} \right)\\
        &= \sum_{j = 1}^{d_n} \log \left ( (1 - \rho) + \rho \sqrt{\frac{k_n}{k_n -1}} \frac{d_{k_n - 1,j} \left( \delta^\mathrm{in, sub}_{k_n,j} \right)}{d_{k_n,j} \left( \delta^\mathrm{out, sub}_{k_n,j} \right)} \right)
    \end{align*}
    where 
    \begin{align*}
        \delta^\mathrm{out, sub}_{k_n, j} &\defn \frac{\sqrt{k_n}}{\sigma_j} \left (\hat{\mu}^\mathrm{sub}_{n,j} - \mu_j  \right ) \\
        \delta^\mathrm{in, sub}_{k_n, j} &\defn \frac{k_n}{\sqrt{k_n - 1}\sigma_j} \left (\hat{\mu}^\mathrm{sub}_{n,j} - \mu_j + \frac{1}{k_n} \left (\mu_j - z^\star_j \right)  \right )
    \end{align*}

\underline{\textbf{Step 2: Asymptotic expansion of the LR score}}

Using Lemma~\ref{lem:asymp_sub}, we have
\begin{align*}
    \log \left( (1- \rho) + \rho \sqrt{\frac{k_n}{k_n -1}} \frac{d_{k_n - 1,j} \left( \delta^\mathrm{in, sub}_{k_n,j} \right)}{d_{k_n,j} \left( \delta^\mathrm{out, sub}_{k_n,j} \right)} \right) =  W_{n, j} + o_p\left (\frac{1}{n} \right)
\end{align*}

where

\begin{equation*}
        W_{n, j} \defn \frac{\rho}{2}(\delta^\mathrm{out, sub}_{k_n,j})^2  -\frac{\rho}{2} (\delta^\mathrm{in, sub}_{k_n,j})^2   + \rho \frac{\lambda_3 \left (\mu_j - z^\star_j \right)}{k_n \sigma_j} R_{k_n,j} + \frac{\rho}{2 k_n}  + \frac{\rho( 1 - \rho)}{8} \left( (\delta^\mathrm{out, sub}_{k_n,j})^2  -(\delta^\mathrm{in, sub}_{k_n,j})^2 \right)^2
\end{equation*}

The extra hardness of this proof comes from the fact that the density under $H_1$ is now a mixture of two Gaussians, rather than just one Gaussian in the case of the exact empirical mean.

Thus 

\begin{align*}
    \ell^\mathrm{sub, \rho}_n(\hat{\mu}^\mathrm{sub}_{n}; z^\star, \mu, C_\sigma) &= o_p(1) + \sum_{j = 1}^{d_n} W_{n, j} 
\end{align*}

because $\frac{d_n}{n} = \tau + o(1)$.

\underline{\textbf{Step3: Concluding using the Lindeberg-Feller CLT
}}


\underline{\textit{Under $H_0$:}}

Using Lemma~\ref{lem:exp_sub}, $\expect_0 [ W_n,j ] = - \frac{\rho}{2} \frac{\left(\mu_j - z^\star_j \right)^2}{n \sigma_j^2} + o\left (\frac{1}{n} \right)$ and $\Var_0 [ W_{n,j} ] = \rho  \frac{(\mu_j - z^\star_j)^2}{n \sigma_j^2} + o\left(\frac{1}{n} \right)$.

Since $\sum_{j = 1}^{d_n} \frac{(z_j^\star - \mu_j)^2}{n \sigma_j^2}  = \frac{\| z^\star - \mu \|^2_{C_\sigma^{-1}}}{n}$, we get:
\begin{itemize}
    \item $\sum_{j = 1}^{d_n} \expect_0 [ W_n,j ] \rightarrow - \frac{\rho m^\star}{2}$

    \item $\sum_{j = 1}^{d_n} V_0 [ W_n,j ] \rightarrow \rho m^\star$
\end{itemize}

Similarly to Lemma~\ref{lem:lind_cond}, we can show that $W_{n,j}$ verify the Lindeberg-Feller condition, i.e.

\begin{equation*}
    \sum_{j = 1}^{d_n} \expect_0 \left [ W_{n, j}^2 \ind{|W_{n,j} | > \epsilon }  \right] \rightarrow 0
\end{equation*}

for every $\epsilon >0$.

We conclude using the Lindeberg-Feller CLT (Theorem~\ref{thm:lind_feller}) that

\begin{equation*}
    \ell^\mathrm{sub, \rho}_n(\hat{\mu}^\mathrm{sub}_{n}; z^\star, \mu, C_\sigma)
    \rightsquigarrow \mathcal{N}\left ( -  \rho \frac{m^\star}{2},  \rho m^\star\right) 
\end{equation*}

Similarly, \underline{\textit{Under $H_1$:}}

Using Lemma~\ref{lem:exp_sub}, $\expect_1 [ W_n,j ] = \frac{\rho}{2} \frac{\left(\mu_j - z^\star_j \right)^2}{n \sigma_j^2} + o\left (\frac{1}{n} \right)$ and $\Var_1 [ W_{n,j} ] = \rho \frac{(z_j^\star - \mu_j)^2}{n \sigma_j^2} + o\left (\frac{1}{n} \right)$.

We get:
\begin{itemize}
    \item $\sum_{j = 1}^{d_n} \expect_1 [ W_n,j ] \rightarrow \frac{ \rho m^\star}{2}$

    \item $\sum_{j = 1}^{d_n} V_1 [ W_n,j ] \rightarrow \rho m^\star$
\end{itemize}

Similarly to Lemma~\ref{lem:lind_cond}, we can show that $W_{n,j}$ verify the Lindeberg-Feller condition, i.e.

\begin{equation*}
    \sum_{j = 1}^{d_n} \expect_1 \left [ W_{n, j}^2 \ind{|W_{n,j} | > \epsilon }  \right] \rightarrow 0
\end{equation*}

for every $\epsilon >0$.

We conclude using the Lindeberg-Feller CLT (Theorem~\ref{thm:lind_feller}) that

\begin{equation*}
    \ell^\mathrm{sub, \rho}_n(\hat{\mu}^\mathrm{sub}_{n}; z^\star, \mu, C_\sigma)
    \rightsquigarrow \mathcal{N}\left (\rho \frac{m^\star}{2},  \rho m^\star\right) 
\end{equation*}

\underline{\textbf{Step4: Characterising the advantage and the power function
}}

Using the same step as in the proof of Corollary~\ref{crl:per_leak}, we conclude.

\end{proof}

Now we present the proof of the helpful technical lemmas.
\begin{lemma}[Asymptotic expansion of the LR score for sub-sampling]\label{lem:asymp_sub}
We show that
\begin{align*}
    \log \left( (1- \rho) + \rho \sqrt{\frac{k_n}{k_n -1}} \frac{d_{k_n - 1,j} \left( \delta^\mathrm{in, sub}_{k_n,j} \right)}{d_{k_n,j} \left( \delta^\mathrm{out, sub}_{k_n,j} \right)} \right) =  W_{n, j} + o_p\left (\frac{1}{n} \right)
\end{align*}

where

\begin{align*}
        W_{n, j} &\defn \frac{\rho}{2}(\delta^\mathrm{out, sub}_{k_n,j})^2  -\frac{\rho}{2} (\delta^\mathrm{in, sub}_{k_n,j})^2   + \rho \frac{\lambda_3 \left (\mu_j - z^\star_j \right)}{k_n \sigma_j} R_{k_n,j} + \frac{\rho}{2 k_n}  \\
        &+ \frac{\rho( 1 - \rho)}{8} \left( (\delta^\mathrm{out, sub}_{k_n,j})^2  -(\delta^\mathrm{in, sub}_{k_n,j})^2 \right)^2
\end{align*}
\end{lemma}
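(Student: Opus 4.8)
The plan is to reduce the statement to the exact‑empirical‑mean expansion already established in Lemma~\ref{lem:asymp_exp}, plus a single second‑order Taylor expansion of the scalar function $\lambda\mapsto\log(1-\rho+\rho e^{\lambda})$. First I would introduce the log density ratio
\[
\Lambda_{j} \defn \log\!\left( \sqrt{\tfrac{k_n}{k_n-1}}\,\frac{d_{k_n-1,j}\!\left(\delta^\mathrm{in, sub}_{k_n,j}\right)}{d_{k_n,j}\!\left(\delta^\mathrm{out, sub}_{k_n,j}\right)} \right),
\]
so that the left‑hand side of the lemma is exactly $\log\!\big((1-\rho)+\rho e^{\Lambda_j}\big)$. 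Since $\hat{\mu}^\mathrm{sub}_{n,j}$ is (conditionally on the sub‑sampling) the empirical mean of $k_n$ i.i.d.\ draws from $\mathcal{D}_j$, Lemma~\ref{lem:asymp_exp} applies verbatim with $n$ replaced by $k_n$, and combined with $\log\sqrt{k_n/(k_n-1)} = \tfrac{1}{2k_n}+o(1/k_n)$ it gives
\[
\Lambda_j = \frac{1}{2k_n} + \frac{1}{2}\Big(\big(\delta^\mathrm{out, sub}_{k_n,j}\big)^2 - \big(\delta^\mathrm{in, sub}_{k_n,j}\big)^2\Big) + \frac{\lambda_3 (\mu_j - z^\star_j)}{k_n \sigma_j} R_{k_n,j} + o_p\!\left(\tfrac{1}{k_n}\right).
\]
Using the identity $\delta^\mathrm{in, sub}_{k_n,j} = \sqrt{k_n/(k_n-1)}\,\delta^\mathrm{out, sub}_{k_n,j} + (\mu_j-z^\star_j)/(\sigma_j\sqrt{k_n-1})$ one gets $\delta^\mathrm{in, sub}_{k_n,j}-\delta^\mathrm{out, sub}_{k_n,j}=O_p(1/\sqrt{k_n})$, hence the bracketed difference of squares is $O_p(1/\sqrt{k_n})$ and $\Lambda_j=O_p(1/\sqrt{k_n})$; in particular $\Lambda_j=o_p(1)$.

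Next I would set $g(\lambda)\defn\log(1-\rho+\rho e^{\lambda})$ and compute $g(0)=0$, $g'(0)=\rho$, $g''(0)=\rho(1-\rho)$, so $g(\lambda)=\rho\lambda+\tfrac{\rho(1-\rho)}{2}\lambda^2+R(\lambda)$ with $R(\lambda)=O(\lambda^3)$ as $\lambda\to0$. Applying Lemma~\ref{lem:tayl_exp} with the $o_p(1)$ sequence $\Lambda_j$ yields $R(\Lambda_j)=O_p(\Lambda_j^3)=O_p(k_n^{-3/2})=o_p(1/k_n)$, so
\[
\log\!\big((1-\rho)+\rho e^{\Lambda_j}\big) = \rho \Lambda_j + \frac{\rho(1-\rho)}{2}\Lambda_j^2 + o_p\!\left(\tfrac{1}{k_n}\right).
\]
Substituting the expansion of $\Lambda_j$ into $\rho\Lambda_j$ reproduces, up to $o_p(1/k_n)$, the first four terms of $W_{n,j}$ (the $\tfrac{\rho}{2k_n}$ term, the $\tfrac{\rho}{2}(\delta^\mathrm{out, sub}_{k_n,j})^2-\tfrac{\rho}{2}(\delta^\mathrm{in, sub}_{k_n,j})^2$ pair, and the $R_{k_n,j}$ term). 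For the curvature term I would write $\Lambda_j=\tfrac12\big((\delta^\mathrm{out, sub}_{k_n,j})^2-(\delta^\mathrm{in, sub}_{k_n,j})^2\big)+O_p(1/k_n)$; squaring and using that the leading bracket is $O_p(1/\sqrt{k_n})$ kills the cross term ($O_p(k_n^{-3/2})$) and gives $\Lambda_j^2=\tfrac14\big((\delta^\mathrm{out, sub}_{k_n,j})^2-(\delta^\mathrm{in, sub}_{k_n,j})^2\big)^2+o_p(1/k_n)$, so $\tfrac{\rho(1-\rho)}{2}\Lambda_j^2$ is precisely the fifth term of $W_{n,j}$ up to $o_p(1/k_n)$. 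Summing the two contributions and noting that $k_n=\rho n$ with $\rho$ a fixed constant (so $o_p(1/k_n)=o_p(1/n)$) finishes the argument.

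The main obstacle, compared with the exact‑mean case of Lemma~\ref{lem:asymp_exp}, is exactly the mixture structure of the ``in'' density: one no longer differentiates a bare log density ratio, but must expand $\log(1-\rho+\rho e^{\Lambda_j})$, and the nonzero curvature $g''(0)=\rho(1-\rho)$ is what injects the genuinely new quadratic term $\tfrac{\rho(1-\rho)}{8}\big((\delta^\mathrm{out, sub})^2-(\delta^\mathrm{in, sub})^2\big)^2$ into $W_{n,j}$. All the care lies in the order bookkeeping: one must verify that this term is of order exactly $1/k_n$ (so it is not swallowed by the $o_p(1/k_n)$ truncation), while the cubic‑and‑higher remainder of $g$, the cross term in $\Lambda_j^2$, and the $1/\sqrt{k_n}$‑versus‑$1/\sqrt{k_n-1}$ (and $1/k_n$‑versus‑$1/(k_n-1)$) discrepancies all vanish at rate $o_p(1/k_n)$ — and this is exactly what the stochastic Taylor estimate of Lemma~\ref{lem:tayl_exp} together with $\Lambda_j=O_p(k_n^{-1/2})$ delivers.
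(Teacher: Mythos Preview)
Your proof is correct, and the route is genuinely different from the paper's. The paper does not reuse Lemma~\ref{lem:asymp_exp}; instead it redoes the Edgeworth expansion of $d_{k_n,j}$ from scratch, inverts $d_{k_n,j}(\delta^{\mathrm{out,sub}}_{k_n,j})$ via a first‑order expansion, multiplies by the Edgeworth expansion of $d_{k_n-1,j}(\delta^{\mathrm{in,sub}}_{k_n,j})$, Taylor‑expands the factor $e^{((\delta^{\mathrm{out,sub}})^2-(\delta^{\mathrm{in,sub}})^2)/2}$ to second order, forms the convex combination with $(1-\rho)$, and only then expands the outer logarithm. Your approach short‑circuits all of this by packaging the entire density ratio into a single scalar $\Lambda_j$ whose expansion is already given by Lemma~\ref{lem:asymp_exp} (with $n\mapsto k_n$), and reducing the remaining work to a second‑order Taylor expansion of the single function $g(\lambda)=\log(1-\rho+\rho e^{\lambda})$. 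This is both shorter and more transparent: the identification $g'(0)=\rho$, $g''(0)=\rho(1-\rho)$ explains structurally why the new quadratic term carries exactly the prefactor $\rho(1-\rho)/8$, whereas in the paper's derivation that coefficient emerges only after cancellations between the $\rho/8$ coming from the exponential expansion and the $-\rho^2/8$ coming from the subsequent logarithm expansion. The paper's approach is self‑contained (it does not invoke Lemma~\ref{lem:asymp_exp} as a black box), but yours makes the modularity of the argument clear and avoids repeating the Edgeworth bookkeeping.
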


\begin{proof}
    First, we recall that
\begin{equation*}
    d_{k_n,j} \left( \delta^\mathrm{out, sub}_{k_n,j} \right) = \frac{1}{\sqrt{2 \pi}} e^{- \frac{(\delta^\mathrm{out, sub}_{k_n,j})^2}{2}}\left( 1 + \frac{h_1(\delta^\mathrm{out, sub}_{k_n,j})}{\sqrt{k_n}} + \frac{h_2(\delta^\mathrm{out, sub}_{k_n,j})}{k_n} + o_{p}\left ( \frac{1}{n} \right) \right)
\end{equation*}
 Taking the inverse, and developing it we get
\begin{align*}
    &\left(d_{k_n,j} \left( \delta^\mathrm{out, sub}_{k_n,j} \right) \right)^{- 1} \\
    = &\sqrt{2 \pi} e^{ \frac{(\delta^\mathrm{out, sub}_{k_n,j})^2}{2}}\left( 1 + \frac{h_1(\delta^\mathrm{out, sub}_{k_n,j})}{\sqrt{k_n}} + \frac{h_2(\delta^\mathrm{out, sub}_{k_n,j})}{k_n} + o_{p}\left ( \frac{1}{n} \right) \right)^{-1}\\
    = &\sqrt{2 \pi} e^{ \frac{(\delta^\mathrm{out, sub}_{k_n,j})^2}{2}}\left( 1 - \frac{h_1(\delta^\mathrm{out, sub}_{k_n,j})}{\sqrt{k_n}} - \frac{h_2(\delta^\mathrm{out, sub}_{k_n,j})}{k_n} + \frac{1}{2}\frac{\left(h_1(\delta^\mathrm{out, sub}_{k_n,j})\right)^2}{k_n}  + o_{p}\left ( \frac{1}{n} \right) \right)
\end{align*}

Combining with the other result

 \begin{equation*}
    d_{k_n-1,j} \left( \delta^\mathrm{in, sub}_{k_n,j} \right) = \frac{1}{\sqrt{2 \pi}} e^{- \frac{(\delta^\mathrm{in, sub}_{k_n,j})^2}{2}}\left( 1 + \frac{h_1(\delta^\mathrm{in, sub}_{k_n,j})}{\sqrt{k_n}} + \frac{h_2(\delta^\mathrm{in, sub}_{k_n,j})}{k_n} + o_{p}\left ( \frac{1}{n} \right) \right) 
\end{equation*}


Putting them together gives
\begin{align*}
    &\frac{d_{k_n-1,j} \left( \delta^\mathrm{in, sub}_{k_n,j} \right)}{d_{k_n,j} \left( \delta^\mathrm{out, sub}_{k_n,j} \right)}\\
    = &e^{\frac{(\delta^\mathrm{out, sub}_{k_n,j})^2  -(\delta^\mathrm{in, sub}_{k_n,j})^2}{2}} \left( 1 - \frac{h_1(\delta^\mathrm{out, sub}_{k_n,j})}{\sqrt{k_n}} - \frac{h_2(\delta^\mathrm{out, sub}_{k_n,j})}{k_n} + \frac{1}{2}\frac{\left(h_1(\delta^\mathrm{out, sub}_{k_n,j})\right)^2}{k_n}  + o_{p}\left ( \frac{1}{n} \right) \right)\\
    &\left( 1 + \frac{h_1(\delta^\mathrm{in, sub}_{k_n,j})}{\sqrt{k_n}} + \frac{h_2(\delta^\mathrm{in, sub}_{k_n,j})}{k_n} + o_{p}\left ( \frac{1}{n} \right) \right)\\
    = &e^{\frac{(\delta^\mathrm{out, sub}_{k_n,j})^2  -(\delta^\mathrm{in, sub}_{k_n,j})^2}{2}}  \Bigg( 1 + \frac{ h_1(\delta^\mathrm{in, sub}_{k_n,j}) - h_1(\delta^\mathrm{out, sub}_{k_n,j})}{\sqrt{k_n}}\\
    &\qquad\qquad+ \frac{1}{k_n} \left( \frac{1}{2}\left(h_1(\delta^\mathrm{out, sub}_{k_n,j})\right)^2    -  h_1(\delta^\mathrm{in, sub}_{k_n,j}) h_1(\delta^\mathrm{out, sub}_{k_n,j})  \right) + o_{p}\left ( \frac{1}{n} \right) \Bigg)
\end{align*}

Now, we proceed to do Taylor expansion of the exponential function, to get



\begin{align*}
    e^{\frac{(\delta^\mathrm{out, sub}_{k_n,j})^2  -(\delta^\mathrm{in, sub}_{k_n,j})^2}{2}} = 1 + \frac{1}{2}(\delta^\mathrm{out, sub}_{k_n,j})^2  -\frac{1}{2} (\delta^\mathrm{in, sub}_{k_n,j})^2 + \frac{1}{8} \left( (\delta^\mathrm{out, sub}_{k_n,j})^2  -(\delta^\mathrm{in, sub}_{k_n,j})^2 \right)^2 + o_{p}\left ( \frac{1}{n} \right)
\end{align*}


Putting everything together, we get

\begin{align*}
    &(1- \rho) + \rho \sqrt{\frac{k_n}{k_n -1}} \frac{d_{k_n - 1,j} \left( \delta^\mathrm{in, sub}_{k_n,j} \right)}{d_{k_n,j} \left( \delta^\mathrm{out, sub}_{k_n,j} \right)}\\
    = &1 +  \frac{\rho}{2 k_n} + \frac{\rho}{2}(\delta^\mathrm{out, sub}_{k_n,j})^2  -\frac{\rho}{2} (\delta^\mathrm{in, sub}_{k_n,j})^2 + \frac{\rho}{8} \left( (\delta^\mathrm{out, sub}_{k_n,j})^2  -(\delta^\mathrm{in, sub}_{k_n,j})^2 \right)^2 \\
    &+ \rho \frac{ h_1(\delta^\mathrm{in, sub}_{k_n,j}) - h_1(\delta^\mathrm{out, sub}_{k_n,j})}{\sqrt{k_n}}  + \frac{\rho}{k_n} \left( \frac{1}{2}\left(h_1(\delta^\mathrm{out, sub}_{k_n,j})\right)^2    -  h_1(\delta^\mathrm{in, sub}_{k_n,j}) h_1(\delta^\mathrm{out, sub}_{k_n,j})  \right) + o_{p}\left ( \frac{1}{n} \right) 
\end{align*}

Finally, another Taylor expansion of the logarithm gives
    
\begin{align*}
    &\log \left( (1- \rho) + \rho \sqrt{\frac{k_n}{k_n -1}} \frac{d_{k_n - 1,j} \left( \delta^\mathrm{in, sub}_{k_n,j} \right)}{d_{k_n,j} \left( \delta^\mathrm{out, sub}_{k_n,j} \right)} \right)\\
    = &1 +  \frac{\rho}{2 k_n} + \frac{\rho}{2}(\delta^\mathrm{out, sub}_{k_n,j})^2  -\frac{\rho}{2} (\delta^\mathrm{in, sub}_{k_n,j})^2 + \frac{\rho}{8} \left( (\delta^\mathrm{out, sub}_{k_n,j})^2  -(\delta^\mathrm{in, sub}_{k_n,j})^2 \right)^2 \\
    &+ \rho \frac{ h_1(\delta^\mathrm{in, sub}_{k_n,j}) - h_1(\delta^\mathrm{out, sub}_{k_n,j})}{\sqrt{k_n}}  + \frac{\rho}{k_n} \left( \frac{1}{2}\left(h_1(\delta^\mathrm{out, sub}_{k_n,j})\right)^2    -  h_1(\delta^\mathrm{in, sub}_{k_n,j}) h_1(\delta^\mathrm{out, sub}_{k_n,j})  \right) + o_{p}\left ( \frac{1}{n} \right) \\
    &- \frac{1}{2} \frac{\rho^2}{4} \left( (\delta^\mathrm{out, sub}_{k_n,j})^2  -(\delta^\mathrm{in, sub}_{k_n,j})^2 \right)^2 - \frac{\rho^2}{2 k_n} \left (h_1(\delta^\mathrm{in, sub}_{k_n,j}) - h_1(\delta^\mathrm{out, sub}_{k_n,j}) \right)^2 
\end{align*}

which after simplifications
\begin{align*}
    &\log \left( (1- \rho) + \rho \sqrt{\frac{k_n}{k_n -1}} \frac{d_{k_n - 1,j} \left( \delta^\mathrm{in, sub}_{k_n,j} \right)}{d_{k_n,j} \left( \delta^\mathrm{out, sub}_{k_n,j} \right)} \right)\\
    =  &\frac{\rho}{2}(\delta^\mathrm{out, sub}_{k_n,j})^2  -\frac{\rho}{2} (\delta^\mathrm{in, sub}_{k_n,j})^2   + \rho \frac{ h_1(\delta^\mathrm{in, sub}_{k_n,j}) - h_1(\delta^\mathrm{out, sub}_{k_n,j})}{\sqrt{k_n}}\\
    &+  \frac{\rho}{2 k_n}  + \frac{\rho( 1 - \rho)}{8} \left( (\delta^\mathrm{out, sub}_{k_n,j})^2  -(\delta^\mathrm{in, sub}_{k_n,j})^2 \right)^2  + o_{p}\left ( \frac{1}{n} \right)  
\end{align*}
concludes the proof.


    
\end{proof}

\begin{lemma}{Expectation and variance computations for sub-sampling}\label{lem:exp_sub}
\begin{align*}
    &\expect_0 [ W_{n,j} ] = - \frac{\rho}{2} \frac{\left(\mu_j - z^\star_j \right)^2}{n \sigma_j^2} + o\left (\frac{1}{n} \right)  &&\Var_0 [ W_{n,j} ] = \rho  \frac{(\mu_j - z^\star_j)^2}{n \sigma_j^2} + o\left(\frac{1}{n} \right)\\
    &\expect_1 [ W_{n,j} ] = \frac{\rho}{2} \frac{\left(\mu_j - z^\star_j \right)^2}{n \sigma_j^2} + o\left (\frac{1}{n} \right) && \Var_1 [ W_{n,j} ] = \rho \frac{(z_j^\star - \mu_j)^2}{n \sigma_j^2} + o\left (\frac{1}{n} \right)
\end{align*}
\end{lemma}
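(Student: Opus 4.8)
\textit{Proof proposal.} The plan is to reduce these moment computations to those already carried out for the exact empirical mean in Lemma~\ref{lem:exp_comp}, by exploiting a clean distributional identity for the pair $(\delta^\mathrm{out, sub}_{k_n,j},\delta^\mathrm{in, sub}_{k_n,j})$, which below I abbreviate as $(\delta^\mathrm{o},\delta^\mathrm{i})$. Set $Y_{k_n,j}\defn\tfrac12\big((\delta^\mathrm{o})^2-(\delta^\mathrm{i})^2\big)+\tfrac{\lambda_3(\mu_j-z^\star_j)}{k_n\sigma_j}R_{k_n,j}$, which is exactly the quantity $Y$ of Lemma~\ref{lem:exp_comp} with $n$ replaced by $k_n$; then
\[
W_{n,j}=\rho\,Y_{k_n,j}+\tfrac{\rho}{2k_n}+\tfrac{\rho(1-\rho)}{8}\big((\delta^\mathrm{o})^2-(\delta^\mathrm{i})^2\big)^2 .
\]

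The key step is the distributional reduction. Because the sub-sampling permutation $\varsigma$ is independent of $(Z_i)$, under $H_0$ the coordinate $\hat\mu^\mathrm{sub}_{n,j}$ is simply the empirical mean of $k_n=\rho n$ i.i.d.\ draws from $\mathcal D_j$, so $(\delta^\mathrm{o},\delta^\mathrm{i})$ has the same joint law as the exact-mean pair under $H_0$ (with $n\to k_n$). Under $H_1$, I would condition on the event $A=\{\varsigma(n)\le k_n\}$ that $z^\star$ is retained, with $\mathrm{Pr}(A)=\rho$: on $A$ the sub-sample consists of $z^\star$ together with $k_n-1$ i.i.d.\ draws, and a short computation gives $\delta^\mathrm{i}\mid A=\hat Z_{k_n-1,j}$ (the centred normalised mean of those draws) and $\delta^\mathrm{o}\mid A=\sqrt{(k_n-1)/k_n}\,\delta^\mathrm{i}+\tfrac{z^\star_j-\mu_j}{\sigma_j\sqrt{k_n}}$, i.e.\ $(\delta^\mathrm{o},\delta^\mathrm{i})\mid A$ has the exact-mean $H_1$ law (with $n\to k_n$); on $A^c$ the sub-sample is $k_n$ i.i.d.\ draws and $(\delta^\mathrm{o},\delta^\mathrm{i})\mid A^c$ has the exact-mean $H_0$ law. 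Hence for any measurable $f$, $\expect_1[f(\delta^\mathrm{o},\delta^\mathrm{i})]=\rho\,\expect^{\mathrm{exact},k_n}_1[f]+(1-\rho)\,\expect^{\mathrm{exact},k_n}_0[f]$, while $\expect_0[f(\delta^\mathrm{o},\delta^\mathrm{i})]=\expect^{\mathrm{exact},k_n}_0[f]$.

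With this, I would import the moments of $Y$ and of $((\delta^\mathrm{out})^2-(\delta^\mathrm{in})^2)^2$ from Lemma~\ref{lem:exp_comp} (and its proof) with $n\to k_n$, noting that the expansion of the latter quantity is $\tfrac{4(z^\star_j-\mu_j)^2}{k_n\sigma_j^2}+o(1/k_n)$ under both the exact $H_0$ and $H_1$ laws, so it survives the mixing intact; likewise $\Var_h[Y_{k_n,j}]=\tfrac{(z^\star_j-\mu_j)^2}{k_n\sigma_j^2}+o(1/k_n)$ for $h\in\{0,1\}$. For the variances, since $(\delta^\mathrm{o})^2-(\delta^\mathrm{i})^2=O_p(1/\sqrt{k_n})$ the last summand of $W_{n,j}$ is $O_p(1/k_n)$, with variance $o(1/k_n)$ (using the moment assumptions to bound its second moment) and covariance $o(1/k_n)$ with $\rho\,Y_{k_n,j}$ (Cauchy--Schwarz), and the constant $\tfrac{\rho}{2k_n}$ adds nothing; hence $\Var_h[W_{n,j}]=\rho^2\Var_h[Y_{k_n,j}]+o(1/k_n)=\rho^2\tfrac{(z^\star_j-\mu_j)^2}{k_n\sigma_j^2}+o(1/k_n)$, which equals $\rho\,\tfrac{(z^\star_j-\mu_j)^2}{n\sigma_j^2}+o(1/n)$ after substituting $k_n=\rho n$. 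For the expectations, the mixture formula gives $\expect_0[Y_{k_n,j}]=-\tfrac{1}{2k_n}-\tfrac{(z^\star_j-\mu_j)^2}{2k_n\sigma_j^2}+o(1/k_n)$ and $\expect_1[Y_{k_n,j}]=-\tfrac{1}{2k_n}+\tfrac{(2\rho-1)(z^\star_j-\mu_j)^2}{2k_n\sigma_j^2}+o(1/k_n)$, so that, adding the three pieces,
\[
\expect_0[W_{n,j}]=\tfrac{(z^\star_j-\mu_j)^2}{2k_n\sigma_j^2}\big(-\rho+\rho(1-\rho)\big)+o(1/k_n)=-\tfrac{\rho(z^\star_j-\mu_j)^2}{2n\sigma_j^2}+o(1/n),
\]
and similarly, using $\rho(2\rho-1)+\rho(1-\rho)=\rho^2$, $\expect_1[W_{n,j}]=\tfrac{\rho(z^\star_j-\mu_j)^2}{2n\sigma_j^2}+o(1/n)$.

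The main obstacle I anticipate is twofold. First, carefully establishing the conditional law in the reduction — checking that on $A$, after deleting the slot holding $z^\star$, the retained coordinates genuinely form $k_n-1$ i.i.d.\ draws and $\delta^\mathrm{i}\mid A$ collapses to their normalised mean; this is exactly where independence of $\varsigma$ from $(Z_i)$ enters. Second, and more subtly, one must keep the $O(1/k_n)$ contribution of the quadratic term $\tfrac{\rho(1-\rho)}{8}((\delta^\mathrm{o})^2-(\delta^\mathrm{i})^2)^2$ to the expectation: dropping it would leave $-\rho$ rather than $-\rho^2$ as the leading coefficient and, after dividing by $k_n=\rho n$, would spuriously yield leakage score $m^\star$ in place of $\rho\,m^\star$, contradicting Theorem~\ref{thm:lr_sub}. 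The cancellation $-\rho+\rho(1-\rho)=-\rho^2$ is the heart of the argument.
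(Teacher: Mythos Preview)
Your proposal is correct. The distributional reduction you identify — that under $H_0$ the pair $(\delta^\mathrm{o},\delta^\mathrm{i})$ has the exact-mean $H_0$ law with sample size $k_n$, while under $H_1$ it is the $\rho/(1-\rho)$ mixture of the exact-mean $H_1$ and $H_0$ laws — is sound, and the moment bookkeeping you do on top of it (including the crucial cancellation $-\rho+\rho(1-\rho)=-\rho^2$ and the mixture-variance argument) checks out line by line.

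The paper takes a more computational route: it also conditions on the event $A=\{\varsigma(n)\le k_n\}$, but only to obtain $\expect_1[\hat\mu^\mathrm{sub}_{n,j}]$ and $\Var_1[\hat\mu^\mathrm{sub}_{n,j}]$, and then recomputes from scratch the individual moments $\expect_h[(\delta^\mathrm{o})^2]$, $\expect_h[(\delta^\mathrm{i})^2]$, $\expect_h[\delta^\mathrm{o}\delta^\mathrm{i}]$, $\expect_h[((\delta^\mathrm{o})^2-(\delta^\mathrm{i})^2)^2]$, etc., largely duplicating the work of Lemma~\ref{lem:exp_comp}. Your approach is cleaner because it lifts the conditioning to the joint law of $(\delta^\mathrm{o},\delta^\mathrm{i})$ and then \emph{imports} all those moments wholesale from Lemma~\ref{lem:exp_comp}; in particular, the identity $\expect_1[f(\delta^\mathrm{o},\delta^\mathrm{i})]=\rho\,\expect_1^{\mathrm{exact},k_n}[f]+(1-\rho)\,\expect_0^{\mathrm{exact},k_n}[f]$ makes the $(2\rho-1)$ coefficient in $\expect_1[Y_{k_n,j}]$ appear immediately and transparently. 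The trade-off is that your argument requires a slightly sharper justification that, conditionally on $A$ (resp.\ $A^c$), the retained coordinates other than $z^\star$ really are $k_n-1$ (resp.\ $k_n$) fresh i.i.d.\ draws from $\mathcal D_j$ — true because a uniformly random subset of an i.i.d.\ sample is again i.i.d., and because $\varsigma$ is independent of the data — whereas the paper sidesteps this by working only with first and second moments of $\hat\mu^\mathrm{sub}_{n,j}$.
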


\begin{proof}
    Let us recall that
    \begin{equation*}
        W_{n, j} \defn \frac{\rho}{2}(\delta^\mathrm{out, sub}_{k_n,j})^2  -\frac{\rho}{2} (\delta^\mathrm{in, sub}_{k_n,j})^2   + \rho \frac{\lambda_3 \left (\mu_j - z^\star_j \right)}{k_n \sigma_j} R_{k_n,j} + \frac{\rho}{2 k_n}  + \frac{\rho( 1 - \rho)}{8} \left( (\delta^\mathrm{out, sub}_{k_n,j})^2  -(\delta^\mathrm{in, sub}_{k_n,j})^2 \right)^2
    \end{equation*}
    
    where $R_{k_n,j} \defn \left(\delta^\mathrm{out, sub}_{k_n,j} \right)^2 + \delta^\mathrm{out, sub}_{k_n,j} \delta^\mathrm{in, sub}_{k_n,j} + \left(\delta^\mathrm{in, sub}_{k_n,j} \right)^2  - 3 $
    
    \underline{Under $H_0$}:
    
    Since $\expect_0(\hat{\mu}^\mathrm{sub}_{n,j}) = \mu_j$ and $\Var_0(\hat{\mu}^\mathrm{sub}_{n,j}) = \frac{1}{k_n} \sigma_j^2$, we get that
    \begin{align*}
        &\expect_0\left(\delta^\mathrm{out, sub}_{k_n,j} \right) = 0 && \expect_0\left(\left(\delta^\mathrm{out, sub}_{k_n,j} \right)^2 \right) = 1\\
        &\expect_0\left(\delta^\mathrm{in, sub}_{k_n,j} \right) = \frac{\mu_j - z^\star_j}{\sigma_j \sqrt{k_n - 1}} && \expect_0\left(\left(\delta^\mathrm{in, sub}_{k_n,j} \right)^2 \right) = 1 + \frac{1}{k_n} + \frac{\left(\mu_j - z^\star_j \right)^2}{k_n \sigma_j^2} + o\left(\frac{1}{n} \right)
    \end{align*}
    
    Also
    \begin{equation*}
        \expect_0\left( \delta^\mathrm{out, sub}_{k_n,j} \delta^\mathrm{in, sub}_{k_n,j} \right) = \sqrt{\frac{k_n}{k_n - 1}} = 1 + \frac{1}{2 k_n} + o\left (\frac{1}{n}\right)
    \end{equation*}
    
    Which means that
    \begin{equation*}
        \expect_0 \left( R_{k_n,j} \right) = \frac{3}{2k_n} + \frac{\left(\mu_j - z^\star_j \right)^2}{k_n \sigma_j^2} + o\left (\frac{1}{n}\right) = o(1)
    \end{equation*}
    
    We compute the following expectations:
    \begin{align*}
        \expect_0[\left(\delta^\mathrm{out, sub}_{k_n,j} \right)^3] &= \frac{1}{\sqrt{k_n}} \frac{\mu_{3, j}}{\sigma_j^3}\\
        \expect_0[\left(\delta^\mathrm{out, sub}_{k_n,j} \right)^4] &= 3 + \frac{1}{k_n} \left ( \frac{\mu_{4,j}}{\sigma_j^4} - 3 \right)\\
        \expect_0[\left(\delta^\mathrm{in, sub}_{k_n,j} \right)^4] &= 3 + \frac{1}{k_n} \left ( \frac{\mu_{4,j}}{\sigma_j^4} - 3 \right)+ \frac{6}{k_n} + \frac{4}{k_n} \frac{\mu_j - z^\star_j}{\sigma_j^4}\mu_{3,j}  + \frac{6}{k_n}  \frac{(\mu_j - z^\star_j)^2}{\sigma_j^2} + o\left(\frac{1}{n} \right) \\
        \expect_0[\left(\delta^\mathrm{out, sub}_{k_n,j} \right)^2 \left(\delta^\mathrm{in, sub}_{k_n,j} \right)^2 ] &= 3 + \frac{1}{k_n} \left ( \frac{\mu_{4,j}}{\sigma_j^4} - 3 \right) + \frac{3}{k_n} + \frac{2}{k_n} \frac{\mu_j - z^\star_j}{\sigma_j^4}\mu_{3,j}  + \frac{1}{k_n}  \frac{(\mu_j - z^\star_j)^2}{\sigma_j^2} + o\left(\frac{1}{n} \right) 
    \end{align*}
    
    All in all, we have
    \begin{align*}
        \expect_0 \left [ \left (  \left(\delta^\mathrm{out, sub}_{k_n,j} \right)^2 - \left(\delta^\mathrm{in, sub}_{k_n,j} \right)^2  \right)^2 \right ] &= \frac{4}{k_n}  \frac{(\mu_j - z^\star_j)^2}{\sigma_j^2} + o\left(\frac{1}{n} \right)
    \end{align*}
    
    Finally
    \begin{align*}
        \expect_0 [ W_{n,j} ] &=  - \rho \frac{\left(\mu_j - z^\star_j \right)^2}{2 k_n \sigma_j^2} + \frac{\rho (1 - \rho)}{2} \frac{\left(\mu_j - z^\star_j \right)^2}{ k_n \sigma_j^2}  + o\left (\frac{1}{n} \right)\\
        &= - \frac{\rho}{2} \frac{\left(\mu_j - z^\star_j \right)^2}{n \sigma_j^2} + o\left (\frac{1}{n} \right)
    \end{align*}
    
    On the other hand, 
    \begin{align*}
        \Var_0 [ W_{n,j} ] &= \expect_0 [ W_{n,j}^2 ] - \left (\expect_0 [ W_{n,j}] \right)^2 \notag\\
        &= \expect_0 [ W_{n,j}^2 ] + o\left (\frac{1}{n} \right) \notag\\
        &= \frac{\rho^2}{4} \expect_0 \left [ \left (  \left(\delta^\mathrm{out, sub}_{k_n,j} \right)^2 - \left(\delta^\mathrm{in, sub}_{k_n,j} \right)^2  \right)^2 \right ] + o\left (\frac{1}{n} \right) \notag\\
        &= \frac{\rho^2}{k_n}  \frac{(\mu_j - z^\star_j)^2}{\sigma_j^2} + o\left(\frac{1}{n} \right) + o\left (\frac{1}{n} \right) \notag\\
        &= \frac{\rho}{n}  \frac{(\mu_j - z^\star_j)^2}{\sigma_j^2} + o\left(\frac{1}{n} \right)
    \end{align*}

    \underline{Under $H_1$}:
    
     Let $A = \{ \ind{\varsigma(n) \leq k_n} \}$ the event that $z^\star$ was sub-sampled. We have that $\mathrm{Pr}(A) = \rho$.
    
    We have the following
    \begin{align*}
        \expect_1(\hat{\mu}^\mathrm{sub}_{n,j}) &= \rho \expect_1(\hat{\mu}^\mathrm{sub}_{n,j} \mid A) + (1 - \rho) \expect_1(\hat{\mu}^\mathrm{sub}_{n,j} \mid A^c) \\
        &= \rho ( \mu_j + \frac{1}{k_n} (z_j^\star - \mu_j)) + (1 - \rho)\mu_j\\
        &=\mu_j + \frac{1}{n} \left( z^\star_j - \mu_j\right)\\
        \Var_1\left(\hat{\mu}^\mathrm{sub}_{n,j}\right) &= \expect_1\left[\left(\hat{\mu}^\mathrm{sub}_{n,j} - \mu_j - \frac{1}{n} \left( z^\star_j - \mu_j\right) \right)^2\right]\\
        &= \rho \expect_1\left[\left(\hat{\mu}^\mathrm{sub}_{n,j} - \mu_j - \frac{1}{n} \left( z^\star_j - \mu_j\right) \right)^2 \mid A\right] + (1 - \rho) \expect_1\left[\left(\hat{\mu}^\mathrm{sub}_{n,j} - \mu_j + \frac{1}{n} \left( z^\star_j - \mu_j\right) \right)^2 \mid A^c\right] \\
        &= \rho \left( \frac{k_n - 1}{k_n^2} \sigma_j^2 + \left( \frac{1 - \rho}{k_n} (z_j^\star - \mu_j)\right)^2 \right) + (1 - \rho)\left( \frac{1}{k_n} \sigma_j^2 + \frac{\rho^2}{k_n^2} \left( z^\star_j - \mu_j\right)^2 \right)\\
        &= \frac{\sigma_j^2}{k_n} \left( 1 - \frac{\rho}{k_n}\right) + \frac{\rho(1 - \rho)}{k_n^2} \left( z^\star_j - \mu_j\right)^2 \\
    \end{align*}
    
    Again, we compute the following expectations:
    \begin{align*}
        \expect_1\left(\left(\delta^\mathrm{out, sub}_{k_n,j} \right)^2\right) &= \frac{k_n}{\sigma_j^2} \left( \Var_1\left(\hat{\mu}^\mathrm{sub}_{n,j}\right) + \frac{\left(\mu_j - z^\star_j \right)^2}{n^2}   \right)\\
        &=  \frac{k_n}{\sigma_j^2} \left( \frac{\sigma_j^2}{k_n} \left( 1 - \frac{\rho}{k_n}\right) + \frac{\rho(1 - \rho)}{k_n^2} \left( z^\star_j - \mu_j\right)^2  + \rho^2\frac{\left(\mu_j - z^\star_j \right)^2}{k_n^2}   \right)\\
        &= 1 - \frac{\rho}{k_n} + \rho \frac{\left(\mu_j - z^\star_j \right)^2}{k_n \sigma_j^2}
    \end{align*}
    
    and
    \begin{align*}
        \expect_1\left(\left(\delta^\mathrm{in, sub}_{k_n,j} \right)^2\right) &= \frac{k_n^2}{(k_n - 1)\sigma_j^2} \left( \Var_1\left(\hat{\mu}^\mathrm{sub}_{n,j}\right) + \left( \frac{1 - \rho}{k_n} (z_j^\star - \mu_j)\right)^2   \right)\\
        &=  \frac{k_n^2}{(k_n - 1)\sigma_j^2} \left( \frac{\sigma_j^2}{k_n} \left( 1 - \frac{\rho}{k_n}\right) + \frac{\rho(1 - \rho)}{k_n^2} \left( z^\star_j - \mu_j\right)^2  + \left( \frac{1 - \rho}{k_n} (z_j^\star - \mu_j)\right)^2  \right)\\
        &= 1 + \frac{1 - \rho}{k_n} + \left (1 - \rho \right)\frac{\left(\mu_j - z^\star_j \right)^2}{k_n \sigma_j^2} + o\left (\frac{1}{n} \right)
    \end{align*}
    
    
    Thus again, we get
    \begin{align*}
        &\expect_1\left(\delta^\mathrm{out, sub}_{k_n,j} \right) = \sqrt{\rho} \frac{ z^\star_j - \mu_j}{\sigma_j \sqrt{n}}  && \expect_1\left(\left(\delta^\mathrm{out, sub}_{k_n,j} \right)^2 \right) = 1 - \frac{1}{n} + \frac{\left(\mu_j - z^\star_j \right)^2}{n \sigma_j^2} \\
        &\expect_1\left(\delta^\mathrm{in, sub}_{k_n,j} \right) = (1 - \rho) \frac{ \mu_j - z^\star_j}{\sigma_j \sqrt{k_n - 1}}  && \expect_1\left(\left(\delta^\mathrm{in, sub}_{k_n,j} \right)^2 \right) = 1 + \frac{1 - \rho}{k_n} + \left (1 - \rho \right)\frac{\left(\mu_j - z^\star_j \right)^2}{k_n \sigma_j^2} + o\left (\frac{1}{n} \right)
    \end{align*}
    
    Now, we find that
    \begin{align*}
        \expect_1\left(\left(\delta^\mathrm{out, sub}_{k_n,j} \right)^2\right) - \expect_1\left(\left(\delta^\mathrm{in, sub}_{k_n,j} \right)^2\right) = - \frac{1}{k_n} + \left (2 \rho - 1 \right) \frac{\left(\mu_j - z^\star_j \right)^2}{k_n \sigma_j^2} + o\left (\frac{1}{n} \right)
    \end{align*}
    
    Also,
    \begin{align*}
        \expect_1\left( \delta^\mathrm{out, sub}_{k_n,j} \delta^\mathrm{in, sub}_{k_n,j} \right) &= \expect_1 \left ( \sqrt{\frac{k_n}{k_n - 1}} \left (\delta^\mathrm{out, sub}_{k_n,j} \right)^2 + \frac{\mu_j - z^\star_j}{\sigma_j \sqrt{k_n - 1}} \delta^\mathrm{out, sub}_{k_n,j}\right)\\
        &= \sqrt{\frac{k_n}{k_n - 1}} \left ( 1 - \frac{1}{n} + \frac{\left(\mu_j - z^\star_j \right)^2}{n \sigma_j^2} \right ) + \frac{\mu_j - z^\star_j}{\sigma_j \sqrt{k_n - 1}} \left ( \sqrt{\rho} \frac{ z^\star_j - \mu_j}{\sigma_j \sqrt{n}} \right)\\
        &= 1 + o(1)
    \end{align*}
    
    Which means that
    \begin{equation*}
        \expect_1 \left( R_{k_n,j} \right) = o(1)
    \end{equation*}
    
    Again, following the same steps as in $H_0$, we arrive at 
    \begin{align*}
        \expect_1 \left [ \left (  \left(\delta^\mathrm{out, sub}_{k_n,j} \right)^2 - \left(\delta^\mathrm{in, sub}_{k_n,j} \right)^2  \right)^2 \right ] &= \frac{4}{k_n}  \frac{(\mu_j - z^\star_j)^2}{\sigma_j^2} + o\left(\frac{1}{n} \right)
    \end{align*}
    
    Finally
    \begin{align*}
        \expect_1 [ W_{n,j} ] &=  \rho (2 \rho - 1) \frac{\left(\mu_j - z^\star_j \right)^2}{2 k_n \sigma_j^2} + \frac{\rho (1 - \rho)}{2} \frac{\left(\mu_j - z^\star_j \right)^2}{ k_n \sigma_j^2}  + o\left (\frac{1}{n} \right) \notag\\
        &= \rho \frac{\left(\mu_j - z^\star_j \right)^2}{2 n \sigma_j^2} + o\left (\frac{1}{n} \right)
    \end{align*}
    
    Again, 
    \begin{align*}
        \Var_1 [ W_{n,j} ] &= \frac{\rho^2}{4} \expect_1 \left [ \left (  \left(\delta^\mathrm{out, sub}_{k_n,j} \right)^2 - \left(\delta^\mathrm{in, sub}_{k_n,j} \right)^2  \right)^2 \right ] + o\left (\frac{1}{n} \right) \notag\\
        &= \frac{\rho^2}{k_n}  \frac{(\mu_j - z^\star_j)^2}{\sigma_j^2} + o\left(\frac{1}{n} \right) + o\left (\frac{1}{n} \right) \notag\\
        &= \frac{\rho}{n}  \frac{(\mu_j - z^\star_j)^2}{\sigma_j^2} + o\left(\frac{1}{n} \right)
    \end{align*}
    
\end{proof}

\subsection{Effect of Misspecifiaction, Proof of Theorem~\ref{thm:misspec} }

\begin{reptheorem}{thm:misspec}[Leakage of a misspecified adversary]
We show that as $d, n \gets \infty$ while $d/n = \tau$,

- Under $H_0$, $\ell_n(\hat{\mu}_n; z^{\mathrm{targ}}, \mu, C_\sigma) \rightsquigarrow \mathcal{N}\left(-\frac{ m^\mathrm{targ}}{2}, m^\star \right)$.

- Under $H_1$, $\ell_n(\hat{\mu}_n; z^{\mathrm{targ}}, \mu, C_\sigma) \rightsquigarrow \mathcal{N}\left(\frac{m^\star - m^\mathrm{diff}}{2}, m^\star \right)$
    
Let the adversary $\mathcal{A}_\mathrm{miss}$ use $\ell_n(\hat{\mu}_n; z^{\mathrm{targ}}, \mu, C_\sigma)$ as the LR test function. Then,
\begin{center}
    $\lim_{n,d} \mathrm{Adv}_n(\mathcal{A}_\mathrm{miss}) = \Phi\left (\frac{|m^\mathrm{scal} |}{2 \sqrt{m^\star}}\right) - \Phi\left(-\frac{|m^\mathrm{scal} |}{2 \sqrt{m^\star}}\right)$.
\end{center}
\end{reptheorem}

\begin{proof}
        
\underline{\textbf{Step 1: Asymptotic expansion of the LR score}}

Directly using Equation~\eqref{eq:asymp_lr} from the proof in Section~\ref{sec:prf_thm_mean}, by only replacing $z^\star$ by $z^\mathrm{targ}$, we get

\begin{align*}
    \ell_n(\hat{\mu}_n; z^\star, \mu, C_\sigma) &= \frac{\tau}{2} + o_p(1) + \sum_{j = 1}^{d_n} Y^\mathrm{targ}_{n, j} 
\end{align*}
where

\begin{align*}
   Y^\mathrm{targ}_{n, j}  \defn \frac{1}{2} \left( \left(\delta^\mathrm{out, targ}_{n,j} \right)^2 -   \left(\delta^\mathrm{in, targ}_{n,j} \right)^2\right) + \frac{\lambda_3 \left (\mu_j - z^\mathrm{targ}_j \right)}{n \sigma_j} R^\mathrm{targ}_{n,j},
\end{align*}
and 

\begin{align*}
        \delta^\mathrm{out, targ}_{n,j} &\defn \frac{\sqrt{n}}{\sigma_j} \left (\hat{\mu}_{n,j} - \mu_j  \right ) \\
        \delta^\mathrm{in, targ}_{n,j} &\defn \frac{n}{\sqrt{n - 1}\sigma_j} \left (\hat{\mu}_{n,j} - \mu_j + \frac{1}{n} \left (\mu_j - z^\mathrm{targ}_j \right)  \right ) \\
        R^\mathrm{targ}_{n,j} &\defn \left(\delta^\mathrm{out, targ}_{n,j} \right)^2 + \delta^\mathrm{out, targ}_{n,j} \delta^\mathrm{in, targ}_{n,j} + \left(\delta^\mathrm{in, targ}_{n,j} \right)^2  - 3  
    \end{align*}    

\underline{\textbf{Step 2: Computing expectations and variances}} 

This is the step where the effect of misspecification appears.


    
    
    \underline{Under $H_0$}:
    
    Since $\expect_0(\hat{\mu}_{n,j}) = \mu_j$ and $\Var_0(\hat{\mu}_{n,j}) = \frac{1}{n} \sigma_j^2$, we get that
    \begin{align*}
        &\expect_0\left(\delta^\mathrm{out, targ}_{n,j} \right) = 0 && \expect_0\left(\left(\delta^\mathrm{out, targ}_{n,j} \right)^2 \right) = 1\\
        &\expect_0\left(\delta^\mathrm{in, targ}_{n,j} \right) = \frac{\mu_j - z^\mathrm{targ}_j}{\sigma_j \sqrt{n - 1}} && \expect_0\left(\left(\delta^\mathrm{in, targ}_{n,j} \right)^2 \right) = 1 + \frac{1}{n} + \frac{\left(\mu_j - z^\mathrm{targ}_j \right)^2}{n \sigma_j^2} + o\left(\frac{1}{n} \right)
    \end{align*}

    Using similar steps as in Lemma~\ref{lem:exp_comp}, we get
    \begin{equation*}
        \expect_0 [ Y^\mathrm{targ}_{n,j} ] = - \frac{1}{2n} - \frac{(z_j^\mathrm{targ} - \mu_j)^2}{2 n \sigma_j^2} + o\left (\frac{1}{n} \right)
    \end{equation*}
    
    And
    \begin{equation*}
        \Var_0 [ Y^\mathrm{targ}_{n,j} ] = \frac{1}{n}  \frac{(\mu_j - z^\mathrm{targ}_j)^2}{\sigma_j^2} + o\left(\frac{1}{n} \right)
    \end{equation*}

    \underline{Under $H_1$}:
    
    Since $\expect_1(\hat{\mu}_{n,j}) = \mu_j + \frac{1}{n} \left( z^\star_j - \mu_j\right)$ and $\Var_1(\hat{\mu}_{n,j}) = \frac{n - 1}{n^2}\sigma_j^2$,
    let $$\delta^\mathrm{in, true}_{n,j} \defn \frac{n}{\sqrt{n - 1}\sigma_j} \left (\hat{\mu}_{n,j} - \mu_j + \frac{1}{n} \left (\mu_j - z^\star_j \right)  \right ),$$ 
    
    and thus 
    
     \begin{align*}
         &\expect_1\left(\delta^\mathrm{in, true}_{n,j} \right) = 0, && \expect_1\left(\left(\delta^\mathrm{in, true}_{n,j} \right)^2 \right) = 1
     \end{align*}
     
     On the other hand, we can rewrite
     
     \begin{align*}
         \delta^\mathrm{out, targ}_{n,j} &= \sqrt{\frac{n - 1}{n}} \delta^\mathrm{in, true}_{n,j} + \frac{ z^\star_j - \mu_j}{\sigma_j \sqrt{n}} \\
        \delta^\mathrm{in, targ}_{n,j} &=  \delta^\mathrm{in, true}_{n,j} + \frac{z^\star_j - z^\mathrm{targ}_j}{\sigma_j \sqrt{n - 1}}
     \end{align*}
     
    Thus, we get that
    \begin{align*}
        &\expect_1\left(\delta^\mathrm{out, targ}_{n,j} \right) = \frac{ z^\star_j - \mu_j}{\sigma_j \sqrt{n}}  && \expect_1\left(\left(\delta^\mathrm{out, targ}_{n,j} \right)^2 \right) = 1 - \frac{1}{n} + \frac{\left(\mu_j - z^\star_j \right)^2}{n \sigma_j^2} \\
        &\expect_1\left(\delta^\mathrm{in, targ}_{n,j} \right) = \frac{z^\star_j - z^\mathrm{targ}_j}{\sigma_j \sqrt{n - 1}} && \expect_1\left(\left(\delta^\mathrm{in, targ}_{n,j} \right)^2 \right) = 1 + \frac{1}{(n - 1) \sigma_j^2} \left( z^\star_j - z^\mathrm{targ}_j \right)^2
    \end{align*}
    
    Also
    \begin{align*}
        \expect_1\left( \delta^\mathrm{out, targ}_{n,j} \delta^\mathrm{in, targ}_{n,j} \right) &= \sqrt{\frac{n - 1}{n}} + \frac{\left(z^\star_j -\mu_j \right)\left(z^\star_j - z^\mathrm{targ} \right)}{ \sigma_j^2 \sqrt{n} \sqrt{n - 1}}\\
        &= 1 - \frac{1}{2n} + \frac{\left(z^\star_j -\mu_j \right)\left(z^\star_j - z^\mathrm{targ} \right)}{ n \sigma_j^2} + o\left (\frac{1}{n}\right)
    \end{align*}
    

    Which means that
    \begin{equation*}
        \expect_1 \left( R^\mathrm{targ}_{n,j} \right) = o(1)
    \end{equation*}
    
    Finally
    \begin{equation*}
        \expect_1 [ Y^\mathrm{targ}_{n,j} ] = - \frac{1}{2n} + \frac{(z_j^\star - \mu_j)^2 - \left( z^\star_j - z^\mathrm{targ}_j \right)^2}{2 n \sigma_j^2} + o\left (\frac{1}{n} \right)
    \end{equation*}

    \begin{equation*}
        \Var_1 [ Y^\mathrm{targ}_{n,j} ] = \frac{1}{n}  \frac{(\mu_j - z^\mathrm{targ}_j)^2}{\sigma_j^2} + o\left(\frac{1}{n} \right)
    \end{equation*}
    
All in all, we proved that

\begin{align*}
    &\expect_0 [ Y^\mathrm{targ}_{n,j} ] =  - \frac{1}{2n} - \frac{(z_j^\mathrm{targ} - \mu_j)^2}{2 n \sigma_j^2} + o\left (\frac{1}{n} \right)  \\
    &\Var_0 [ Y^\mathrm{targ}_{n,j} ] = \frac{1}{n}  \frac{(\mu_j - z^\mathrm{targ}_j)^2}{\sigma_j^2} + o\left(\frac{1}{n} \right)\\
    &\expect_1 [ Y^\mathrm{targ}_{n,j} ] = - \frac{1}{2n} + \frac{(z_j^\star - \mu_j)^2 - \left( z^\star_j - z^\mathrm{targ}_j \right)^2}{2 n \sigma_j^2} + o\left (\frac{1}{n} \right) \\
    &\Var_1 [ Y^\mathrm{targ}_{n,j} ] = \frac{1}{n}  \frac{(\mu_j - z^\mathrm{targ}_j)^2}{\sigma_j^2} + o\left(\frac{1}{n} \right)
\end{align*}

\underline{\textbf{Step3: Concluding using the Lindeberg-Feller CLT
}}

\underline{\textit{Under $H_0$:}}

Using the results of Step2, we have

\begin{itemize}
    \item $\sum_{j = 1}^{d_n} \expect_0 [ Y^\mathrm{targ}_n,j ] \rightarrow -\frac{\tau}{2} - \frac{m^\mathrm{targ}}{2}$

    \item $\sum_{j = 1}^{d_n} V_0 [ Y_n,j ] \rightarrow m^\mathrm{targ}$
\end{itemize}

Similarly to Lemma~\ref{lem:lind_cond}, we can show that $Y^\mathrm{targ}_{n,j}$ verify the Lindeberg-Feller condition, i.e.

\begin{equation*}
    \sum_{j = 1}^{d_n} \expect_0 \left [ (Y^\mathrm{targ}_{n, j})^2 \ind{|Y^\mathrm{targ}_{n,j} | > \epsilon }  \right] \rightarrow 0
\end{equation*}

for every $\epsilon >0$.

We conclude using the Lindeberg-Feller CLT (Theorem~\ref{thm:lind_feller}) that $\sum_{j = 1}^{d_n} Y^\mathrm{targ}_{n,j} \rightsquigarrow \mathcal{N}\left ( -\frac{\tau}{2} - \frac{m^\mathrm{targ}}{2},  m^\mathrm{targ}\right) $, and thus

\begin{equation*}
    \ell_n(\hat{\mu}_n; z^\mathrm{targ}, \mu, C_\sigma) 
    \rightsquigarrow \mathcal{N}\left ( - \frac{m^\mathrm{targ}}{2},  m^\mathrm{targ}\right) 
\end{equation*}

Similarly, \underline{\textit{Under $H_1$:}}

We use the results of Step3
\begin{itemize}
    \item $\sum_{j = 1}^{d_n} \expect_1 [ Y^\mathrm{targ}_{n,j} ] \rightarrow -\frac{\tau}{2} + \frac{m^\star - m^\mathrm{diff}}{2}$

    \item $\sum_{j = 1}^{d_n} V_1 [ Y^\mathrm{targ}_{n,j} ] \rightarrow m^\mathrm{targ}$
\end{itemize}

Similarly to Lemma~\ref{lem:lind_cond}, we can show that $Y^\mathrm{targ}_{n,j}$ verify the Lindeberg-Feller condition, i.e.

\begin{equation*}
    \sum_{j = 1}^{d_n} \expect_1 \left [(Y^\mathrm{targ}_{n,j})^2 \ind{|Y^\mathrm{targ}_{n,j} | > \epsilon }  \right] \rightarrow 0
\end{equation*}

for every $\epsilon >0$.

We conclude using the Lindeberg-Feller CLT (Theorem~\ref{thm:lind_feller}) that $\sum_{j = 1}^{d_n} Y^\mathrm{targ}_{n,j} \rightsquigarrow \mathcal{N}\left (  -\frac{\tau}{2} + \frac{m^\star - m^\mathrm{diff}}{2},  m^\mathrm{targ}\right) $, and thus

\begin{equation*}
    \ell_n(\hat{\mu}_n; z^\mathrm{targ}, \mu, C_\sigma) 
    \rightsquigarrow \mathcal{N}\left ( \frac{m^\star - m^\mathrm{diff}}{2},  m^\mathrm{targ}\right) 
\end{equation*}

\underline{\textbf{Step4: Getting the advantage of the misspecified attack.
}}

We use that $\TV{\mathcal{N}\left(\mu_0, \sigma_0^2 \right)}{\mathcal{N}\left(\mu_1, \sigma_0^2 \right)} = \Phi \left ( \frac{|\mu_0 - \mu_1 |}{2 \sigma_0} \right) - \Phi \left (- \frac{|\mu_0 - \mu_1 |}{2 \sigma_0} \right)$, so that

\begin{align*}
    \lim_{n,d} \mathrm{Adv}_n(\mathcal{A}_\mathrm{miss}) &= \TV{\mathcal{N}\left(- \frac{m^\mathrm{targ}}{2},  m^\mathrm{targ}\right)}{\mathcal{N} \left(\frac{m^\star - m^\mathrm{diff}}{2},  m^\mathrm{targ} \right)}  \\
    &= \Phi\left (\frac{| m^\star + m^\mathrm{targ} - m^\mathrm{diff} |}{4 \sqrt{m^\star}}\right) - \Phi\left (-\frac{| m^\star + m^\mathrm{targ} - m^\mathrm{diff} |}{4 \sqrt{m^\star}}\right) \\
    &= \Phi\left (\frac{|m^\mathrm{scal} |}{2 \sqrt{ m^\mathrm{targ}}}\right) - \Phi\left(-\frac{|m^\mathrm{scal} |}{2 \sqrt{ m^\mathrm{targ}}}\right)
\end{align*}
because $m^\mathrm{diff} = m^\star + m^\mathrm{targ} - 2 m^\mathrm{scal}$.
\end{proof}

\begin{remark}[Simple way to get the expectations computation]

Thanks to Remark~\ref{rmk:asymp}, we recall that

\begin{align*}
    \ell_n(\hat{\mu}_n; z^\mathrm{targ}, \mu, C_\sigma) \approx (z^\mathrm{targ} - \mu)^T C_\sigma^{-1}(\hat{\mu}_n - \mu) 
    - \frac{1}{2n} \|^\mathrm{targ} - \mu\|^2_{C_\sigma^{-1}}
\end{align*}

And thus taking the expectation under $H_0$ and $H_1$, using that $ \expect_0( \hat{\mu}_n) = \mu $, $ \expect_1( \hat{\mu}_n) = \frac{n - 1}{n} \mu + \frac{1}{n} z^\star $ and $\Var_0(\hat{\mu}_n) = \Var_1(\hat{\mu}_n) = C_\sigma$, we get back the same expectations and variances values as the result of Theorem~\ref{thm:misspec}.

\end{remark}
\clearpage
\section{THE WHITE-BOX FEDERATED LEARNING SETTING}\label{app:audt}

First, we discuss the white-box federated learning threat model. Then, we connect our canary choosing strategy (Algorithm~\ref{alg:cana_maha}) and covariance attack  (Algorithm~\ref{alg:cov_att}) to the literature.

\subsection{Threat Models of Attacks in Supervised Learning}

In supervised learning, the mechanism to be attacked is a learning algorithm that takes as input a dataset $D$ and outputs a machine learning model $o \defn f$. The dataset $D$ is composed of $n$ tuples $(x_i, y_i)$ where $x_i$ is a feature and $y_i$ is a label, i.e. $D \defn \{(x_1, y_1), \dots, (x_n, y_n)\}$. The machine learning model $f$ produced can then be queried for an input feature $x$ to get a label $y = f(x)$. The model $f$ is generally found by minimising over a class of models $\mathcal{F}$ some type of error $\ell$ in the input dataset $D$, i.e. $f \defn \arg\min_{g \in \mathcal{F}} \ell(g, D)$. 

The class of models $\mathcal{F}$ can be parameterised by $\theta \in \real^d$, i.e. $f = f_\theta$. In this case, the threat model depends on whether the adversary has access to the parameter $\theta$ or only query access to the model $f_\theta$. The setting where the adversary can observe the parameter $\theta \in \real^d$ is called the \textbf{white-box setting}. On the other hand, when the adversary can only query the final model $f_\theta$, i.e. send input features $x$ to $f$ and observe the outputs $y = f(x)$ is called the \textbf{black-box setting}.

In the parameterised setting, the quintessential training algorithms are based on Gradient Descent. The Gradient Descent algorithm start with an initial parameter $\theta_0 \in \real^d$, and then updates sequentially the parameter at each step $t$ by $\theta_t \defn \theta_{t - 1} - \eta \nabla_{\theta_{t-1}} \ell(\theta_{t - 1}, d)$. In the white-box setting, the adversary may have access to only the final parameter $\theta_T$, and we call this setting \textbf{white-box final parameter}~\citep{nasr2023tight}. The adversary can have access to all (or a subset of) the intermediate parameters sequence $(\theta_0, \dots, \theta_T)$, and we call this setting \textbf{white-box federated learning} setting~\citep{maddock2022canife}.

Fixed-target MI game for the empirical mean mechanism can directly provide an adversary and a canary design strategy to attack/audit gradient descent algorithms in the white box federated learning setting.

\noindent \textbf{Extended details on the white-box federated learning threat model.} Gradient Descent algorithms start with an initial parameter $\theta_0 \in \real^d$, and then update sequentially the parameter at each step $t$ by $$\theta_t \defn \theta_{t - 1} - \eta \nabla_{\theta_{t - 1}} Q(\theta_{t- 1}),$$ where $\eta$ is the learning rate, and $Q(\theta_{t- 1})$ is a quantity that depends on the loss on "some input samples". For example,

\begin{itemize}
    \item[(a)] in batch gradient descent $$\nabla_{\theta_{t - 1}} Q(\theta_{t- 1}) \defn \frac{1}{n} \sum_{i = 1}^n  \nabla_{\theta_{t - 1}} \ell(f_{\theta_{t - 1}}(x_i), y_i)$$ is the gradient with respect to the whole dataset.
    \item[(b)] in mini-batch gradient descent, the dataset is divided into a set of mini-batches $D = \cup B_k$. At each step $t$, a mini-batch $B$ is sampled uniformly and $$\nabla_{\theta_{t - 1}} Q(\theta_{t- 1}) \defn \frac{1}{|B|} \sum_{i \in B}  \nabla_{\theta_{t - 1}} \ell(f_{\theta_{t - 1}}(x_i), y_i).$$ We call $|B|$ the batch size. 
    \item[(c)] in stochastic gradient descent,  $$\nabla_{\theta_{t - 1}} Q(\theta_{t- 1}) \defn \nabla_{\theta_{t - 1}} \ell(f_{\theta_{t - 1}}(x_i), y_i),$$ where $i \sim \mathcal{U}([1,n])$ is sampled randomly from $\{1, n\}$. 
    \item[(d)] in DP-SGD~\cite{dpsgd}, $$\nabla_{\theta_{t - 1}} Q(\theta_{t- 1}) \defn \left(\frac{1}{|B|}  \sum_{i \in B} \operatorname{Clip}_{C} \left[ \nabla_{\theta_{t - 1}} \ell(f_{\theta_{t - 1}}(x_i), y_i) \right] \right)   + \mathcal{N}\left( 0, \gamma^2 C^2 I_d \right),$$ where $B$ is again a batch uniformly sampled, $ \operatorname{Clip}_{C}(x) \defn \min\{1, C/ \|x\| \}  x$ is the clipping norm function, i.e. $\operatorname{ClipNorm}_{C}(x) = x $ if $\|x\| \leq C$ otherwise $\operatorname{ClipNorm}_{C}(x) = C \frac{x}{\|x\|}$. Here, $C > 0$ is a gradient norm bound and $\gamma>0$ is the noise magnitude.
 DP-SGD can be shown to verify a DP constraint, where the privacy budget depends on the clipping gradient norm $C$, the noise $\gamma$, the batch size $|B|$ and the number of gradient iterations $T$.
\end{itemize}

The goal is to attack gradient descent algorithms. Specifically, the mechanism to be attacked is the "training" algorithm that takes as input the private dataset $D \defn \{(x_i, y_i)\}_{i = 1}^n$, and produces sequentially the parameter estimates $\{\theta_t \}_{t = 1}^T$. 
In the white-box federated learning setting, the adversary can observe the full sequence of iterates $\{\theta_t \}_{t = 1}^T$.

\begin{figure}[t!]
    \centering
    \includegraphics[width= \textwidth]{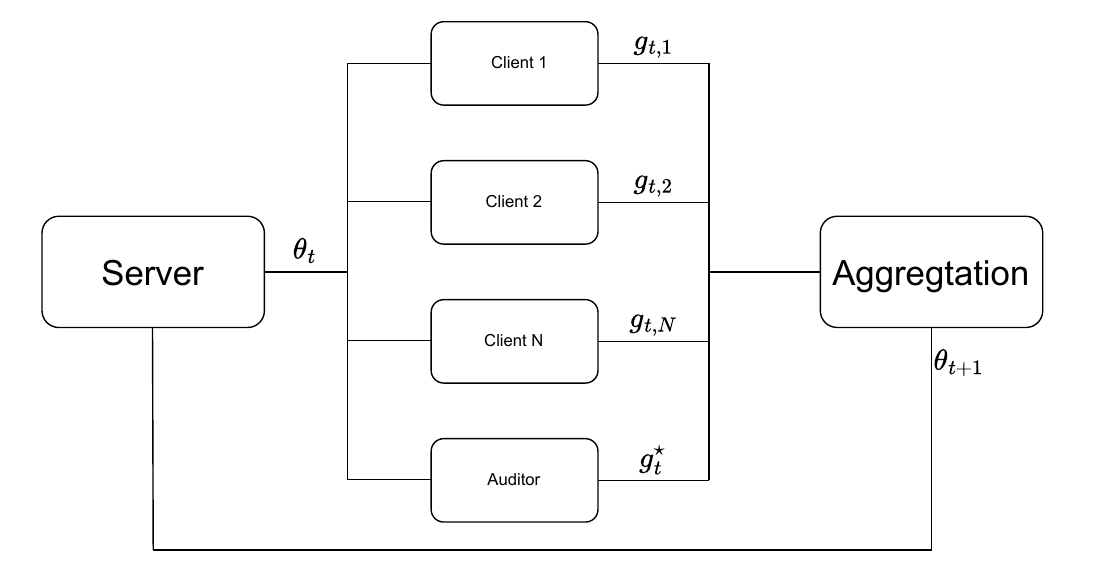}
    \caption{The White-box Federated learning threat model. At each step, the server sends a global model $\theta_t$ to each client. The adversary is a client, too. Each client $i$ computes and sends the local update $g_{i,t}$ to the server. The adversary also computes the local update $g^\star$ on a canary $z^\star$, and sends it to the global server with probability $1/2$. The server aggregates the updates received to compute $\theta_{t+1}$~\citep{maddock2022canife}.}\label{fig:fl}
\end{figure}

The white-box federated learning setting is a fundamental setting for understanding privacy auditing. In addition, this setting has practical uses too. For example, a potential use of this setting is for ``debugging" implementations of private gradient descent algorithms. In this case, the adversary is the programmer trying to release an implementation of their favourite private descent algorithm. To verify the guarantees of their algorithm, the programmer runs a white-box federated learning attack on their implementation and compares the empirical privacy guarantees retrieved with the theoretical analysis. The white-box federated learning setting captures well this use case since the adversary \textit{is} the programmer themselves, and thus have access to all the intermediate iterates and gradients.   

On the other hand, as the name of the setting suggests, another natural application of this threat model is auditing the private Federated Learning (FL) protocol. In a standard FL protocol (Figure~\ref{fig:fl}), a server computes a global model $\theta_t$ at each step $t$ by taking the (noisy) average of client updates, where each client computes a gradient estimate using their local datasets. We suppose that the auditor is also a client in an "honest-but-curious" FL protocol, i.e. the auditor plays the role of the "adversary" but follows the rules of the protocol: At each step $t$ of the protocol, the server sends a global model $\theta_{t}$ to each client. Then, each client $i$ computes their local gradient update $g_{t,i}$ and sends it to the server. The auditor, being also a client in the protocol, computes a gradient update $g^\star_t \defn \nabla_{\theta_{t}} \ell(f_{\theta_{t}(x^\star)}, y^\star) $ on their local "canary" datapoint $z^\star \defn (x^\star, y^\star)$, i.e. the most-leaking sample. Then, the auditor sends the gradient update to the server with probability $1/2$, or otherwise sends nothing to the server. In the next step of the interaction, the auditor observes the new updated global model $\theta_{t + 1}$. The goal of the auditor is to decide based solely on $\theta_{t + 1}$, $\theta_{t}$ and $g^\star_t$ whether the canary update was indeed sent to the server or not. This is exactly a tracing problem of the aggregation mechanism, which is generally a variant of the empirical mean. To design an audit in this setting, the auditor has to design two algorithms. First, an attack score to determine based on $\theta_{t + 1}$, $\theta_{t}$ and $g^\star_t$ whether $g^\star_t$ was included in $\theta_{t + 1}$ or not, i.e. a score function that takes as input $\theta_{t + 1}$, $\theta_{t}$ and $g^\star_t$ and outputs a score that would be high if the canary was included, otherwise the score is low. The auditor also needs to choose a good canary $z^\star_t$ at each step $t$. A good canary should be an easy point to trace, i.e., one for which the target-dependent leakage is high.

To summarise, attacking a gradient descent algorithm (batch, minibatch and DP-SGD) in the white-box federated learning setting reduces to attacking variants of the empirical mean mechanism, applied to loss gradient data $\{ \nabla_{\theta_{t - 1}} \ell(f_{\theta_{t - 1}}(x_i), y_i)\}_{i = 1}^n$. Thus, our results from Section~\ref{sec:mia_mean} and Section~\ref{sec:defences} can readily be applied.


\subsection{Connection to the Existing Canary Selection Strategies}

\paragraph{Related Works.}
The main intuition for canary selection strategies in the literature is to propose heuristics to generate out-of-distribution data~\cite{jagielski2020auditing,  maddock2022canife, nasr2023tight, steinke2023privacy, andrew2023one}.
For example,~\cite{nasr2023tight} proposes the Dirac canary strategy, which suggests the use of canaries gradient updates with all the values zero except at a single index. The intuition behind this choice is that the sparse nature of the Dirac gradient makes it an out-of-distribution sample in natural datasets. For CIFAR10,~\cite{nasr2023tight} shows the effectiveness of such a canary choice for auditing neural nets in the white-box federated learning setting. The Dirac canary is a type of \textit{gradient canaries}, because it directly suggests what gradient $g^\star$ the auditor should include or not in the training. The other type of canaries is \textit{input canaries}. As the name suggests, input canaries are pairs of features and label $z^\star = (x^\star, y^\star)$ that the auditor chooses to include or not in the training. There are different heuristics in the literature to generate input canaries, i.e. mislabeled examples or blank examples~\citep{nasr2023tight}, or adversarial examples~\citep{jagielski2020auditing}.

On the other hand,~\cite{maddock2022canife} proposes CANIFE, an algorithm that learns to craft canaries by back-propagating to the input level the following loss function
\begin{equation}\label{eq:canife}
    \ell^\mathrm{CANIFE}(z^\star) \defn \sum_i^{n_r} \left \langle u_i, g^\star_t \right \rangle^2 + \max(C -  \|g^\star_t \|, 0)^2.
\end{equation}
Here, $g^\star_t \defn \nabla_{\theta_{t}} \ell(f_{\theta_t}(x^\star), y^\star) $ is the gradient of the loss of the canary $z^\star$ at step $t$, $u_i$ is the gradient of the loss with respect to a reference sample and $n_r$ is the number of reference samples. CANIFE can be used to craft an input canary $z^\star$ by directly minimising the loss $\ell^\mathrm{CANIFE}$.

\paragraph{Connection to our work.} In addition to proposing a new gradient and a new input canary strategy, our Mahalanobis score also explains the success of the heuristics presented above. Specifically, Dirac canaries, black examples, or mislabeled examples are all points with high Mahalanobis distance, thus making them great canary candidates. Our Mahalanobis score can also be run over "in-distribution" canary candidates to find the most leaking one over them. This could come in handy when the auditor, while trying to participate in the white-box audit protocol (e.g. Figure~\ref{fig:fl}), does not want to hurt the accuracy of the final model. Thus, the auditor wants to send gradient updates that are helpful for accuracy, i.e.  "in-distribution", but with a high enough Mahalanobis score to be distinguishable. The Mahalanobis score solves the tradeoff between the "accuracy of the model" and the "success of the MI attack" by choosing points with a moderate Mahalanobis score.

Finally, our Mahalanobis score also explains the CANIFE loss $\ell^\mathrm{CANIFE}$ of Equation~\eqref{eq:canife}. In~\cite[Appendix A]{maddock2022canife}, the intuition to explain the CANIFE loss starts by expressing the LR score between two Gaussian distributions. Then,~\cite{maddock2022canife} concludes that to make the two Guassians distinguishable (separable) enough, one should maximise $(g^\star_t)^T C^{-1} g^\star_t$, which is precisely the Mahalanobis score. Finally, they claim that maximising the score is "equivalent" to minimising $(g^\star_t)^T C g^\star_t$, which yields exactly the CANIFE loss $\ell^\mathrm{CANIFE}$ when substituting $C = \sum_i u_i u_i^{\top}$. Thus, the Mahalanobis score also explains the CANIFE loss, and our results rigorously justify the success of this approach beyond Gaussian distributions.

\subsection{Connection to White-box Scores in the Literature}

\paragraph{Related Works.} The scalar product~\cite{dwork2015robust} is the most popular score used in white-box attacks in the literature~\cite{maddock2022canife, nasr2023tight, steinke2023privacy, andrew2023one}. The scalar product score takes as input $\theta_{t + 1}$, $\theta_{t}$ and $g^\star_t$ and outputs the scalar product $\langle \theta_{t + 1} - \theta_{t}, g^\star_t \rangle$. This score is a direct application of the tracing attack against of~\cite{dwork2015robust} to the white-box federated learning setting, since $\theta_{t + 1} - \theta_{t}$ is an empirical-mean like quantity. 

On the other hand, \cite{leemann2023gaussian} proposes a different score attack, called Gradient Likelihood Ratio (GLiR) Attack, i.e.  Algorithm 1 in~\cite{leemann2023gaussian}. This attack is based on an analysis of the LR test of the empirical mean. However, the analysis of~\cite{leemann2023gaussian} arrives at a different score function compared to our results. Specifically, let $g_\text{batch}^t = \frac{\theta_{t+1} - \theta_{t}}{\eta}$ be the batch gradient. Then, the GLiR attack needs to estimate an empirical mean $\hat \mu_0$ and covariance $\hat C_0$ of reference data's gradient. Then, the attack computes the statistics $\hat{S} = (|B| - 1)(g_\text{batch}^t - g^\star_t)^T \hat C_0^{-1} (g_\text{batch}^t - g^\star_t)$ and $ \hat{K} = \| \hat C_0^{-1/2} (\hat{\mu}_0 - g^\star_t)\|$. The GLiR score is $$ s^\mathrm{GLiR}(g_\text{batch}^t, g^\star_t) = \log\left( F^{-1}_{\chi^2_d(|B| \hat{K})} \left( \hat{S} \right) \right)$$ where $F^{-1}_{\chi^2_d(\gamma)}$ is the inverse of the CDF of the non-central chi-squared distribution with $d$ degrees of freedom and non-centrality parameter $\gamma$ and $|B|$ is the batch size. For some threshold $\tau$, if $s^\mathrm{GLiR}(g_\text{batch}^t, g^\star_t) < \tau$, the GLiR attack suggests that $g^\star$ was included, otherwise it was not. Next, we comment on the relation between our analysis and that of~\citep{leemann2023gaussian}.

\paragraph{Connection to our work.} The covariance attack of Algorithm~\ref{alg:cov_att} is provably better than the scalar product attack, at the expense of estimating the inverse of the covariance matrix well. The shape of the covariance matrix is $d \times d$, where $d$ is the number of parameters of the model $\theta$. This means that storing and inverting this covariance matrix is computationally expensive for models with many parameters. A simple trick to deal with this problem is only running the attack on a subset of the parameters. For example, we can run the covariance attack over the last layer of a neural net. If the last layer has $d_\ell$ parameters, the covariance matrix becomes $d_\ell \times d_\ell$ with $d_\ell \ll d$.

We provide the following remarks to connect our analysis to that of~\cite{leemann2023gaussian}.

(a) In Step 1 of the proof, in \cite[Section E.1.]{leemann2023gaussian} declares that "We suppose that the number of averaged samples is sufficiently large such that we can apply the Central Limit Theorem", and thus, considers\footnote{Following equations are restatements of Equations 45 and 46 of~\cite{leemann2023gaussian} using our notations.} under $H_0$,
\begin{equation}
 \hat{\mu}_n \sim \mathcal{N}\left( \mu, \frac{1}{n} C_\sigma \right)
\end{equation}
and under $H_1$,
\begin{equation}
 \hat{\mu}_n \sim \mathcal{N}\left(\frac{1}{n} z^\star + \frac{n - 1}{n} \mu, \frac{n - 1}{n^2} C_\sigma \right)\,.
\end{equation}
However, the Central Limit Theorem is a "limit in distribution" of the empirical means. Thus, \textit{the limit distribution of $\hat{\mu}_n$ is just the constant $\mu$ under both $H_0$ and $H_1$.} The effect of $z^\star$ disappears in this statement, as $n \rightarrow \infty$. For their claim to be ``rigorously'' correct, one should assume that the data-generating distribution is exactly a Gaussian distribution. This gives the exact distribution of $\hat{\mu}_n$ under $H_0$ and $H_1$ as expressed by the two equations above. Supposing that the data-generating distributions are Gaussian distributions simplifies the analysis, since now there is no need to go for asymptotics in $n$ and $d$, and thus, there is no need for Edgeworth expansions and Lindeberg CLT. In contrast, our results provide a way to rigorously justify under which conditions this holistic view of ``equivalence to testing between Gaussians'' is correct, i.e.  finite $4$-th moment of the data distribution. 

(b) As a score function, \cite{leemann2023gaussian} chooses to analyse the distribution of the "norm squared" of a re-centred and normalised version of the mean i.e.  $S_n \approx \|C^{- 1/2} ( \hat{\mu}_n - \mu) \|^2 $ while hiding some constants specific to their analysis. They characterise the distribution of $S_n$ and show that it is a (scaled) non-central chi-squared distribution with $d$ degrees of freedom, with different parameters under $H_0$ and $H_1$. In our case, we provide the asymptotic distribution of the LR score directly under $H_0$ and $H_1$, which provides a simpler covariance score.
\clearpage
\section{EXTENDED EXPERIMENTS}\label{app:exp_details}
We present additional experiments and results, for both synthetic and real data settings.

\subsection{Experiments on Synthetic Data}~\vspace{-.5em}
We aim to validate additional theoretical results empirically. Thus, we ask

1. \textit{Is the distribution of the LR test tightly characterised by the Gaussian distributions of Lemma~\ref{thm:asymp_lr}?} 

2. \textit{Is the asymptotic approximations for the log-likelihood ratio test suggested by our analysis tight enough? }

3. \textit{Are the power of the LR tests tightly determined by Theorem~\ref{crl:per_leak},~\ref{thm:noisy_mean}, and~\ref{thm:lr_sub} for the empirical mean, noisy empirical mean, and sub-sampled empirical mean mechanisms, respectively?}

4. \textit{What is the effect of $n_0$, the number of reference points, on the power of the empirical attack?}

\textbf{Experimental setup} 
We take $n = 1000$, $\tau =5$ and thus $d = 5000$. The data generating distribution $\mathcal{D}$ is a $d$ dimensional Bernoulli distribution, with parameter $p \in [0,1]^d$. To choose $p$, we sample it uniformly randomly from $[a, 1-a]^d$, where $a=0.25$ to avoid limit cases. Once $p$ is chosen, we use the same $\mathcal{D} \defn \mathrm{Bern}(p)$ for all the experiments. The three mechanisms considered are $\mathcal{M}^\mathrm{emp}_n$, $\mathcal{M}^\gamma_n$ and $\mathcal{M}^\mathrm{sub}_{n, \rho}$. The adversaries chosen to each mechanism are the thresholding adversaries based on the asymptotic approximations of LR tests, suggested by the analysis. For $\mathcal{M}^\mathrm{emp}_n$, we use the approximation of Eq~\eqref{eq:asymp_lr_real}. The approximate LR test for $\mathcal{M}^\gamma_n$ is taken by replacing $C_\sigma$ by $C_{\sigma} + C_\gamma$ in Eq~\eqref{eq:asymp_lr_real}. For $\mathcal{M}^\mathrm{sub, \rho}_n$, we take the approximate log-likelihood ration from Section~\ref{sec:proof_sub} of the analysis. Finally, we choose target points in $\{0, 1\}^d$ in three ways: 

(a) The \textbf{easiest point to attack} $z^\star_{\mathrm{easy}}$ is the point with the highest Mahalanobis distance with respect to $p$. It is the point with binary coordinates being the furthest away from the coordinates of $p$: $z^\star_{\mathrm{easy}} = \left(\ind{p_i \leq 1/2} \right)_{i = 1}^d$. 

(b) Similarly, the \textbf{hardest point to attack} is $z^\star_{\mathrm{hard}} = \left(\ind{p_i > 1/2} \right)_{i = 1}^d$, which takes the binaries closest to $p$. 

(c) A \textbf{medium point to attack} would be just a point randomly sampled from the data-generating distribution, i.e. $z^\star_{\mathrm{med}} \sim \mathrm{Bern}(p)$ , for which the Mahalanobis distance is of order $d$ and the leakage score of order $\tau = d/n$.

All the algorithms are implemented in Python (version $3.8$) and are tested with an 8-core 64-bit Intel i5@1.6 GHz CPU. We run each fixed-target MI game $1000$ times, and plot the results in Figures~\ref{fig:ext_experiments}. \textbf{All the assumptions from the analysis are verified in this setting.}

\textbf{Analysis of results}
1. Figure~\ref{fig:ext_experiments} (a) and (b) shows the distribution of the covariance LR scores as in Eq.~\eqref{eq:asymp_lr_real}. To generate these two figures, we first simulate the crafter, i.e. Algorithm~\ref{alg:crafter}, for $T= 1000$ to get a list of $500$ empirical means where the target was not included and $500$ empirical means where the target was included. Then we compute the covariance LR score of Eq.~\ref{eq:asymp_lr_real} for each of the $1000$ empirical means, with $\mu = p$ and $C_\sigma = \mathrm{diag}(p(1 - p))$ and $z^\star = z^\star_{\mathrm{easy}} $ in Figure~\ref{fig:ext_experiments} (a) and $z^\star = z^\star_{\mathrm{hard}} $ for Figure~\ref{fig:ext_experiments} (b). These two figures show that the LR score is indeed distributed as Gaussians with means $- \frac{m^\star}{2}$ and $\frac{m^\star}{2}$, and variance $m^\star$ as predicted by Theorem~\ref{thm:asymp_lr}.

2. Figure~\ref{fig:ext_experiments} (c) compares the theoretical power function with the empirical ROC curve of the for the approximate log-likelihood ratio of Eq.~\ref{eq:asymp_lr_real} with $z^\star_{\mathrm{hard}}$. The ``Theoretical" dashed line in Figure~\ref{fig:ext_experiments} (c) corresponds to plotting the theoretical $\mathrm{Pow}_n$ of (Eq~\ref{eq:pow_emp}). The ``empirical" solid line is generated by simulating the MI game of Algorithm~\ref{alg:fix_game} for $T= 1000$ rounds. Then by varying the thresholds of the approximate log-likelihood ratio adversary, we empirically compute the false positive and true positive rates. Eq.~\ref{eq:asymp_lr_real} perfectly characterises the ROC curve as predicted by Corollary. 

3. Figures~\ref{fig:ext_experiments} (d), (e) and (f) are generated similarly to Figure (c), for the three mechanisms $\mathcal{M}^\mathrm{emp}_n$, $\mathcal{M}^\gamma_n$ and $\mathcal{M}^\mathrm{sub, \rho}_n$. Figure~\ref{fig:ext_experiments} (d) investigates the effect of $m^\star$ by varying the target datum, where the blue line corresponds to $z^\star_{\mathrm{easy}}$, the green line to $z^\star_{\mathrm{med}}$ and the red line to $z^\star_{\mathrm{hard}}$. Figure~\ref{fig:ext_experiments} (e) and (f) investigate the effect of the noise scale $\gamma$ and the sub-sampling ratio $\rho$, by varying these parameters for the same $z^\star_{\mathrm{easy}}$. The three figures validate the prediction of the theoretical power functions, and show that $m^\star$, $\tilde{m}^\star_\gamma$ and $\rho m^\star$ perfectly characterise the target-dependent leakage of the three mechanisms $\mathcal{M}^\mathrm{emp}_n$, $\mathcal{M}^\gamma_n$ and $\mathcal{M}^\mathrm{sub}_{n, \rho}$. 

4. In Figure~\ref{fig:effect_ref}, we plot the power of the empirical covariance attack on the easy target, for different number of reference points $n_0$. Figure~\ref{fig:effect_ref} shows that the power of the test decreases as $n_0$ gets smaller. However, this decrease is negligible even in the small $n_0$ regime. 

\begin{figure*}[t!]
\centering
\begin{tabular}{cc}
\subfloat[LR distribution on easy target $z^\star_{\mathrm{easy}}$]{\includegraphics[width=0.45\textwidth]{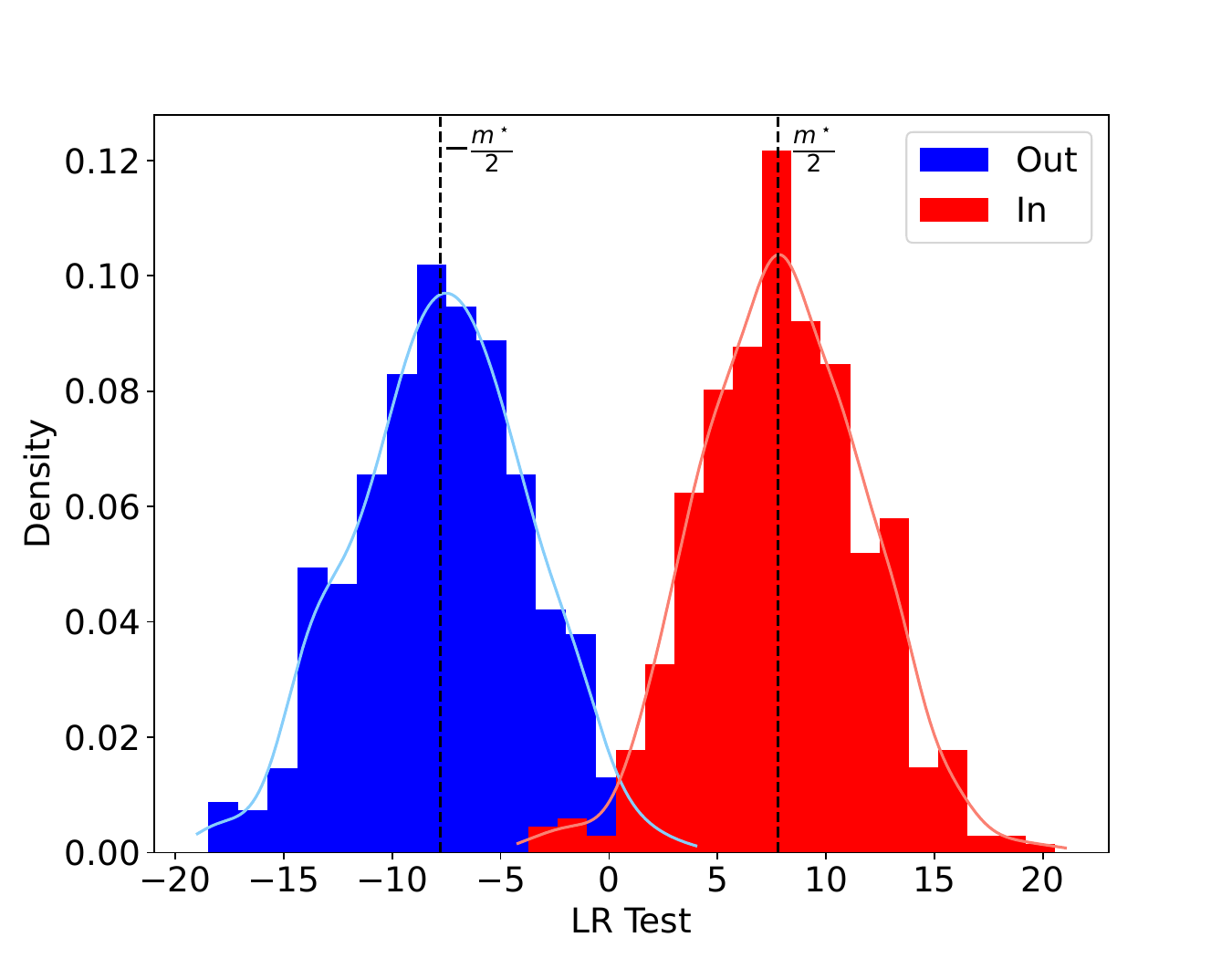}} & 
\subfloat[LR distribution on hard target $z^\star_{\mathrm{hard}}$]{\includegraphics[width=0.45\textwidth]{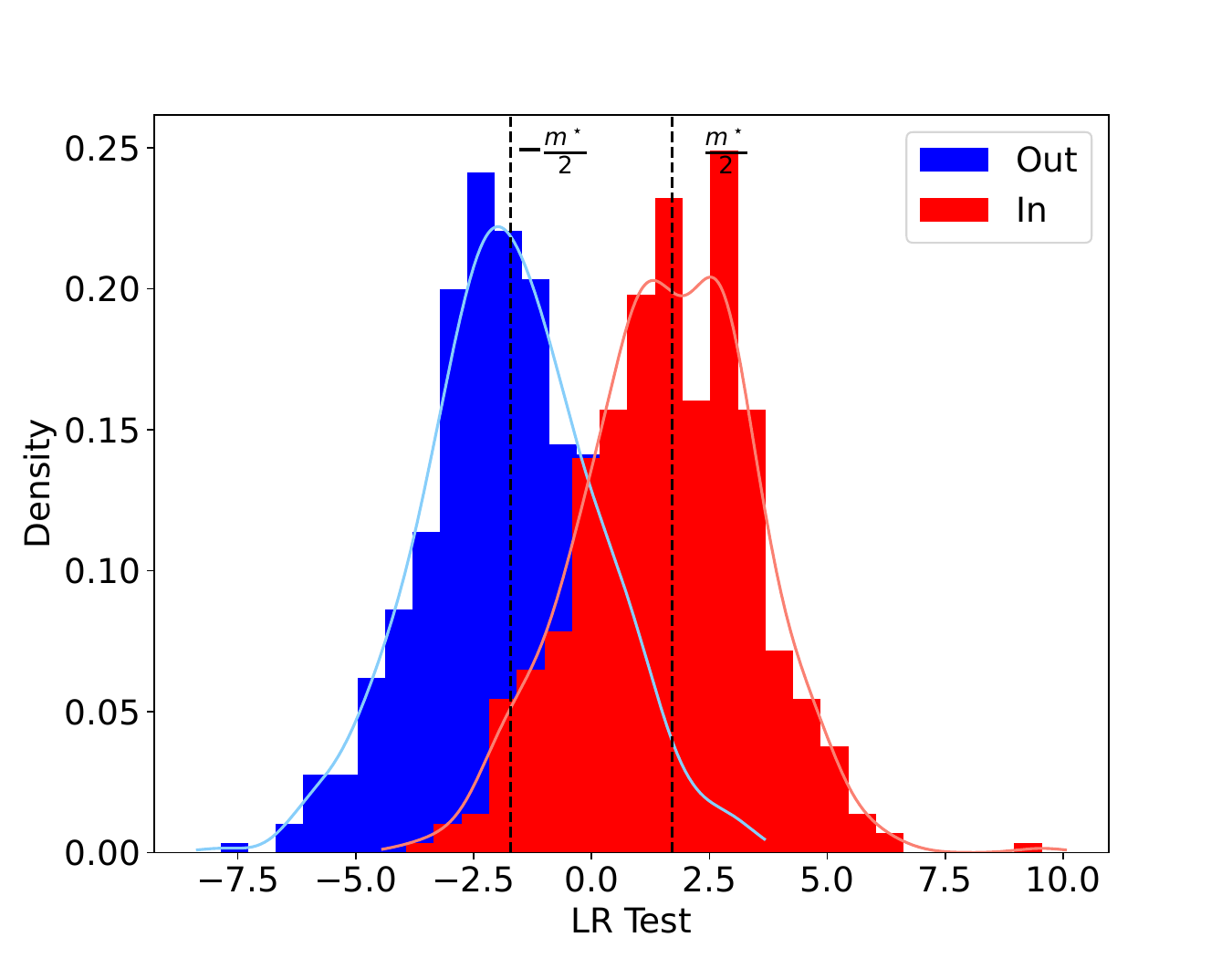}} \\ 
\subfloat[Theoretical and empirical trade-offs]{\includegraphics[width=0.45\textwidth]{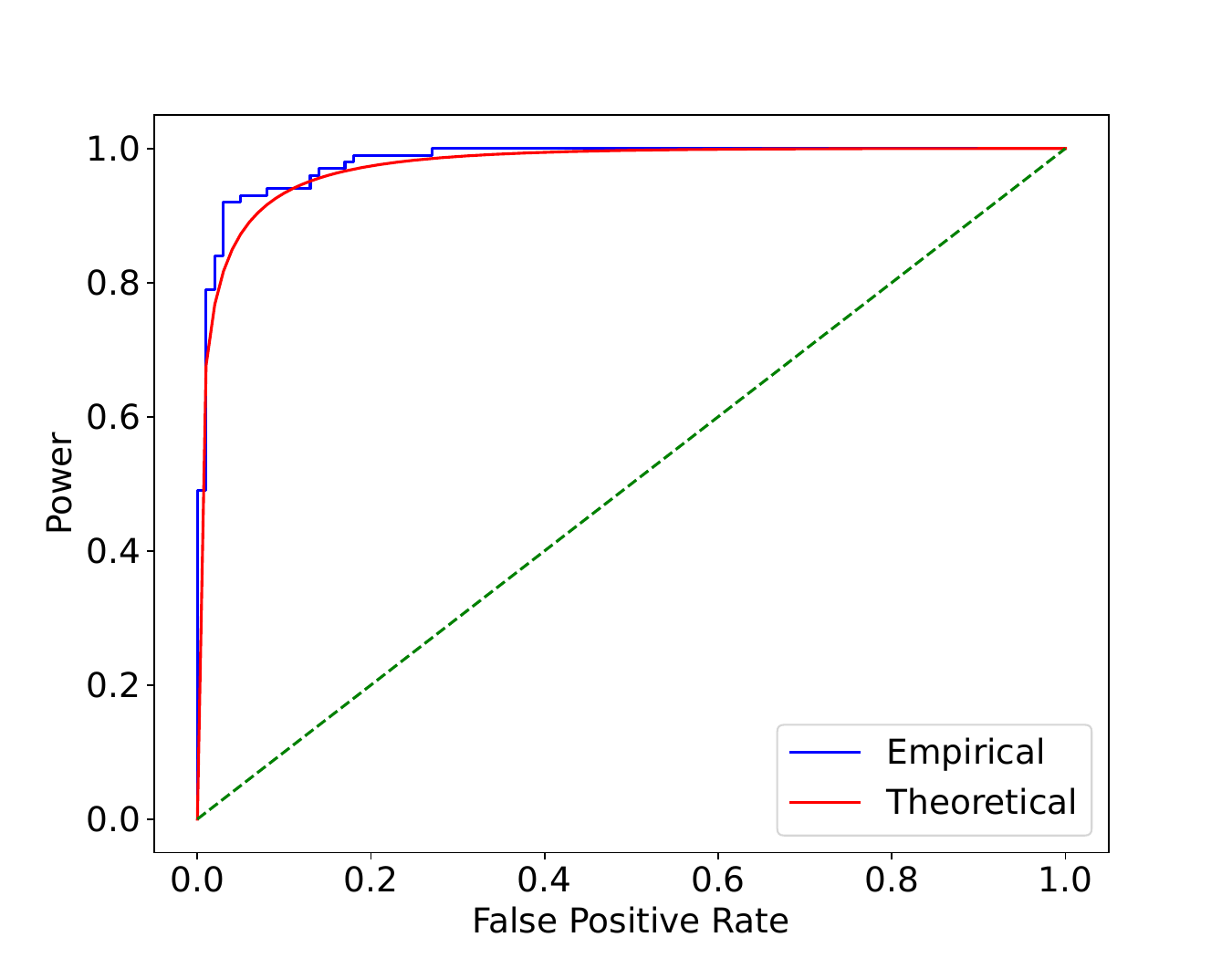}} &
\subfloat[Effect of $m^\star$]{\includegraphics[width=0.45\textwidth]{figures/effect_m_star4.pdf}} \\
\subfloat[Effect of $\gamma$]{\includegraphics[width=0.45\textwidth]{figures/effect_gamma3.pdf}}&
\subfloat[Effect of $\rho$]{\includegraphics[width=0.45\textwidth]{figures/effect_rho3.pdf}} \\
\end{tabular}
\caption{Experimental demonstration of the theoretical results and impacts of $m^\star$, noise, and sub-sampling ratio on leakage. Dotted lines represent theoretical bounds and solid lines represent the empirical results.}\label{fig:ext_experiments}
\end{figure*}

\begin{figure*}[t!]
\centering
\begin{tabular}{cc}
\subfloat[Large $n_0$ regime]{\includegraphics[width=0.45\textwidth]{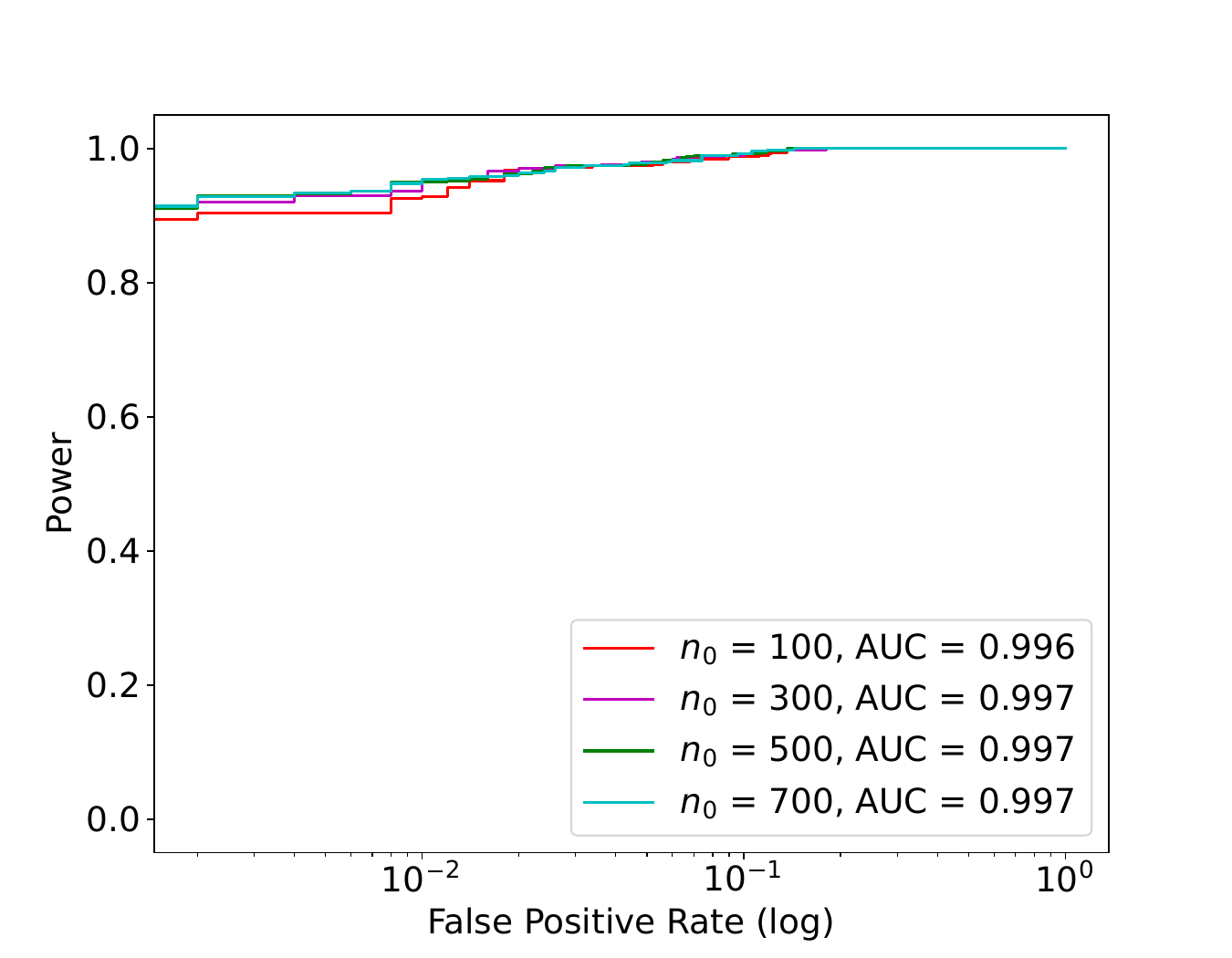}} & 
\subfloat[Small $n_0$ regime]{\includegraphics[width=0.45\textwidth]{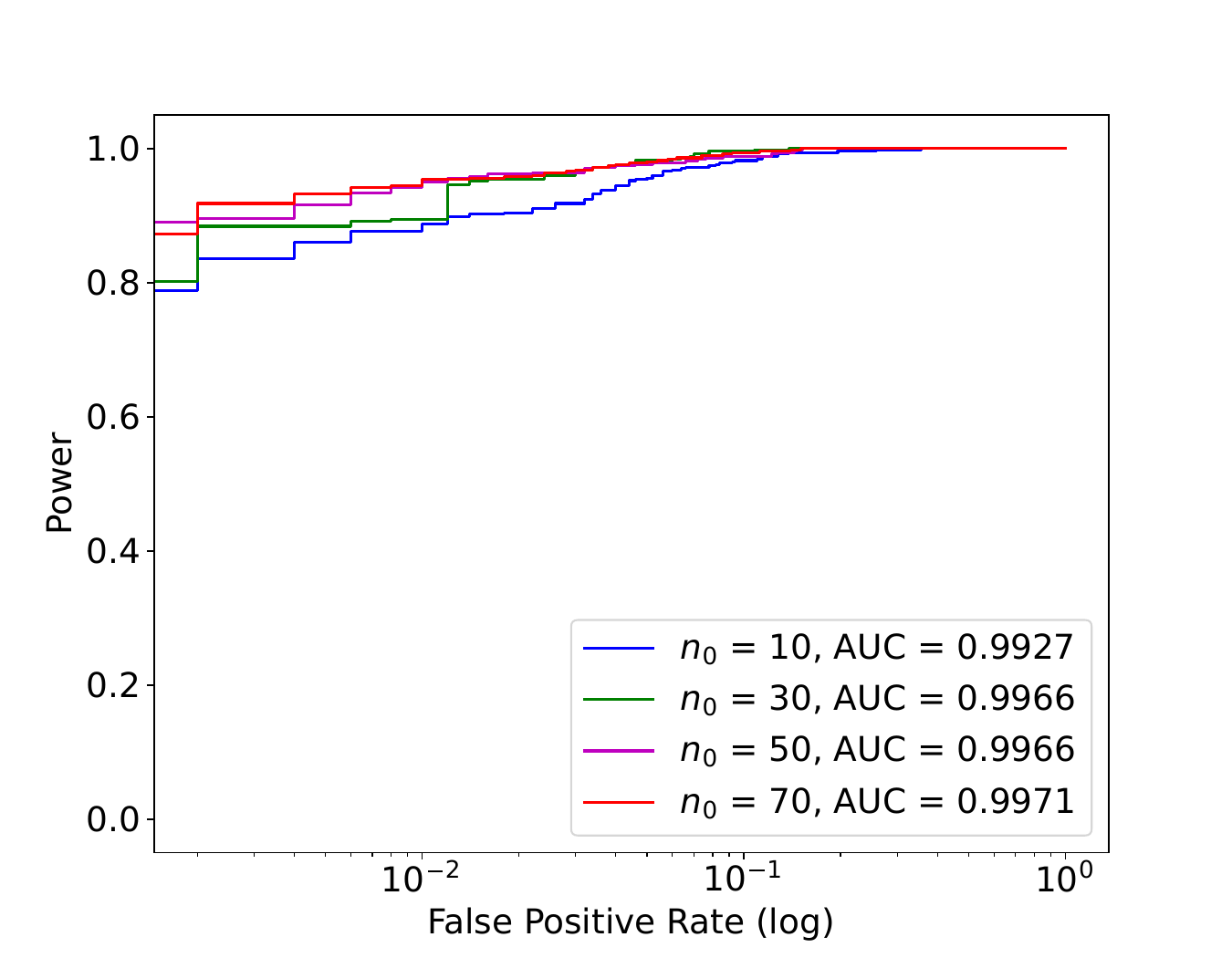}} \\ 
\end{tabular}
\caption{Effect of the number of reference points $n_0$ on the power of the covariance attack.}\label{fig:effect_ref}
\end{figure*}

\newpage
\subsection{Attacking in the White-box Federated Learning Setting}
All the attacks are implemented in Python (version $3.8$) and are tested with an NVIDIA GeForce RTX 2080 Ti GPU. We run each attack $100$ times, and plot the results in Figures~\ref{fig:experiments_white_box}. We train our neural network models using PyTorch (version $2.3.0$)~\citep{paszke2019pytorch}.

In addition to the results stated in Section~\ref{sec:exp}, here we add the a few additional results for this setting.

We plot the distribution of the Mahalanobis score, for FMNIST and CIFAR10 in Figure~\ref{fig:maha_dist}. We also plot the easiest, medium and hardest points to attack $z_\text{easy}$, $z_\text{med}$ and $z_\text{hard}$ for FMNIST and CIFAR10 in Figure~\ref{fig:canaries}. Interestingly, the easy instances from both datasets are the ones that are hard to detect as real objects, i.e. they incur a high loss for an ML model. In contrast, the hard ones are like any other well-classifiable data point in the dataset. This echoes the practical results in MI attack literature~\citep{carlini2022membership} from the black-box settings.

\noindent \textbf{Relation to the theoretical assumptions in the analysis.} For this setting, the assumptions from our analysis are not satisfied. Specifically, the gradient loss distribution, which is the data-generating distribution in this setting, does not satisfy two assumptions from Section~\ref{sec:mia_mean}: finite 4-th moment and independence of the components. It is hard to say anything about the distribution of the gradient loss on natural data. However, our results are a step in the right direction: we don't assume that the gradients are exactly distributed as Gaussians~\cite{maddock2022canife} or Bernoulli~\cite{sankararaman2009genomic}. We leave it for future work to weaken the assumptions on the data-generating distribution. We provide the following details regarding the importance of the ``independence" assumption.

Our analysis assumes that the data-generating distribution $\mathcal{D}$ is a product distribution, i.e. the columns of the input are independent. This assumption is standard and has been used in different related works in the tracing literature~\citep{homer2008resolving, sankararaman2009genomic, dwork2015robust}. Our proof could be adapted to the dependent case using a \textit{multivariate} Edgeworth expansion in the likelihood ratio test. The same conclusions of our analysis will follow, with the only difference being that the covariance matrix will no longer be diagonal but a ``full'' matrix. Additional technical assumptions must be added to rigorously use a high-dimensional \textit{multivariate} Edgeworth expansion, making the analysis very technical without yielding additional insights. We leave it as a future direction to adapt the proof to the dependent case.

\begin{figure*}[t!]
\centering
\begin{tabular}{cc}
\subfloat[FMNIST]{\includegraphics[width=0.4\textwidth]{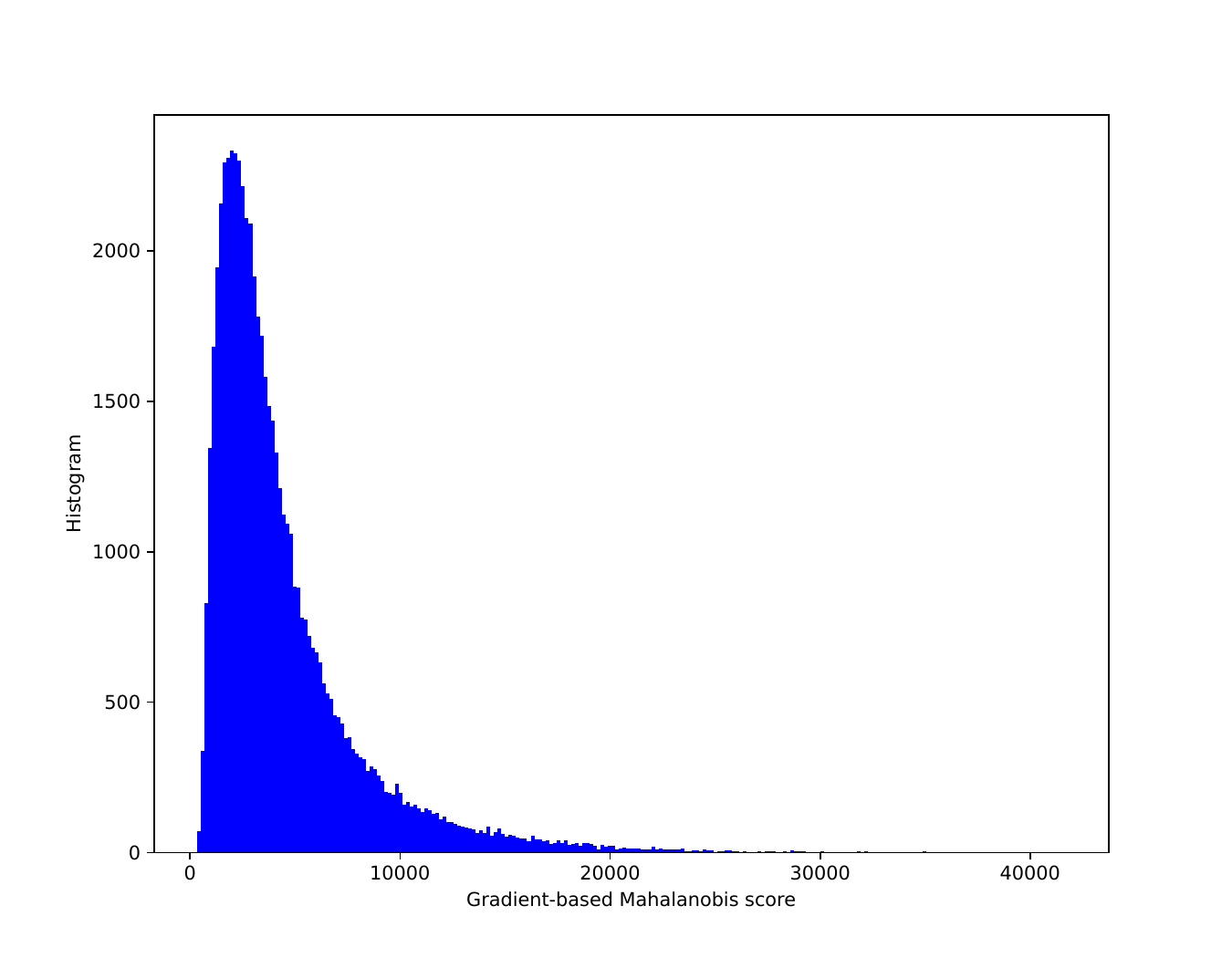}} & 
\subfloat[CIFAR10]{\includegraphics[width=0.4\textwidth]{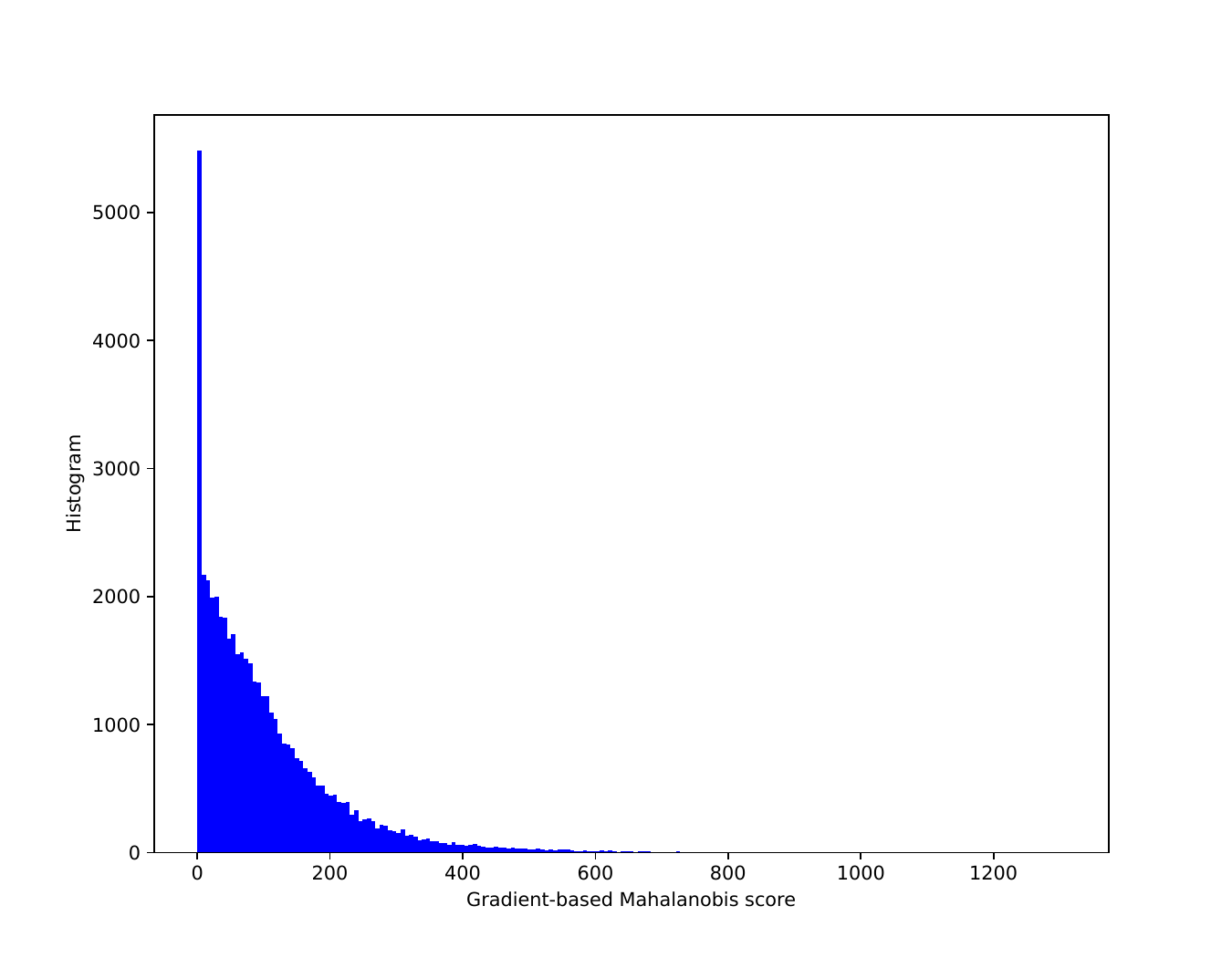}} \\ 
\end{tabular}
\caption{Distribution of the Mahalanobis score.}\label{fig:maha_dist}
\end{figure*}

\begin{figure*}[t!]
\centering
\begin{tabular}{cc}
\subfloat[$z_\text{easy}$ for FMNIST]{\includegraphics[width=0.38\textwidth]{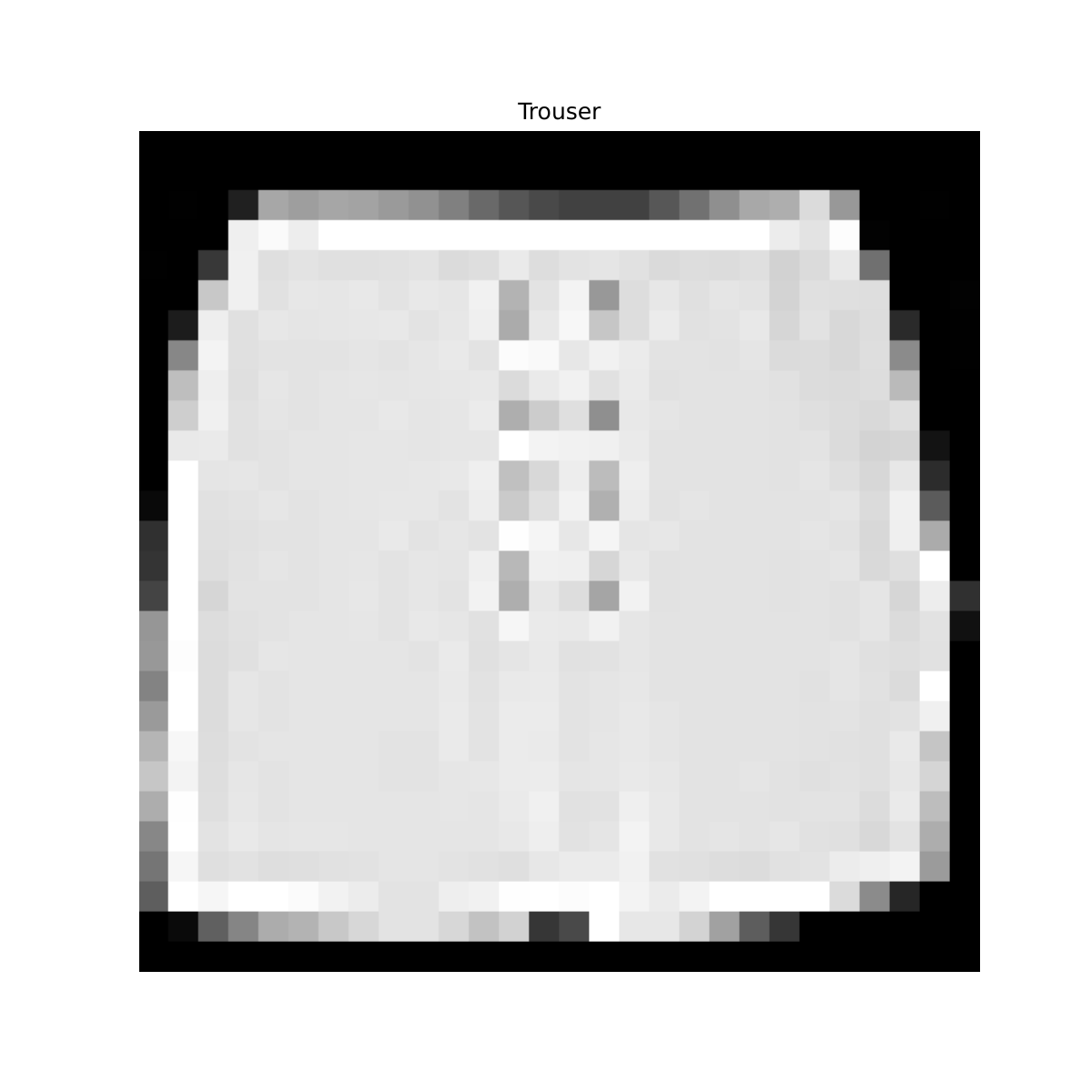}} & 
\subfloat[$z_\text{easy}$ for CIFAR10]{\includegraphics[width=0.38\textwidth]{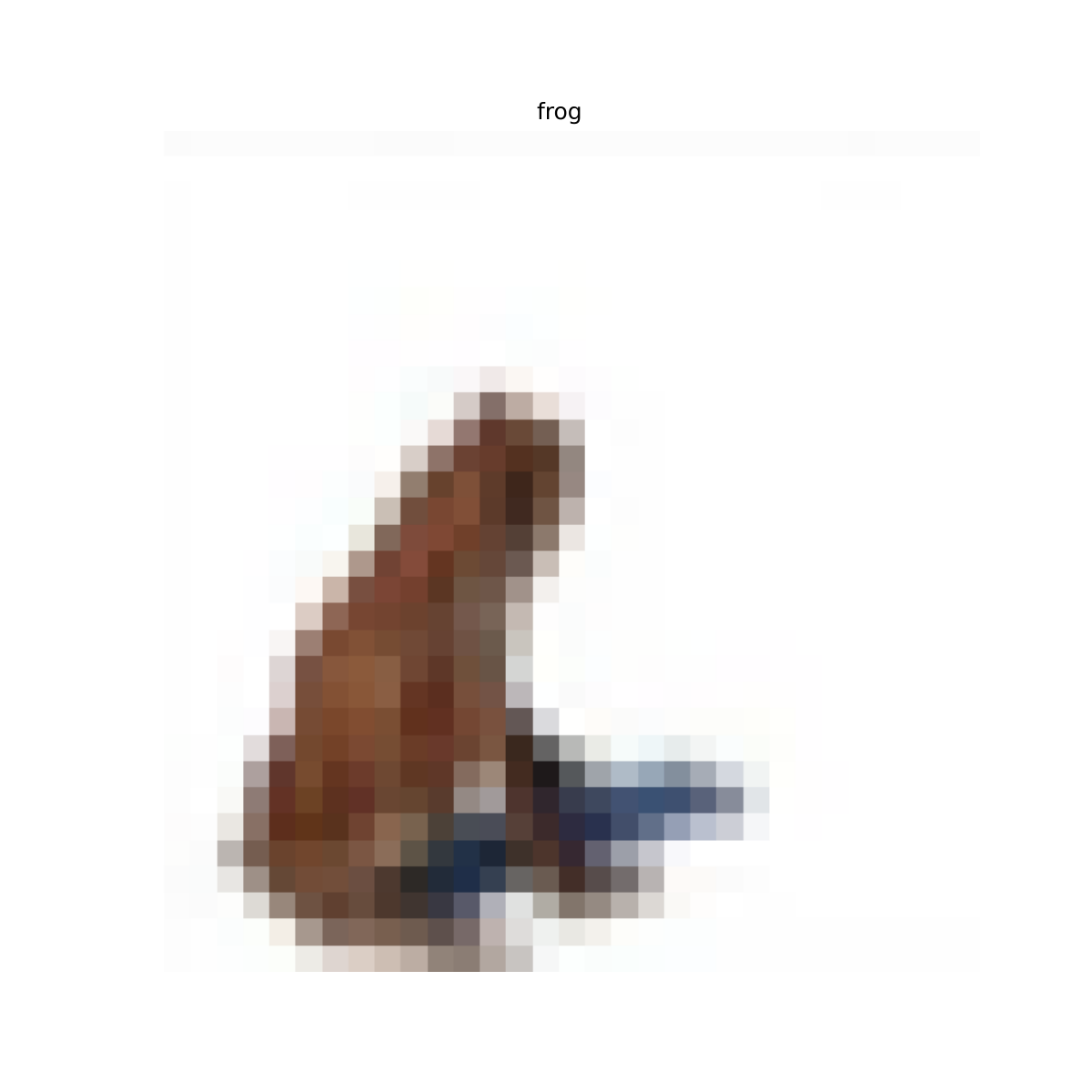}}  \\
\subfloat[$z_\text{med}$ for FMNIST]{\includegraphics[width=0.38\textwidth]{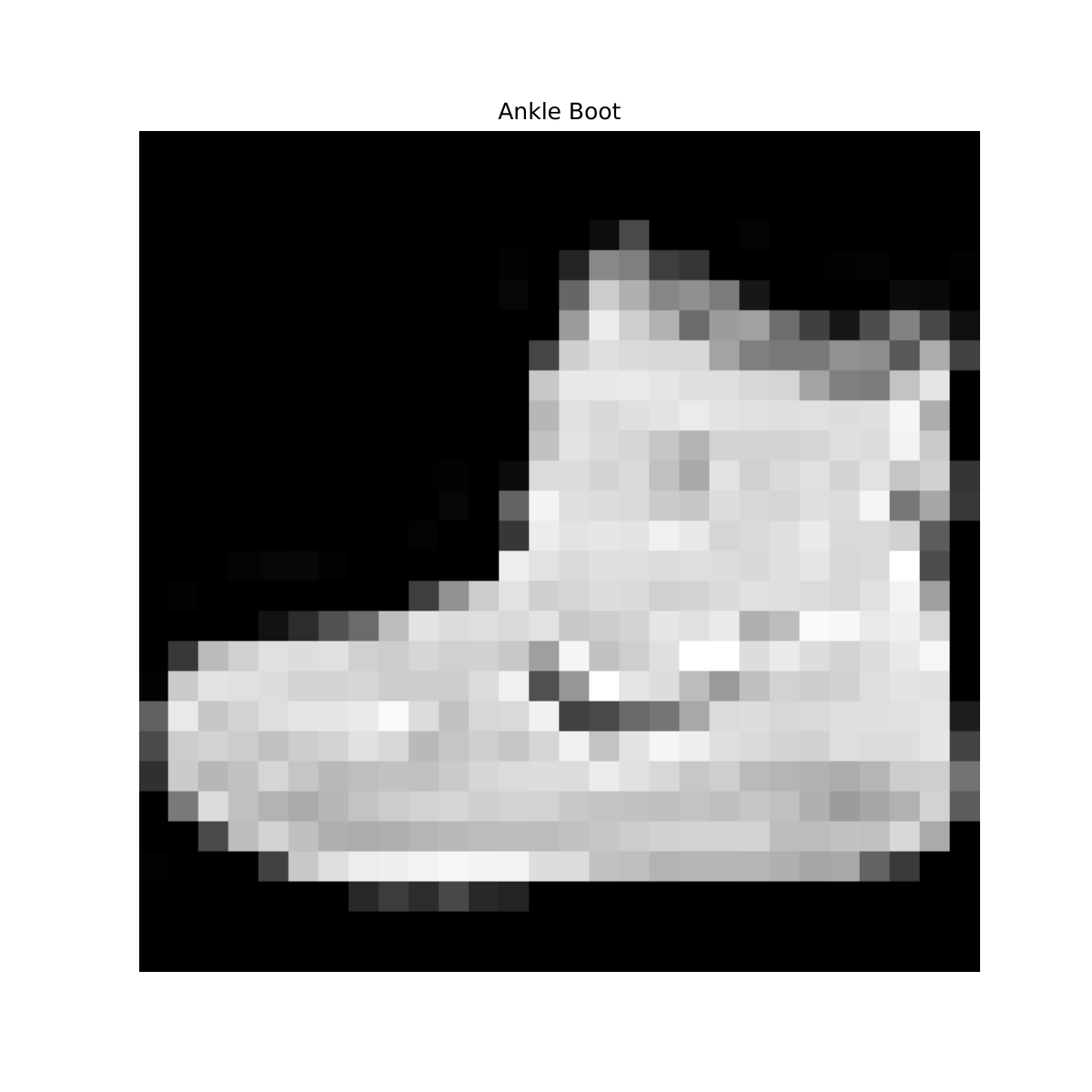}} & 
\subfloat[$z_\text{med}$ for CIFAR]{\includegraphics[width=0.38\textwidth]{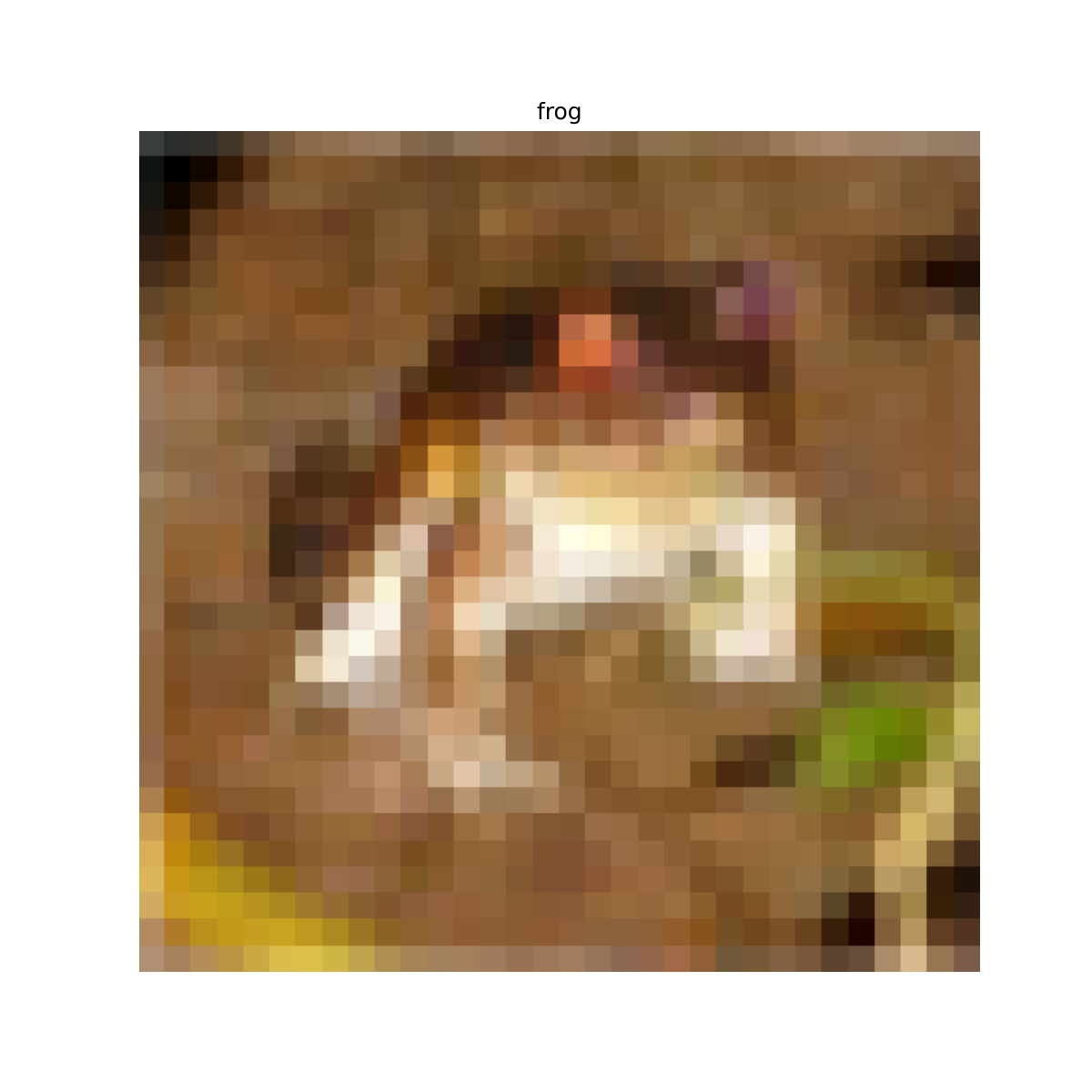}} \\ 
\subfloat[$z_\text{hard}$ for FMNIST]{\includegraphics[width=0.38\textwidth]{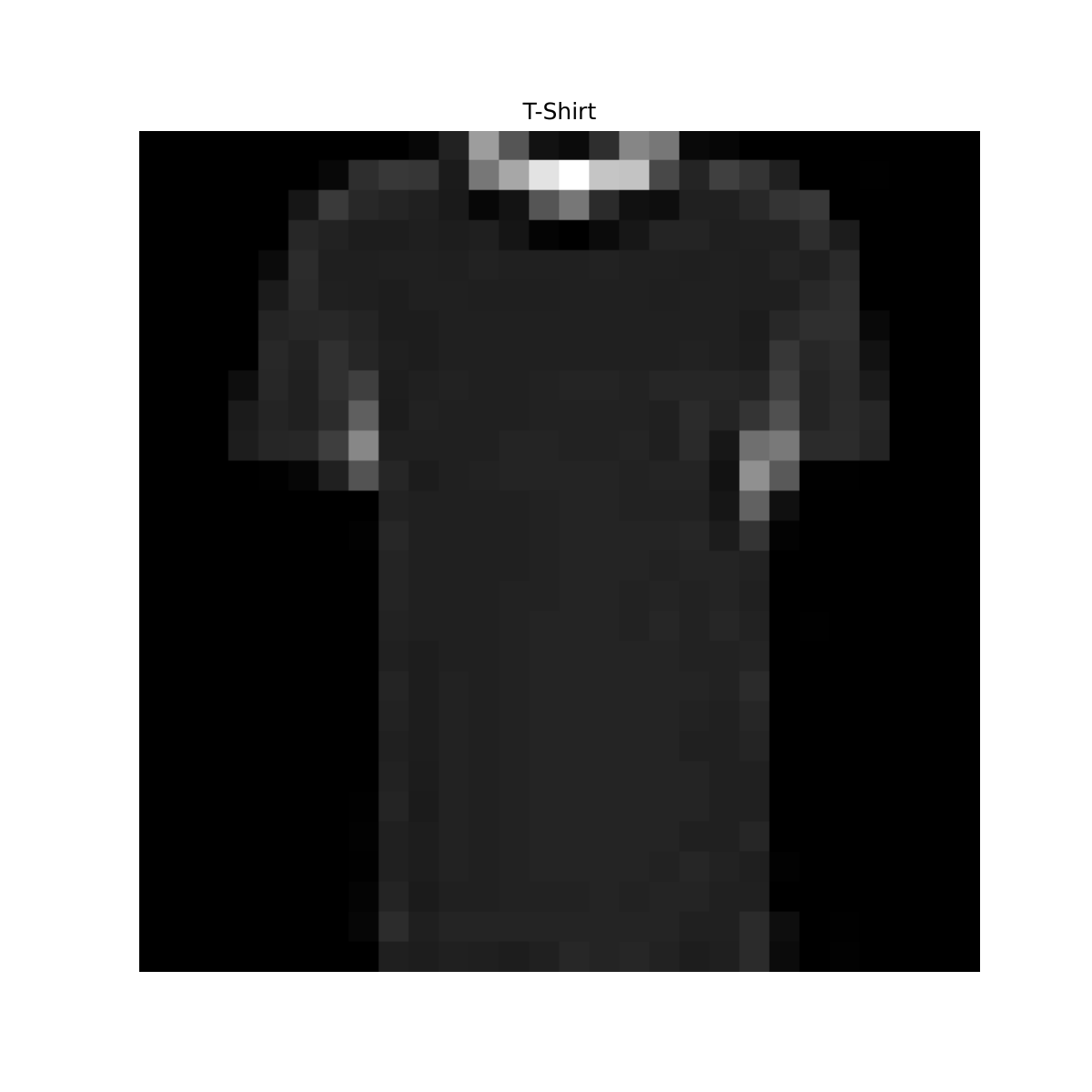}} & 
\subfloat[$z_\text{hard}$ for CIFAR10]{\includegraphics[width=0.38\textwidth]{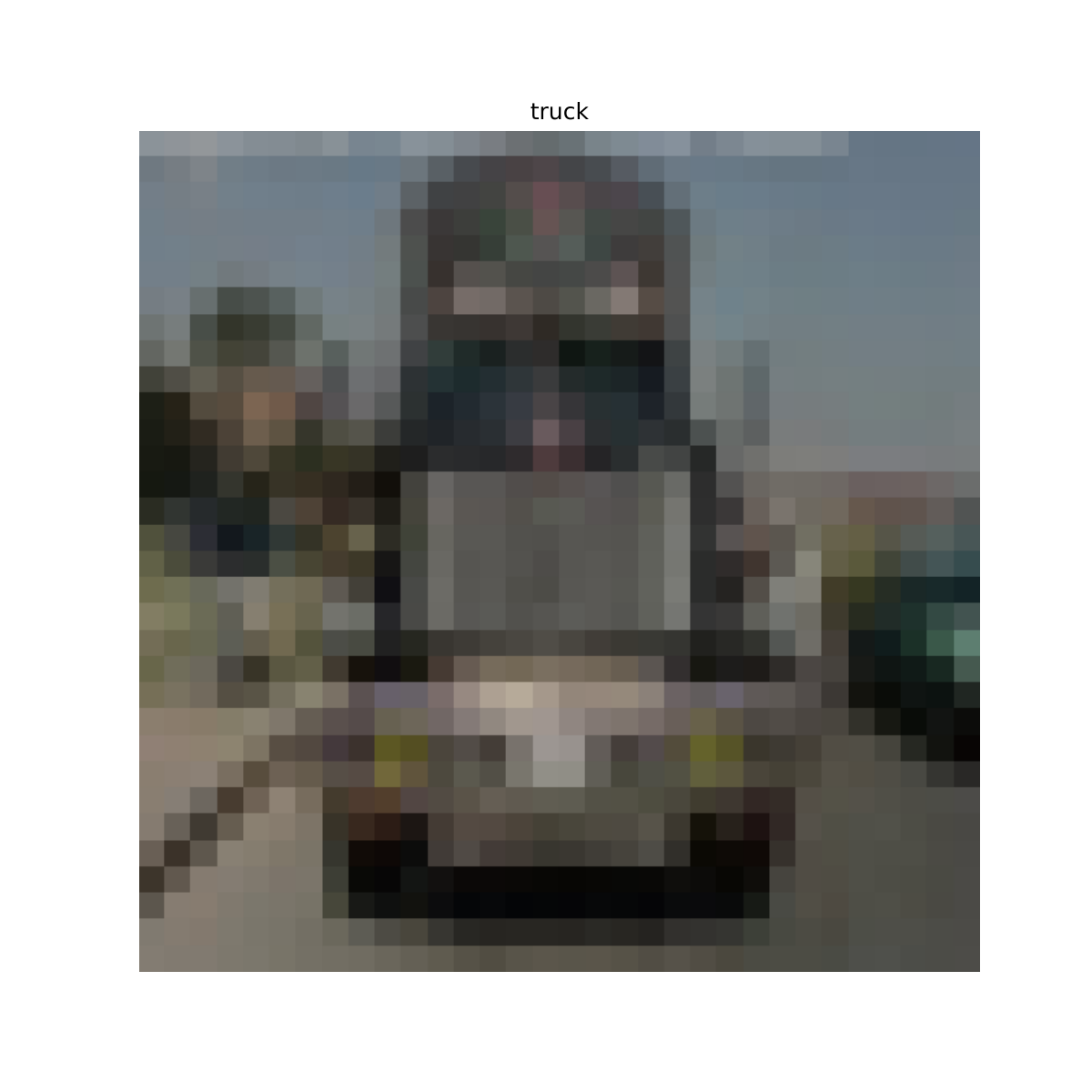}} \\ 
\end{tabular}
\caption{The easiest, medium and hardest points to attack $z_\text{easy}$, $z_\text{med}$ and $z_\text{hard}$ for FMNIST and CIFAR10}\label{fig:canaries}
\end{figure*}

\clearpage
\section{BROADER IMPACT}\label{sec:limit}
The goal of this paper is to advance the field of Trustworthy Machine Learning. Our work theoretically quantifies the privacy leakage due to publicly releasing simple statistics, such as the empirical mean of a dataset. Our results are also generalised to ML models. We believe that this work is a step forward giving the analytical tools to quantify the privacy leakage due to participation or not in a dataset. Thus, this provides more informed decisions to the individual on how much their participation may impact the published statistics/models. We point out that our work only studies theoretically the likelihood ratio attacks, and our theoretical tracing attacks to dust for the presence of target individuals were only used in simulated data and public datasets (FMNIST, CIFAR10) in the experiments. 





\end{document}


%

%

\onecolumn
\aistatstitle{Instructions for Paper Submissions to AISTATS 2025: \\
Supplementary Materials}

\section{FORMATTING INSTRUCTIONS}

To prepare a supplementary pdf file, we ask the authors to use \texttt{aistats2025.sty} as a style file and to follow the same formatting instructions as in the main paper.
The only difference is that the supplementary material must be in a \emph{single-column} format.
You can use \texttt{supplement.tex} in our starter pack as a starting point, or append the supplementary content to the main paper and split the final PDF into two separate files.

Note that reviewers are under no obligation to examine your supplementary material.

\section{MISSING PROOFS}

The supplementary materials may contain detailed proofs of the results that are missing in the main paper.

\subsection{Proof of Lemma 3}

\textit{In this section, we present the detailed proof of Lemma 3 and then [ ... ]}

\section{ADDITIONAL EXPERIMENTS}

If you have additional experimental results, you may include them in the supplementary materials.

\subsection{The Effect of Regularization Parameter}

\textit{Our algorithm depends on the regularization parameter $\lambda$. Figure 1 below illustrates the effect of this parameter on the performance of our algorithm. As we can see, [ ... ]}

\vfill